\documentclass{article}

\usepackage{iclr2026_conference,times}

\iclrfinalcopy

\usepackage{amsmath,amssymb,amsthm,mathtools}
\usepackage{booktabs,multirow,array,tabularx}
\usepackage{graphicx}
\usepackage{microtype}
\usepackage{url}
\usepackage{hyperref}
\hypersetup{hidelinks,hypertexnames=false}
\usepackage{cleveref}
\usepackage{xcolor}
\usepackage{tikz}
\usetikzlibrary{arrows.meta,positioning,fit,calc,shapes.geometric,backgrounds}
\usepackage{enumitem}
\usepackage{algorithm}
\usepackage[noend]{algpseudocode}

\definecolor{methodblue}{RGB}{38,91,154}
\definecolor{targetgreen}{RGB}{52,125,82}
\definecolor{decodeorange}{RGB}{185,105,35}
\definecolor{softgray}{RGB}{242,244,247}

\newcommand{\G}{\mathcal{G}}
\newcommand{\Hh}{\mathcal{H}}
\newcommand{\Kset}{\mathcal{K}}

\newcommand{\Iset}{\mathcal{I}}
\newcommand{\Mset}{\mathcal{M}}
\newcommand{\R}{\mathbb{R}}
\newcommand{\SE}{\mathrm{SE}}
\newcommand{\SO}{\mathrm{SO}}
\newcommand{\stopgrad}{\operatorname{stopgrad}}
\newcommand{\cosdist}{d_{\mathrm{cos}}}
\newcommand{\dhier}{D_{\mathrm{hier}}}
\newcommand{\dspd}{d_{\mathrm{SPD}}}

\newcommand{\Exp}{\operatorname{Exp}}

\newcommand{\method}{\textsc{4DGS-JEPA}}

\newtheorem{proposition}{Proposition}

\title{4DGS-JEPA: Temporally Compositional Joint-Embedding Prediction for Dynamic Gaussian Splatting}
\author{
Yongchao Huang\thanks{Email: \texttt{yongchao.huang@abdn.ac.uk}}\\
22/07/2026
}

\begin{document}
\maketitle

\fancyhead{}
\renewcommand{\headrulewidth}{0pt}

\begin{abstract}
Dynamic Gaussian Splatting provides an explicit representation of evolving 3D scenes, but existing approaches are primarily optimized for reconstruction, future-state generation, or rendering rather than for learning reusable predictive dynamics. We propose 4DGS-JEPA, a Gaussian-native joint-embedding predictive architecture for causal multi-horizon prediction over dynamic Gaussian scenes. The model uses a hierarchical scene-, motion-group-, and Gaussian-level representation together with a horizon-conditioned transition operator that supports both direct prediction and recursive rollout. Its central principle is \emph{temporal composition}: different chronological transition paths reaching the same future endpoint should produce compatible predictive states. Endpoint and multi-horizon path supervision anchor these predictions to future target embeddings, while a selective geometry decoder and geometry-level composition ground the learned dynamics in consistent group motion and Gaussian geometry without requiring complete future appearance reconstruction. We further introduce a hybrid correspondence mechanism that combines persistent canonical identity with residual optimal-transport matching under reordering and topology change. We characterize zero-loss path agreement and finite-error rollout accumulation theoretically. Three controlled experiments provide mechanism-level evidence that temporal composition reduces latent path dependence while retaining predictive accuracy, geometry-level composition improves consistency of decoded motion, and hybrid correspondence preserves reliable identity while remaining robust when correspondence becomes ambiguous. Together, 4DGS-JEPA provides a predictive, temporally compositional formulation of dynamic Gaussian worlds.
\end{abstract}

\section{Introduction}

Three-dimensional Gaussian Splatting (3DGS) has become a powerful explicit scene representation because it combines differentiable optimization, high-quality novel-view synthesis, and real-time rendering \citep{kerbl2023gaussians}. Dynamic extensions, including 4D Gaussian Splatting (4D-GS), model time-dependent deformation and appearance in evolving scenes \citep{wu2024fourDGS}. These methods are highly effective as renderable scene representations, but they are primarily optimized to reconstruct or interpolate observations from a scene sequence. Accurate reconstruction at arbitrary timestamps therefore does not necessarily provide a reusable predictive state whose dynamics can be rolled forward beyond the observations used to construct the representation.

Joint-Embedding Predictive Architectures (JEPAs) offer a complementary principle: predict the representation of a missing or future target rather than reconstructing its raw observation \citep{lecun2022path,assran2023ijepa,bardes2024vjepa}. V-JEPA 2 extends this idea to large-scale video understanding and action-conditioned robotic planning, but its predictive state remains an image- or video-derived latent rather than an explicit 3D scene representation \citep{assran2025vjepa2}. This motivates a direct question: \emph{can a JEPA learn the dynamics of an explicit Gaussian world while retaining the abstraction benefits of latent prediction?}

We answer this question with 4DGS-JEPA, a Gaussian-native JEPA for causal multi-horizon prediction over dynamic Gaussian scenes. Given only a causal scene prefix, the model encodes time-indexed 3D Gaussian primitives into a hierarchical predictive state with scene-, motion-group-, and Gaussian-level tokens. A horizon-conditioned transition operator $\Phi_\phi$ maps this state back into the same predictive state space, allowing both direct prediction and recursive rollout.

The central principle is \emph{temporal composition}. A future state reached directly over a long horizon should agree with the state obtained by composing shorter chronological transitions. In the passive two-step case, this principle takes the form
\[
\Phi_\phi\!\left(Z_t,\delta_1+\delta_2\right)
\approx
\Phi_\phi\!\left(
\Phi_\phi(Z_t,\delta_1),
\delta_2
\right).
\label{eq:intro-composition}
\]
More generally, the model enforces consistency across admissible temporal decompositions while preserving the corresponding chronological action sequence when actions are present. Endpoint supervision anchors direct predictions to future target embeddings, multi-horizon path supervision anchors recurrent predictions along a rollout, and temporal composition explicitly suppresses residual dependence on the path used to reach the same future endpoint.

4DGS-JEPA also connects latent prediction back to explicit Gaussian dynamics without making complete future-scene reconstruction its principal objective. A \emph{selective} geometry decoder predicts only the Gaussian variables needed for geometric evolution, including group-level $\SE(3)$ motion, Gaussian-specific residual displacement, covariance deformation, and, in the full formulation, existence dynamics. Geometry-level composition further requires direct and recursively decoded motions to agree at both the motion-group and Gaussian levels. Because Gaussian representations are sets whose raw array ordering does not define identity, we additionally introduce a hybrid correspondence mechanism. Reliable persistent canonical identities are preserved when available, while residual optimal transport handles ambiguous correspondence caused by reordering or topology change.

This formulation differs from recent Gaussian world models in both its prediction target and its treatment of temporal structure. GWM combines a Gaussian VAE with a latent diffusion transformer to reconstruct action-conditioned future Gaussian scenes \citep{lu2025gwm}. GAF couples motion-aware Gaussian fields with current reconstruction, future-frame prediction, and action inference \citep{chai2025gaf}. MoGaF performs long-horizon 4D Gaussian forecasting using motion-aware grouping \citep{lee2026mogaf}. GaussianDream uses current and future Gaussian decoding as auxiliary supervision for robotic manipulation \citep{zhang2026gaussiandream}, while a Structured 4D Latent Predictive Model predicts future 3D structure in a general latent space that can be decoded to multiple 3D formats \citep{li2026structured4d}. In contrast, 4DGS-JEPA uses joint-embedding prediction rather than complete future-scene generation as its principal learning objective, operates natively on Gaussian scene states, and makes consistency across alternative temporal prediction paths an explicit learning principle.

Our contributions are:
\begin{enumerate}[leftmargin=*,itemsep=1pt]
    \item a Gaussian-native hierarchical JEPA for causal multi-horizon prediction over dynamic Gaussian scenes, with scene-, motion-group-, and Gaussian-level predictive states;
    \item a temporally compositional, horizon-conditioned transition framework combining endpoint supervision, recurrent multi-horizon path supervision, and explicit direct--composed consistency across temporal decompositions;
    \item a geometry-aware selective decoder and geometry-level composition objective that ground latent dynamics in group-level $\SE(3)$ motion and Gaussian-level deformation without requiring complete future appearance reconstruction;
    \item a hybrid Gaussian correspondence mechanism that preserves reliable persistent canonical identities and applies residual optimal transport only when correspondence becomes ambiguous under reordering or topology change;
    \item a zero-loss path-agreement result and a finite-error rollout bound, together with controlled experiments that separately evaluate latent temporal composition, decoded geometry composition, and hybrid correspondence.
\end{enumerate}

\section{Related Work}

\Cref{tab:related} summarizes representative dynamic Gaussian scene representations, Gaussian world models, and joint-embedding predictive models. The key distinction of 4DGS-JEPA is that it combines a Gaussian-native predictive state with joint-embedding supervision and explicit temporal path consistency, without making complete future-scene reconstruction the principal learning objective.

\subsection{Dynamic Gaussian scene representations}

3D Gaussian Splatting (3DGS) represents a static scene using a set of anisotropic Gaussian primitives,
\begin{equation}
\mathcal G
=
\{g_i\}_{i=1}^{N},
\qquad
g_i
=
(\mu_i,\Sigma_i,\alpha_i,c_i),
\end{equation}
where $\mu_i$ and $\Sigma_i$ denote the center and covariance of Gaussian $i$, while $\alpha_i$ and $c_i$ denote its opacity and spherical-harmonic appearance coefficients. These parameters are optimized through differentiable rasterization for high-quality novel-view synthesis \citep{kerbl2023gaussians}. The resulting representation describes a static scene but does not itself specify how that scene evolves over time.

Dynamic Gaussian representations extend this formulation to a time-indexed scene,
\begin{equation*}
\mathcal G_t
=
\{g_{i,t}\}_{i=1}^{N_t},
\end{equation*}
in which Gaussian positions, covariances, opacities, appearances, or existence states may evolve. For example, 4D-GS augments 3DGS with spatiotemporal features and a deformation predictor for reconstruction and rendering at different timestamps \citep{wu2024fourDGS}. Related dynamic neural scene representations, including D-NeRF and HyperNeRF, similarly use canonical-space deformation to model non-rigid motion and topology change \citep{pumarola2021dnerf,park2021hypernerf}.

Many such methods primarily optimize reconstruction or interpolation over an observed sequence: time indexes a scene representation fitted using observations from that sequence. 4DGS-JEPA addresses a different problem. Given only a causal Gaussian scene prefix $\mathcal G_{\leq t}$, it learns a predictive state $Z_t$ and forecasts its future representation,
\begin{equation}
\mathcal G_{\leq t}
\longrightarrow
Z_t
\longrightarrow
\widehat Z_{t+\Delta},
\end{equation}
without requiring reconstruction of every future pixel or Gaussian attribute. Thus, 3DGS represents a static scene, dynamic Gaussian methods represent time-varying scenes, and 4DGS-JEPA focuses on learning reusable dynamics for causal future prediction.

\subsection{Gaussian world models and forecasting}

Recent Gaussian world models move beyond time-conditioned reconstruction towards explicit future prediction. GWM reconstructs action-conditioned future Gaussian states using a 3D VAE and latent diffusion transformer and applies the learned model to imitation learning and model-based reinforcement learning \citep{lu2025gwm}. GAF augments Gaussians with motion attributes and jointly supports current-scene reconstruction, future prediction, and robot-action inference \citep{chai2025gaf}. MoGaF introduces motion-aware Gaussian grouping and forecasts long-horizon dynamic-scene evolution \citep{lee2026mogaf}. GaussianDream trains a feed-forward world-model representation using current and future Gaussian decoding as supervision and discards the decoders at test time for manipulation \citep{zhang2026gaussiandream}.

These approaches demonstrate the value of explicit 3D future prediction, but they rely substantially on explicit future-state generation or reconstruction supervision. 4DGS-JEPA instead makes prediction of future target embeddings its principal learning objective. Selective geometry decoding provides structured geometric grounding without requiring reconstruction of the complete future Gaussian appearance.

The Structured 4D Latent Predictive Model is particularly close in spirit: it forecasts future 3D scene structure in a general latent space conditioned on observations and language and decodes that representation into point clouds or Gaussians for planning \citep{li2026structured4d}. However, its predictive representation is not Gaussian-native and it does not explicitly constrain alternative temporal prediction paths to agree. In contrast, 4DGS-JEPA learns a Gaussian-native predictive representation and explicitly regularizes both latent and decoded dynamics to be consistent across alternative chronological decompositions of the same future evolution.

\subsection{Joint-embedding predictive learning}

JEPA proposes learning representations by predicting compatible target embeddings from context embeddings rather than reconstructing observations \citep{lecun2022path}. I-JEPA instantiates this principle for masked image regions \citep{assran2023ijepa}, while V-JEPA extends feature prediction to video without pixel reconstruction \citep{bardes2024vjepa}. V-JEPA 2 combines large-scale video pretraining with an action-conditioned latent dynamics model for robotic planning \citep{assran2025vjepa2}, and V-JEPA 2.1 further emphasizes dense, spatially grounded predictive features \citep{murlabadia2026vjepa21}.

Variational JEPA (VJEPA) extends deterministic joint-embedding prediction to a predictive distribution over future latent states, enabling uncertainty-aware world modelling without requiring an observation-space likelihood \citep{huang2026vjepa}. 4DGS-JEPA is complementary to this probabilistic direction. Its focus is instead on transferring joint-embedding predictive learning to explicit Gaussian scene states and enforcing temporal composition across direct and recursively composed future predictions.

\begin{table}[t]
\caption{Positioning relative to representative prior work. ``Full reconstruction'' indicates that decoding a complete future appearance or renderable state is central to the training objective. ``Path comp.'' denotes explicit consistency between alternative temporal prediction paths.}
\label{tab:related}
\centering
\small
\setlength{\tabcolsep}{3.1pt}
\begin{tabular}{lccccc}
\toprule
Method & Native state & Future target & JEPA & Full recon. & Path comp. \\
\midrule
4D-GS \citep{wu2024fourDGS}
& Gaussians & time query & no & yes & no \\
GWM \citep{lu2025gwm}
& Gaussians & future Gaussians & no & yes & no \\
GAF \citep{chai2025gaf}
& Gaussians & frame/action & no & yes & no \\
MoGaF \citep{lee2026mogaf}
& Gaussians & future Gaussians & no & yes & no \\
Structured 4D \citep{li2026structured4d}
& 3D latent & future 3D & no & yes & no \\
V-JEPA 2 \citep{assran2025vjepa2}
& video latent & future latent & yes & no & no \\
4DGS-JEPA
& Gaussian-native latent & latent/geometry & yes & no & yes \\
\bottomrule
\end{tabular}
\end{table}

\section{Method}
\label{sec:method}

This section develops 4DGS-JEPA from causal Gaussian observations to the complete learning objective. To learn future dynamics without leaking information from later observations, we first formalize the \textit{causal prediction setting} and construct context and target Gaussian histories under a \textit{forward-only protocol}. To represent scene evolution at multiple spatial scales, we then introduce a \textit{hierarchical student--teacher predictive state} together with a \textit{horizon-conditioned transition operator} that supports both \textit{direct prediction} and \textit{recursive rollout}. To make these predictions accurate and insensitive to how a future state is reached, we define a \textit{hierarchical latent distance} and combine \textit{endpoint supervision}, \textit{recurrent multi-horizon path supervision}, and \textit{temporal composition consistency}. We further ground the learned latent dynamics in explicit 3D motion through a \textit{selective geometry decoder}, and impose \textit{geometry-level composition} and \textit{structural priors} to encourage coherent Gaussian evolution. Finally, we introduce \textit{representation regularization} to prevent degenerate latent solutions and assemble all components into the complete training objective. An overview of the resulting architecture and information flow is shown in \cref{fig:overview}.

\subsection{Problem setup and causal Gaussian states}
\label{sec:problem-setup}

We consider \emph{causal multi-horizon prediction in dynamic Gaussian scenes}. Given Gaussian scene observations available only up to a context endpoint $t$, and optionally a specified future action sequence, the goal is to learn a predictive state from which representations of the scene at future horizons $\Delta>0$ can be predicted. The prediction branch must not access observations after $t$; future observations are used only to construct training targets. The objective is therefore to predict the future \emph{representation} of the Gaussian world rather than reconstruct its complete future appearance.

Let the dynamic scene at time $t$ be represented by
\begin{equation}
\G_t
=
\{g_{i,t}\}_{i=1}^{N_t},
\qquad
g_{i,t}
=
(\mu_{i,t},\Sigma_{i,t},\alpha_{i,t},c_{i,t},f_{i,t},e_{i,t}),
\label{eq:gaussian-state}
\end{equation}
where the center $\mu_{i,t}\in\R^3$ and covariance $\Sigma_{i,t}\in\mathbb{S}_{++}^{3}$ describe the geometry of Gaussian $i$, while $\alpha_{i,t}$ is its opacity, $c_{i,t}$ contains appearance coefficients, $f_{i,t}$ is an optional semantic feature, and $e_{i,t}\in[0,1]$ is its existence variable.

An important requirement is that the Gaussian state available to the predictor be constructed \textit{causally}. For a context endpoint $t$, an online Gaussian front-end $B_{\mathrm{on}}$ may use only observations up to $t$:
\begin{equation}
\G_{1:t}
=
B_{\mathrm{on}}(X_{1:t}).
\label{eq:causal-context-builder}
\end{equation}
Starting from the Gaussian state available at $t$, a target-side tracker is rolled forward only with observations available up to each target endpoint,
\begin{equation}
\G_{t+1:t+\Delta}^{+}
=
B_{\mathrm{on}}^{+}(\G_t,X_{t+1:t+\Delta}),
\label{eq:causal-target-builder}
\end{equation}
and is not allowed to revise any state at or before $t$. A Gaussian representation fitted jointly using the complete sequence would generally violate this protocol because its representation at time $t$ could depend on future observations. The controlled experiments instantiate this causal protocol directly in Gaussian space, where the generated state at each time is available without fitting a reconstruction model; evaluation with a learned online Gaussian front-end is left to future work.

For a context length $L$, define the causal context history
\begin{equation}
\Hh_t
=
(\G_{t-L+1},\ldots,\G_t).
\label{eq:context-history}
\end{equation}
For every target horizon $\Delta$, the target branch receives a history of the same temporal extent,
\begin{equation}
\Hh_{t+\Delta}^{+}
=
(\G_{t+\Delta-L+1}^{+},\ldots,\G_{t+\Delta}^{+}),
\label{eq:target-history}
\end{equation}
where states at or before the context endpoint are inherited unchanged, $\G_k^{+}=\G_k$ for $k\leq t$, while states after $t$ are obtained through forward-only target tracking. The goal is to learn a predictive state that is sufficient for forecasting future scene evolution but need not preserve unpredictable or unnecessary appearance details \citep{huang2026vjepa}.

\subsection{Hierarchical student--teacher predictive state}
\label{sec:hierarchical-state}

4DGS-JEPA uses a student--teacher joint-embedding architecture. The context, or \emph{student}, encoder $E_\theta$ maps the causal history $\Hh_t$ to a predictive state $Z_t$, and the transition model $\Phi_\phi$ predicts its representation at a future horizon $\Delta$. During training, a target, or \emph{teacher}, encoder $E_{\bar\theta}$ receives the corresponding future Gaussian history $\Hh_{t+\Delta}^{+}$ and provides the target representation $Z_{t+\Delta}^{+}$. The predictive problem is summarized as
\begin{equation}
\Hh_t
\xrightarrow{\;E_\theta\;}
Z_t
\xrightarrow{\;\Phi_\phi(\cdot,\Delta)\;}
\widehat Z_{t+\Delta}
\approx
Z_{t+\Delta}^{+}
\xleftarrow{\;E_{\bar\theta}\;}
\Hh_{t+\Delta}^{+}.
\label{eq:problem-overview}
\end{equation}
When actions are available, $\Phi_\phi$ is additionally conditioned on the corresponding action sequence $A_{t:t+\Delta-1}$. Future observations are therefore available only to the target branch during training and are never supplied to the student encoder or transition model.

The student encoder represents the Gaussian world hierarchically,
\begin{equation}
Z_t
=
E_\theta(\Hh_t)
=
\left(
z_t^{S},
\{z_{m,t}^{O}\}_{m=1}^{M_t},
\{z_{i,t}^{G}\}_{i\in\Iset_t}
\right),
\label{eq:hier-state}
\end{equation}
where the superscripts $S$, $O$, and $G$ denote the scene, motion-group, and Gaussian levels, respectively. Here, the scene token $z_t^{S}$ summarizes the global predictive state, the motion-group token $z_{m,t}^{O}$ represents group $m$, and the Gaussian token $z_{i,t}^{G}$ represents primitive $i$. The quantity $M_t$ is the number of motion groups at time $t$, while $\Iset_t\subseteq\{1,\ldots,N_t\}$ indexes the Gaussian primitives represented explicitly by local tokens and may contain all active primitives or a sampled subset for computational efficiency.

The encoder is permutation equivariant over Gaussian primitives. It may use canonical coordinates, motion-group membership, temporal information, and persistent identity embeddings when available. Pooling follows the same hierarchy: Gaussian-level tokens first aggregate into motion-group tokens, which subsequently aggregate into the global scene token. By default, the predictive representation emphasizes geometry, opacity, existence, semantics, and low-order appearance cues. Detailed view-dependent appearance is excluded or independently perturbed so that joint-embedding prediction does not reduce to reproducing exact appearance.

The target encoder $E_{\bar\theta}$ has the same temporal and hierarchical architecture as the student encoder and is updated by exponential moving average (EMA),
\begin{equation}
\bar\theta
\leftarrow
\tau\bar\theta+(1-\tau)\theta,
\label{eq:ema-target}
\end{equation}
producing
\begin{equation}
Z_{t+\Delta}^{+}
=
E_{\bar\theta}(\Hh_{t+\Delta}^{+}),
\label{eq:target-state}
\end{equation}
where $\tau\in[0,1)$ is the EMA momentum and the superscript $+$ denotes target-branch quantities. The teacher is not updated directly by gradient descent and is not required for future prediction at inference time. We use equal context and target history lengths so that the student and teacher representations have access to comparable temporal information; for example, a multi-frame teacher can represent motion information that would be unavailable to a single-frame target encoder.

Having defined the causal hierarchical predictive state $Z_t$, we next introduce the horizon-conditioned transition and the objectives used to make its dynamics accurate, recursively stable, temporally compositional, and geometrically coherent. \Cref{fig:overview} summarizes how these components interact within the complete 4DGS-JEPA architecture.

\begin{figure}[t]
\centering
\resizebox{\textwidth}{!}{%
\begin{tikzpicture}[
    box/.style={
        draw,
        rounded corners=2pt,
        minimum height=8mm,
        align=center,
        inner sep=4pt,
        font=\small
    },
    enc/.style={box, fill=methodblue!10, draw=methodblue},
    target/.style={box, fill=targetgreen!10, draw=targetgreen},
    dec/.style={box, fill=decodeorange!12, draw=decodeorange},
    loss/.style={box, fill=softgray, draw=black!55},
    arr/.style={-Latex, thick},
    darr/.style={-Latex, thick, dashed}
]


\node[box, minimum width=34mm] (hist) at (0,0)
{$\G_{t-L+1:t}$\\causal Gaussian history};

\node[enc, minimum width=38mm] (encoder) at (4.3,0)
{hierarchical\\Gaussian encoder $E_\theta$};

\node[enc, minimum width=42mm] (state) at (9.0,0)
{$Z_t$\\scene + motion groups + Gaussians};

\node[enc, minimum width=34mm] (direct) at (13.8,1.15)
{direct prediction\\
$\widehat Z_{t+\Delta}^{\mathrm{dir}}
=
\Phi_\phi(Z_t,\Delta)$};

\node[enc, minimum width=34mm] (path) at (13.8,-1.15)
{composed prediction\\
$\widetilde Z_{t+\Delta}^{\pi}$};

\node[loss, minimum width=29mm] (comploss) at (18.4,1.15)
{$\mathcal L_{\mathrm{comp}}$\\path agreement};

\node[loss, minimum width=30mm] (latentloss) at (18.4,-1.15)
{$\mathcal L_{\mathrm{end}}+\mathcal L_{\mathrm{path}}$\\target anchoring};


\node[dec, minimum width=46mm] (decoder) at (11.6,-4.0)
{selective geometry decoder\\
group $\SE(3)$ + local shape + existence};

\node[font=\scriptsize, anchor=east]
at ([xshift=-2mm]decoder.west)
{$Z_t,\G_t$};

\node[dec, minimum width=43mm] (futuregeom) at (16.8,-4.0)
{$\widehat{\G}_{t+\Delta}^{\mathrm{dir}},
\ \widetilde{\G}_{t+\Delta}^{\pi}$\\
decoded Gaussian geometry};

\node[loss, minimum width=31mm] (geotarget) at (21.8,-2.9)
{$\mathcal L_{\mathrm{geo}}$\\target accuracy};

\node[loss, minimum width=31mm] (gcomploss) at (21.8,-4.0)
{$\mathcal L_{\mathrm{gcomp}}$\\geometry path agreement};

\node[loss, minimum width=31mm] (structloss) at (21.8,-5.1)
{$\mathcal L_{\mathrm{struct}}$\\structural priors};


\node[box, minimum width=38mm] (targetsrc) at (0.2,-6.0)
{$\G_t,\ X_{t+1:t+\Delta}$\\forward-only target tracker $B_{\mathrm{on}}^{+}$};

\node[box, minimum width=35mm] (future) at (5.0,-6.0)
{$\Hh_{t+\Delta}^{+}$\\future target history};

\node[target, minimum width=36mm] (targetenc) at (9.4,-6.0)
{EMA target encoder\\$E_{\bar\theta}$};

\node[target, minimum width=31mm] (targetstate) at (13.6,-6.0)
{$Z_{t+\Delta}^{+}$\\target embedding};


\draw[arr] (hist) -- (encoder);
\draw[arr] (encoder) -- (state);

\draw[arr] (state.east) -- (direct.west);
\draw[arr] (state.east) -- (path.west);

\draw[arr]
(state.south)
-- ++(0,-17mm)
-| ([xshift=-10mm]decoder.north);

\draw[arr]
(direct.south)
-- ++(0,-10mm)
-| ([xshift=0mm]decoder.north);

\draw[arr]
(path.south)
-- ++(0,-5mm)
-| ([xshift=10mm]decoder.north);

\draw[arr] (decoder) -- (futuregeom);

\draw[arr] (targetsrc) -- (future);
\draw[arr] (future) -- (targetenc);
\draw[arr] (targetenc) -- (targetstate);


\draw[darr]
(direct.east)
-- (comploss.west);

\draw[darr]
(path.east)
-- ++(5mm,0)
|- (comploss.south west);

\draw[darr]
(direct.east)
-- ++(8mm,0)
|- ([yshift=4mm]latentloss.west);

\draw[darr]
(path.east)
-- (latentloss.west);

\draw[darr]
(targetstate.east)
-- ++(4mm,0)
-| ([xshift=9mm]latentloss.south);


\draw[darr]
([yshift=2.5mm]futuregeom.east)
-- ++(7mm,0)
|- (geotarget.west);

\draw[darr]
(futuregeom.east)
-- ++(9mm,0)
|- (gcomploss.west);

\draw[darr]
([yshift=-2.5mm]futuregeom.east)
-- ++(7mm,0)
|- (structloss.west);

\draw[darr]
(future.south)
-- ++(0,-6mm)
node[below right, font=\scriptsize]
{forward-tracked target geometry $\G_{t+\Delta}^{+}$}
-- ++(18.8,0)
|- ([xshift=4mm]geotarget.east)
-- (geotarget.east);


\begin{scope}[on background layer]
\node[
    draw=methodblue!35,
    rounded corners=4pt,
    fit=(state)(direct)(path)(comploss)(latentloss),
    inner sep=5mm,
    fill=methodblue!3
] {};
\end{scope}

\end{tikzpicture}%
}
\caption{Overview of 4DGS-JEPA. A causal Gaussian history is encoded into a hierarchical predictive state with scene-, motion-group-, and Gaussian-level tokens. A shared horizon-conditioned transition produces direct and recursively composed future predictions. The future target history is constructed by forward-only tracking from the causal Gaussian state and subsequent observations, and an EMA target encoder provides the corresponding target embedding. Target embeddings provide endpoint and path supervision, while direct--composed agreement enforces temporal composition. A selective geometry decoder maps the predicted latent dynamics to explicit Gaussian motion, deformation, and existence variables. The decoded states are constrained by target-relative geometry accuracy, geometry-level composition, and source-relative structural priors. Solid arrows denote computational flow and dashed arrows denote supervision or consistency constraints.}
\label{fig:overview}
\end{figure}

\subsection{Horizon-conditioned latent transition}
\label{sec:latent-transition}

Given the hierarchical predictive state $Z_t$, we learn a horizon-conditioned transition operator
\begin{equation}
\widehat Z_{t+\Delta}
=
\Phi_\phi(Z_t,\Delta,A_{t:t+\Delta-1}),
\label{eq:transition}
\end{equation}
where $A_{t:t+\Delta-1}$ is an optional action sequence and is omitted for passive scene forecasting. The transition is \emph{closed} in the predictive state space: it consumes and returns the same hierarchical state type as \cref{eq:hier-state}. Consequently, a predicted state can itself be supplied to $\Phi_\phi$ as the input to a subsequent transition, enabling recursive rollout and temporal composition. The relative horizon $\Delta$ is mapped to a temporal embedding that conditions each transition block, specifying how far forward the operator should propagate the current state. When the dynamics are explicitly non-autonomous, an absolute-time embedding of the transition start time $t$ is additionally provided.

\subsection{Hierarchical latent distance}
\label{sec:hierarchical-distance}

Prediction and composition are evaluated jointly at the three levels of the hierarchical predictive state introduced in \cref{eq:hier-state}: scene, motion-group, and Gaussian. For two such hierarchical latent states,
\begin{equation}
Z
=
\left(
z^{S},
\{z_m^{O}\},
\{z_i^{G}\}
\right),
\qquad
Z'
=
\left(
z'^{S},
\{z_n'^{O}\},
\{z_j'^{G}\}
\right),
\end{equation}
let $\Pi^{O}$ and $\Pi^{G}$ denote group- and Gaussian-level correspondence matrices, respectively. Because Gaussian primitives are not intrinsically index-aligned across time, these matrices may encode persistent correspondences or soft matches; their construction is described in \cref{sec:identity}.

Define the matched masses
\begin{equation}
m_O
=
\sum_{m,n}\Pi^{O}_{mn},
\qquad
m_G
=
\sum_{i,j}\Pi^{G}_{ij}.
\end{equation}
We use the hierarchical latent distance
{\small
\begin{equation}
\dhier(Z,Z')
=
\underbrace{
\lambda_S\cosdist(z^S,z'^S)
}_{\text{scene-level distance}}
+
\underbrace{
\frac{\lambda_O}{m_O+\varepsilon_{\mathrm{match}}}
\sum_{m,n}\Pi^O_{mn}\cosdist(z_m^O,z_n'^O)
}_{\text{group-level distance}}
+
\underbrace{
\frac{\lambda_G}{m_G+\varepsilon_{\mathrm{match}}}
\sum_{i,j}\Pi^G_{ij}\rho\!\left(z_i^G-z_j'^G\right)
}_{\text{Gaussian-level distance}},
\label{eq:hier-distance}
\end{equation}
}
where $\cosdist(u,v)=1-\frac{u^\top v}{\|u\|_2\|v\|_2}$ denotes cosine distance. Here, $m,n$ index motion groups in $Z$ and $Z'$, respectively, while $i,j$ index their Gaussian primitives; the superscripts $S$, $O$, and $G$ denote the scene, motion-group, and Gaussian levels. The matrices $\Pi^O$ and $\Pi^G$ provide the corresponding cross-state matches, $m_O$ and $m_G$ are their matched masses, and $\lambda_S,\lambda_O,\lambda_G\geq0$ weight the three hierarchy levels. The function $\rho$ is a robust Huber penalty, and $\varepsilon_{\mathrm{match}}>0$ prevents division by zero. The three terms respectively capture global scene evolution, coherent group motion, and local Gaussian deformation.

\subsection{Direct multi-horizon supervision}
\label{sec:direct-supervision}

The first learning objective anchors direct predictions to future teacher representations. For a sampled set of target horizons $\Kset_t$, we define
\begin{equation}
\mathcal{L}_{\mathrm{end}}
=
\sum_{\Delta\in\Kset_t}
w_\Delta
\dhier\left(
\widehat Z_{t+\Delta},
\stopgrad(Z_{t+\Delta}^{+})
\right),
\label{eq:endpoint-loss}
\end{equation}
where $\widehat Z_{t+\Delta}$ is given by \cref{eq:transition}, $w_\Delta\geq0$, and
$
\sum_{\Delta\in\Kset_t}w_\Delta=1.
$
For discrete frame horizons, a natural choice is the normalized temporal discount
\begin{equation}
w_\Delta
=
\frac{\gamma^{\Delta-1}}
{\sum_{\delta\in\Kset_t}\gamma^{\delta-1}},
\qquad
\gamma\in(0,1].
\label{eq:horizon-discount}
\end{equation}
Here, $\gamma=1$ weights all sampled horizons equally, whereas $\gamma<1$ progressively emphasizes nearer and typically more predictable future states. The weighting in \cref{eq:horizon-discount} therefore provides a general mechanism for emphasizing different temporal ranges. In the controlled experiments below, however, we use one terminal horizon per training step, cycling uniformly through the supervised horizon set, and do not separately ablate $\gamma$. Unlike pixel- or Gaussian-reconstruction objectives, $\mathcal L_{\mathrm{end}}$ requires the transition to predict only the target \textit{representation} of the future Gaussian world.

\subsection{Multi-horizon path supervision}
\label{sec:path-supervision}

Direct multi-horizon supervision does not guarantee that the learned transition remains accurate when recursively applied to its own predictions. For each terminal horizon $\Delta$, we therefore sample a temporal partition
\begin{equation}
\pi=(\delta_1,\ldots,\delta_J),
\qquad
\delta_j>0,
\qquad
\sum_{j=1}^{J}\delta_j=\Delta.
\end{equation}
Let
\begin{equation}
s_0=0,
\qquad
s_j=\sum_{q=1}^{j}\delta_q,
\qquad
s_J=\Delta
\end{equation}
denote the corresponding cumulative horizons. Starting from
$\widetilde Z_t^{\pi}=Z_t$, the recursively composed path is
\begin{equation}
\widetilde Z_{t+s_j}^{\pi}
=
\Phi_\phi\left(
\widetilde Z_{t+s_{j-1}}^{\pi},
\delta_j,
A_{t+s_{j-1}:t+s_j-1}
\right).
\label{eq:path-rollout}
\end{equation}

We anchor every intermediate composed state to the corresponding future teacher state:
\begin{equation}
\mathcal{L}_{\mathrm{path}}
=
\mathbb{E}_{\Delta,\pi}
\left[
\sum_{j=1}^{J}
\omega_j
\dhier\left(
\widetilde Z_{t+s_j}^{\pi},
\stopgrad(Z_{t+s_j}^{+})
\right)
\right],
\label{eq:path-loss}
\end{equation}
where $\omega_j\geq0$ controls the contribution of the $j$th cumulative horizon and is normalized over the sampled path.

Unlike direct endpoint prediction, the composed path repeatedly applies $\Phi_\phi$ to its own predicted states. Supervising the intermediate states therefore exposes the transition to the state distribution encountered during rollout and reduces the mismatch between direct training and recursive inference.

\subsection{Temporal composition consistency}
\label{sec:temporal-composition}

Endpoint and path supervision anchor direct and recurrent predictions to future teacher states, but they do not explicitly require two valid prediction paths leading to the same future time to agree. \emph{Temporal composition} imposes this additional constraint.

For each cumulative horizon $s_j$, there are two predictions of the same future state. The \emph{direct} prediction reaches $t+s_j$ in a single transition,
\begin{equation}
\widehat Z_{t+s_j}^{\mathrm{dir}}=\Phi_\phi\!\left(Z_t,s_j,A_{t:t+s_j-1}\right),
\label{eq:direct-cumulative-prediction}
\end{equation}
whereas the \emph{composed} prediction $\widetilde Z_{t+s_j}^{\pi}$ is obtained recursively through a sequence of shorter transitions along the temporal partition $\pi=(\delta_1,\ldots,\delta_J)$ according to \cref{eq:path-rollout}. Thus, $\widehat Z_{t+s_j}^{\mathrm{dir}}$ represents one direct transition to the future endpoint, while $\widetilde Z_{t+s_j}^{\pi}$ represents a composition of shorter transitions reaching the same endpoint. Temporal composition requires these two predictions to agree.

The temporal composition loss\footnote{The intuition behind temporal composition consistency is explained in \cref{app:temporal-composition-intuition}.} is
{\small
\begin{equation}
\mathcal{L}_{\mathrm{comp}}
=
\mathbb{E}_{\Delta,\pi}\!\left[\sum_{j=1}^{J}u_j\,\dhier\!\left(\widehat Z_{t+s_j}^{\mathrm{dir}},\stopgrad(\widetilde Z_{t+s_j}^{\pi})\right)\right]
+
\mathbb{E}_{\Delta,\pi}\!\left[\sum_{j=1}^{J}\nu_j\,\dhier\!\left(\widetilde Z_{t+s_j}^{\pi},\stopgrad(\widehat Z_{t+s_j}^{\mathrm{dir}})\right)\right].
\label{eq:composition-loss}
\end{equation}
}
Here, $u_j,\nu_j\geq0$ weight the two optimization directions. In the first term, the composed prediction is stop-gradient and therefore serves as a fixed target for the direct prediction. In the second term, the direct prediction is stop-gradient and serves as a fixed target for the composed prediction. The bidirectional construction therefore encourages agreement while allowing both prediction paths to receive consistency gradients \citep{huang2026bijepa}.

For two consecutive intervals, the objective encourages the controlled composition law
\begin{equation}
\Phi_\phi\!\left(Z_t,\delta_1+\delta_2,A_{t:t+\delta_1+\delta_2-1}\right)
\approx
\Phi_\phi\!\left(\Phi_\phi\!\left(Z_t,\delta_1,A_{t:t+\delta_1-1}\right),\delta_2,A_{t+\delta_1:t+\delta_1+\delta_2-1}\right).
\label{eq:controlled-composition}
\end{equation}
We refer to this property as \emph{temporal composition} rather than an exact one-parameter semigroup property because the transition may depend on control inputs and absolute time and may include discrete state changes such as Gaussian birth or death. The relevant requirement is therefore consistency across admissible decompositions of the same controlled temporal evolution. In passive forecasting, the action arguments are omitted.

The three latent objectives have complementary roles: $\mathcal L_{\mathrm{end}}$ anchors direct predictions to future targets, $\mathcal L_{\mathrm{path}}$ anchors recursively generated states, and $\mathcal L_{\mathrm{comp}}$ reduces dependence on the temporal path used to reach a future state. Composition alone is insufficient because direct and composed paths may agree while both being inaccurate or collapsed.

\paragraph{Theoretical characterization.}

The distinction between target anchoring and path consistency can be stated formally. Because $\dhier$ contains correspondence-weighted terms, equality below is understood on the supervised matched support. We assume that the compared scene- and motion-group-level embeddings are normalized, that the Gaussian-level discrepancy vanishes only at equality, and that the active correspondence plans cover the components whose equality is asserted.

For a fixed cumulative horizon $s_j$ and temporal partition $\pi$, define the corresponding local contributions of \cref{eq:path-loss,eq:composition-loss} as
\begin{equation*}
\ell_{\mathrm{path},j}^{\pi}
=
\omega_j
\dhier\!\left(
\widetilde Z_{t+s_j}^{\pi},
\stopgrad(Z_{t+s_j}^{+})
\right),
\end{equation*}
and
\begin{equation*}
\ell_{\mathrm{comp},j}^{\pi}
=
u_j
\dhier\!\left(
\widehat Z_{t+s_j}^{\mathrm{dir}},
\stopgrad(\widetilde Z_{t+s_j}^{\pi})
\right)
+
\nu_j
\dhier\!\left(
\widetilde Z_{t+s_j}^{\pi},
\stopgrad(\widehat Z_{t+s_j}^{\mathrm{dir}})
\right).
\end{equation*}

\begin{proposition}[Zero-loss path agreement]
\label{prop:path-agreement}
Fix a supervised cumulative horizon $s_j$ and temporal partition $\pi$. Assume $\omega_j>0$ and $u_j+\nu_j>0$. If
\begin{equation}
\ell_{\mathrm{path},j}^{\pi}
=
0,
\qquad
\ell_{\mathrm{comp},j}^{\pi}
=
0,
\label{eq:zero-local-losses}
\end{equation}
then
\begin{equation}
\widehat Z_{t+s_j}^{\mathrm{dir}}
=
\widetilde Z_{t+s_j}^{\pi}
=
Z_{t+s_j}^{+}
\label{eq:zero-loss-path-agreement}
\end{equation}
on the supervised matched support.
\end{proposition}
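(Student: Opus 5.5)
The plan is to reduce the statement to a separation property of $\dhier$ on the supervised matched support, and then chain the two resulting equalities. Stop-gradient only affects gradients, so in the forward evaluation $\stopgrad(X)=X$. Each local contribution is therefore a nonnegative weight times $\dhier$ between two ordinary hierarchical states.

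\textbf{Step 1: nonnegativity and separation of $\dhier$.} First I show that $\dhier(Z,Z')\ge 0$, and that $\dhier(Z,Z')=0$ forces $Z=Z'$ on the matched support. Nonnegativity holds term by term: $\cosdist\in[0,2]$, the Huber penalty satisfies $\rho\ge 0$, $\Pi^O,\Pi^G\ge 0$, the masses are normalized by $m_\bullet+\varepsilon_{\mathrm{match}}>0$, and $\lambda_S,\lambda_O,\lambda_G\ge 0$. Hence a zero total forces every weighted summand to vanish.

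For the scene level with $\lambda_S>0$, $\cosdist(z^S,z'^S)=0$ means the two vectors are positively parallel. Under the stated normalization assumption this gives $z^S=z'^S$. The same argument applies to every group pair with $\Pi^O_{mn}>0$. For Gaussian pairs with $\Pi^G_{ij}>0$, the assumption that $\rho$ vanishes only at $0$ gives $z_i^G=z_j'^G$.

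The phrase ``on the supervised matched support'' is read as exactly this set: the levels with positive $\lambda$ and the component pairs with positive plan mass. By the stated covering assumption, this set includes all components whose equality is asserted.

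\textbf{Step 2: apply to the two losses.} Since $\omega_j>0$, $\ell^\pi_{\mathrm{path},j}=0$ gives $\dhier(\widetilde Z^\pi_{t+s_j},Z^+_{t+s_j})=0$, so $\widetilde Z^\pi_{t+s_j}=Z^+_{t+s_j}$ on the support.

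Since $u_j+\nu_j>0$, at least one of $u_j,\nu_j$ is positive. The composition loss $\ell^\pi_{\mathrm{comp},j}$ is a sum of nonnegative terms, so it vanishes only if each term does. The term with the positive weight then gives either $\dhier(\widehat Z^{\mathrm{dir}},\widetilde Z^\pi)=0$ or $\dhier(\widetilde Z^\pi,\widehat Z^{\mathrm{dir}})=0$. Either one yields $\widehat Z^{\mathrm{dir}}_{t+s_j}=\widetilde Z^\pi_{t+s_j}$ on the support, by Step 1.

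Because $\dhier$ need not be symmetric, I only need separation in whichever argument order appears; Step 1 proves separation for arbitrary order. Chaining the two equalities gives \cref{eq:zero-loss-path-agreement}.

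\textbf{Main obstacle.} The delicate point is the correspondence plans. The plans $\Pi$ used in the path term and in the composition term are generally different: one compares against the target $Z^+$, the other compares direct against composed predictions. Transitivity therefore requires that the matched supports be compatible. Concretely, the components of $\widetilde Z^\pi$ matched to $Z^+$ must also be matched, through the composition plan, to components of $\widehat Z^{\mathrm{dir}}$.

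I will handle this with the stated covering assumption. I take the active plans to cover the same components, namely persistent identities or a common support, so that the composition plan composed with the path plan identifies each asserted component of $\widehat Z^{\mathrm{dir}}$ with a component of $Z^+$. A secondary subtlety is that cosine distance only yields directional equality, which is why the normalization assumption is invoked explicitly at the scene and group levels.
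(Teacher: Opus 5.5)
Your proposal is correct and follows the paper's proof. Zero path loss with $\omega_j>0$ gives composed equals target. Zero composition loss with $u_j+\nu_j>0$, together with nonnegativity and the fact that stop-gradient leaves values unchanged, gives direct equals composed, and chaining the two equalities yields the result. Your explicit separation argument for $\dhier$ and your remark that transitivity requires compatible correspondence plans make precise what the paper leaves implicit in the phrase ``on the supervised matched support''.
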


\Cref{prop:path-agreement} makes explicit why composition alone is insufficient: $\mathcal L_{\mathrm{comp}}=0$ establishes agreement between prediction routes, whereas target anchoring through $\mathcal L_{\mathrm{path}}$ is what connects that agreement to the future target representation. The proof is given in \cref{app:path-agreement-proof}.

Away from the zero-loss limit, recursive prediction can accumulate local transition errors. For a temporal partition $\pi=(\delta_1,\ldots,\delta_J)$, let the transition over segment $j$ be locally $L_j$-Lipschitz and let $\epsilon_j$ bound its local approximation error along a reference future trajectory. If
\[
e_0
=
\|Z_t-Z_0^\star\|
\]
denotes the initial mismatch between the rollout state and that reference trajectory, then
\begin{equation}
\left\|
\widetilde Z_{t+s_J}^{\pi}
-
Z_{t+s_J}^{+}
\right\|
\leq
\left(
\prod_{q=1}^{J}L_q
\right)e_0
+
\sum_{j=1}^{J}
\left(
\prod_{q=j+1}^{J}L_q
\right)
\epsilon_j.
\label{eq:rollout-bound}
\end{equation}
When the rollout and reference trajectory share the same initial predictive state, $e_0=0$. The bound separates local transition error from its amplification through recursive application of $\Phi_\phi$: errors introduced early in a rollout are propagated through the Lipschitz factors of all subsequent transitions. A derivation and precise construction of the reference trajectory are provided in \cref{app:theory}.

Together, \cref{prop:path-agreement,eq:rollout-bound} formalize the complementary roles of the two recurrent objectives. Path supervision controls target-relative errors along realized recurrent trajectories, while temporal composition constrains disagreement between recurrent and direct routes to the same future state. Neither property alone guarantees accurate long-horizon prediction.

\subsection{Geometry-aware selective decoder}
\label{sec:geometry-decoder}

The latent objectives above define the predictive representation without requiring observation reconstruction. Latent agreement alone, however, does not guarantee that a predicted state corresponds to coherent motion in 3D. We therefore introduce a \textit{selective geometry decoder} \footnote{Here, \emph{selective} means selecting which Gaussian attributes are decoded: the decoder predicts the variables needed for geometric dynamics rather than reconstructing the complete future Gaussian state or appearance. It doesn't mean the decoder is \textit{optional}.} that grounds the latent transition in a small set of interpretable Gaussian updates. Importantly, the decoder does not reconstruct the complete future Gaussian appearance; it predicts only the variables required to describe geometric evolution.

For any temporal interval $[a,b]$ with $a<b$, the decoder receives the latent state at the start of the interval, $Z_a$, the predicted latent state at its endpoint, $Z_b$, the source Gaussian scene $\G_a$, and the elapsed time $b-a$. Its role is to translate the latent transition from $a$ to $b$ into explicit motion of the scene's motion groups and Gaussian primitives.

Let $\Iset_a^{\mathrm{dec}}$ denote the decoder-side canonical slots at time $a$, including represented active primitives and any reserved inactive slots eligible for activation. The shared decoder $Q_\psi$ produces
\begin{equation}
\left(
\{\widehat\xi_{m,a\rightarrow b}\}_{m=1}^{M_a},
\{
\widehat r_{i,a\rightarrow b},
\widehat S_{i,a\rightarrow b},
\widehat e_{i,b}
\}_{i\in\Iset_a^{\mathrm{dec}}}
\right)
=
Q_\psi(Z_a,Z_b,\G_a,b-a),
\label{eq:decoder-outputs}
\end{equation}
where $a$ and $b$ denote the start and end times of the decoded interval, $m\in\{1,\ldots,M_a\}$ indexes motion groups present at time $a$, and $i\in\Iset_a^{\mathrm{dec}}$ indexes decoder-side Gaussian slots. The notation $a\rightarrow b$ indicates that a quantity describes the change accumulated over the interval from $a$ to $b$.

At the group level, $\widehat\xi_{m,a\rightarrow b}\in\mathfrak{se}(3)$ is the predicted rigid-motion increment for motion group $m$. At the Gaussian level, $\widehat r_{i,a\rightarrow b}\in\R^3$ is a local non-rigid displacement residual, $\widehat S_{i,a\rightarrow b}\in\mathbb{S}^{3}$ is a symmetric log-deformation controlling covariance change, and $\widehat e_{i,b}\in[0,1]$ is the predicted existence probability of slot $i$ at the endpoint $b$. Thus, the decoder decomposes scene evolution into coherent group motion, local primitive deformation, covariance evolution, and birth/death dynamics.
During forecasting, $Z_b$ is always a predicted endpoint state, either direct or composed; the teacher representation is not supplied to the decoder.

For a Gaussian $i$ assigned to motion group $m(i)$, the predicted group-level motion increment $\widehat\xi_{m,a\rightarrow b}\in\mathfrak{se}(3)$ is converted into a finite rigid transformation through the exponential map\footnote{$\SO(3)$ is the Lie group of proper 3D rotations, i.e., $3\times3$ orthogonal matrices $R$ with $\det(R)=1$. The special Euclidean group $\SE(3)$ augments such a rotation with a translation $d\in\R^3$ and therefore represents a complete rigid-body transformation $(R,d)$. In contrast, $\mathfrak{se}(3)$ is the Lie algebra associated with $\SE(3)$: it is the tangent space of infinitesimal rigid motions around the identity and can be parameterized by rotational and translational increments. The exponential map $\Exp:\mathfrak{se}(3)\rightarrow\SE(3)$ converts such an infinitesimal-coordinate motion representation into a finite rigid transformation.},
{\small
\begin{equation}
\widehat T_{m,a\rightarrow b}
=
\Exp(\widehat\xi_{m,a\rightarrow b})
=
\left(
\widehat R_{m,a\rightarrow b},
\widehat d_{m,a\rightarrow b}
\right)
\in\SE(3),
\label{eq:decoded-group-transform}
\end{equation}
}
where $\mathfrak{se}(3)$ is the Lie algebra of 3D rigid motions, $\widehat R_{m,a\rightarrow b}\in\SO(3)$ is the predicted rotation of group $m$ from time $a$ to $b$, and $\widehat d_{m,a\rightarrow b}\in\R^3$ is its predicted translation. Thus, $\Exp:\mathfrak{se}(3)\rightarrow\SE(3)$ maps the group-level motion increment to a rigid transformation composed of rotation and translation.

At the Gaussian level, the symmetric log-deformation is mapped through the matrix exponential to the local deformation matrix
\begin{equation}
\widehat U_{i,a\rightarrow b}
=
\exp(\widehat S_{i,a\rightarrow b}),
\label{eq:decoded-local-deformation}
\end{equation}
where $\widehat U_{i,a\rightarrow b}\in\mathbb{S}_{++}^{3}$ is a positive-definite local deformation, or stretch, matrix controlling Gaussian-specific changes in shape and scale. Thus, $\widehat T_{m,a\rightarrow b}$ captures the coherent rigid motion shared by a motion group, while $\widehat U_{i,a\rightarrow b}$ and $\widehat r_{i,a\rightarrow b}$ capture local deviations from that group motion. Here, $\Exp(\cdot)$ denotes the rigid-motion exponential map from $\mathfrak{se}(3)$ to $\SE(3)$, whereas $\exp(\cdot)$ in \cref{eq:decoded-local-deformation} denotes the matrix exponential.

The corresponding geometric updates for the center and covariance of Gaussian $i$ are
{\small
\begin{align}
\widehat\mu_{i,b}
&=
\widehat R_{m(i),a\rightarrow b}\mu_{i,a}
+
\widehat d_{m(i),a\rightarrow b}
+
\widehat r_{i,a\rightarrow b},
\label{eq:decoded-center}\\
\widehat\Sigma_{i,b}
&=
\widehat R_{m(i),a\rightarrow b}
\widehat U_{i,a\rightarrow b}
\Sigma_{i,a}
\widehat U_{i,a\rightarrow b}^{\top}
\widehat R_{m(i),a\rightarrow b}^{\top}.
\label{eq:decoded-cov}
\end{align}
}
The center update therefore combines the shared group rotation and translation with a Gaussian-specific residual displacement. The covariance update first applies the local shape/scale deformation and then transports the resulting covariance through the group rotation. Its congruence form preserves positive definiteness while allowing contraction, expansion, and anisotropic deformation.

Together with the predicted existence probability $\widehat e_{i,b}$ and the transported source attributes, these updates instantiate the decoded Gaussian at time $b$,
{\small
\begin{equation}
\widehat g_{i,b}
=
\left(
\widehat\mu_{i,b},
\widehat\Sigma_{i,b},
\alpha_{i,a},
c_{i,a},
f_{i,a},
\widehat e_{i,b}
\right),
\qquad
\widehat\G_b
=
\{\widehat g_{i,b}\}_{i\in\Iset_a^{\mathrm{dec}}},
\label{eq:decoded-gaussian-state}
\end{equation}
}
with inactive slots interpreted according to their predicted existence probabilities. Thus, the decoder outputs are converted into an explicit predicted Gaussian scene $\widehat\G_b$. This scene is subsequently used for target-anchored geometric supervision and geometry-level composition; for a composed rollout, it additionally becomes the source Gaussian state for decoding the next temporal segment.

The full decoding chain can be summarized as
{\small
\begin{equation}
(Z_a,Z_b,\G_a,b-a)
\xrightarrow{\;Q_\psi\;}
\left(
\{\widehat\xi_{m,a\rightarrow b}\},
\{\widehat r_{i,a\rightarrow b},\widehat S_{i,a\rightarrow b},\widehat e_{i,b}\}
\right)
\xrightarrow{\;\text{geometric update}\;}
\widehat\G_b
\xrightarrow{\;\text{losses / next rollout step}\;}
\cdots
\label{eq:decoder-chain}
\end{equation}
}
That is, the decoder $Q_\psi$ first predicts motion and deformation variables, these variables are then converted into an explicit Gaussian state $\widehat\G_b$ through \cref{eq:decoded-center,eq:decoded-cov}, and the resulting state is subsequently used either for geometric supervision or as the source state for the next segment in a composed rollout.

The default decoder does not predict high-frequency appearance. Appearance coefficients $c_{i,a}$ and semantic features $f_{i,a}$ are transported from the source state where defined, while the predicted existence probability $\widehat e_{i,b}$ controls activation and deactivation of canonical slot $i$ at the endpoint $b$. Opacity $\alpha_{i,a}$ is likewise transported for surviving primitives rather than independently reconstructed. A small appearance-residual head may be trained separately for rendering evaluation, but appearance reconstruction is excluded from the default JEPA objective.

For a cumulative horizon $s_j$, direct decoding applies the shared decoder once,
{\small
\begin{equation}
\widehat\G_{t+s_j}^{\mathrm{dir}}
=
Q_\psi^{\G}\!\left(
Z_t,\widehat Z_{t+s_j}^{\mathrm{dir}},\G_t,s_j
\right),
\label{eq:direct-geometry-decoding}
\end{equation}
}
where $Q_\psi^{\G}$ denotes application of the decoder outputs in \cref{eq:decoder-outputs} through the geometric updates in \cref{eq:decoded-center,eq:decoded-cov}. For a composed path, we initialize $\widetilde\G_t^{\pi}=\G_t$ and recursively decode each interval,
{\small
\begin{equation}
\widetilde\G_{t+s_q}^{\pi}
=
Q_\psi^{\G}\!\left(
\widetilde Z_{t+s_{q-1}}^{\pi},
\widetilde Z_{t+s_q}^{\pi},
\widetilde\G_{t+s_{q-1}}^{\pi},
\delta_q
\right),
\qquad q=1,\ldots,j.
\label{eq:composed-geometry-decoding}
\end{equation}
}
Thus, group transformations, local residuals, covariance changes, and existence states are rolled forward along the composed path rather than decoded only at the terminal endpoint.

\paragraph{Target-anchored geometric supervision.}

The selective decoder is trained against the forward-tracked target Gaussian states. The purpose of this supervision is to ask whether the decoded prediction places the correct Gaussian geometry at the future endpoint: predicted primitives should have the correct positions, shapes, and existence states.

Let $\Iset_{\mathrm{pred}}$ denote the predicted decoder slots under comparison and let $\Iset^{+}$ denote the real target primitives, excluding the dustbin. Because predicted and target Gaussians are not necessarily index-aligned, we compare them through the Gaussian transport plan $\Pi^G$, where $\Pi^G_{ij}$ measures the correspondence mass between predicted slot $i$ and target primitive $j$. Given a predicted Gaussian state $\widehat\G$, a target state $\G^{+}$, and their transport plan $\Pi^G$, we define
{\scriptsize
\begin{equation}
\ell_{\mathrm{geo}}\!\left(\widehat\G,\G^{+};\Pi^G\right)
=
\sum_{i\in\Iset_{\mathrm{pred}}}\sum_{j\in\Iset^{+}}
\Pi^G_{ij}
\left[
\lambda_\mu\rho_\delta(\widehat\mu_i-\mu_j^{+})
+
\lambda_\Sigma\dspd(\widehat\Sigma_i,\Sigma_j^{+})
\right]
+
\lambda_e\sum_{i\in\Iset_{\mathrm{pred}}}
\mathrm{BCE}\!\left(\widehat e_i,\overline e_i^{+}\right).
\label{eq:geometry-loss-local}
\end{equation}
}
The loss measures three complementary geometric discrepancies. The center term $\rho_\delta(\widehat\mu_i-\mu_j^{+})$ measures the positional error between predicted Gaussian $i$ and target Gaussian $j$ using the robust Huber penalty\footnote{The Huber penalty is defined as $\rho_\delta(x)=\frac{1}{2}x^2$ for $|x|\leq\delta$ and $\rho_\delta(x)=\delta\left(|x|-\frac{1}{2}\delta\right)$ otherwise, where $\delta>0$ controls the transition from quadratic to linear penalization. For Gaussian-center errors, we apply it to the Euclidean displacement, $\rho_\delta(\widehat\mu_i-\mu_j^{+})\equiv\rho_\delta(\|\widehat\mu_i-\mu_j^{+}\|_2)$.} $\rho_\delta$. The covariance term $\dspd(\widehat\Sigma_i,\Sigma_j^{+})$ measures the discrepancy between their Gaussian covariance ellipsoids, and therefore penalizes errors in shape, scale, and orientation. Both terms are weighted by the correspondence mass $\Pi^G_{ij}$, so strongly matched Gaussian pairs contribute more strongly to the loss. The coefficients $\lambda_\mu,\lambda_\Sigma\geq0$ control the relative strengths of center and covariance supervision.

The final term supervises whether each predicted Gaussian slot should exist at the target endpoint. Here, $\mathrm{BCE}(p,y)=-y\log p-(1-y)\log(1-p)$ denotes binary cross-entropy between the predicted existence probability $\widehat e_i$ and a soft target existence value $\overline e_i^{+}$. We define
{\small
\begin{equation}
\overline e_i^{+}
=
\frac{\sum_{j\in\Iset^{+}}\Pi^G_{ij}}
{\sum_{j\in\Iset^{+}\cup\{\varnothing\}}\Pi^G_{ij}+\varepsilon_{\mathrm{match}}},
\label{eq:target-existence}
\end{equation}
}
where $\varnothing$ denotes the target dustbin and $\varepsilon_{\mathrm{match}}>0$ prevents division by zero. The numerator is the transport mass from predicted slot $i$ assigned to real target geometry, while the denominator additionally includes mass assigned to the dustbin. Consequently, $\overline e_i^{+}\approx1$ indicates that slot $i$ should remain or become active, whereas $\overline e_i^{+}\approx0$ indicates that it should disappear or remain inactive. Intermediate values naturally arise from soft or ambiguous correspondences. Transport to the dustbin therefore supervises disappearance or death, whereas transport from a reserved inactive slot to a real target primitive supervises birth and the geometry of the newly activated slot.

The target-anchored geometry objective applies this comparison to both prediction routes at every cumulative endpoint:
{\scriptsize
\begin{equation}
\mathcal{L}_{\mathrm{geo}}
=
\mathbb{E}_{\Delta,\pi}
\left[
\sum_{j=1}^{J}
\eta_j
\left(
\ell_{\mathrm{geo}}\!\left(
\widehat\G_{t+s_j}^{\mathrm{dir}},
\G_{t+s_j}^{+};
\Pi_{j,\mathrm{dir}}^G
\right)
+
\ell_{\mathrm{geo}}\!\left(
\widetilde\G_{t+s_j}^{\pi},
\G_{t+s_j}^{+};
\Pi_{j,\pi}^G
\right)
\right)
\right].
\label{eq:geometry-loss}
\end{equation}
}
Thus, at each cumulative horizon $s_j$, the direct prediction $\widehat\G_{t+s_j}^{\mathrm{dir}}$ and the composed prediction $\widetilde\G_{t+s_j}^{\pi}$ are each compared independently with the same forward-tracked target geometry $\G_{t+s_j}^{+}$. The corresponding transport plans $\Pi_{j,\mathrm{dir}}^G$ and $\Pi_{j,\pi}^G$ account for potentially different Gaussian correspondences along the two prediction routes. The weights $\eta_j\geq0$ control the contribution of each cumulative endpoint and are normalized over the sampled path, while the expectation averages over sampled terminal horizons $\Delta$ and temporal partitions $\pi$.

Importantly, $\mathcal L_{\mathrm{geo}}$ measures \emph{target accuracy}: it asks whether each prediction route produces the correct future geometry. It does not directly compare the direct and composed routes with one another; their mutual geometric agreement is enforced separately by the geometry-level composition objective in \cref{sec:geometry-regularization}. We use the affine-invariant SPD distance for $\dspd$ in the full implementation and a stable log-Cholesky approximation for large batches \footnote{Here, SPD denotes symmetric positive definite. Since each Gaussian covariance $\Sigma\in\mathbb S_{++}^{3}$ is SPD, the affine-invariant distance $d_{\mathrm{SPD}}(\Sigma_1,\Sigma_2)=\|\log(\Sigma_1^{-1/2}\Sigma_2\Sigma_1^{-1/2})\|_F$ measures differences in covariance shape, scale, anisotropy, and orientation while respecting the geometry of the SPD manifold. For computationally large batches, we use a cheaper log-Cholesky surrogate: writing $\Sigma=LL^\top$ with $L$ lower triangular and positive diagonal, the covariance is represented by the logarithms of the diagonal entries of $L$ together with its unconstrained off-diagonal entries, and distances are computed in these coordinates.}. In this way, the latent objectives determine what future information should be predicted, while the selective decoder encourages those predictions to correspond to coherent evolution of explicit Gaussian geometry.

\subsection{Geometry-level composition and structural regularization}
\label{sec:geometry-regularization}

The latent composition loss $\mathcal L_{\mathrm{comp}}$ in \cref{sec:temporal-composition} requires direct and composed \emph{latent} predictions to agree. This does not by itself guarantee that decoding the two latent paths yields the same explicit 3D motion. We therefore impose an analogous consistency constraint in Gaussian geometry. Whereas the target-anchored geometry loss $\mathcal L_{\mathrm{geo}}$ asks whether each decoded path agrees with the future target geometry, the geometry-level composition loss introduced below asks whether the \emph{direct and composed decoded motions agree with each other}.

Recall from \cref{eq:decoded-group-transform} that $\widehat T_{m,a\rightarrow b}\in\SE(3)$ denotes the predicted rigid transformation of motion group $m$ over an interval $[a,b]$. For a terminal horizon $\Delta$ and temporal partition $\pi=(\delta_1,\ldots,\delta_J)$ with cumulative horizons $s_J=\Delta$, let $\widehat T_{m,t\rightarrow t+\Delta}^{\mathrm{dir}}$ denote the transformation obtained by decoding the direct prediction from $t$ to $t+\Delta$. The corresponding composed transformation is obtained by multiplying the shorter segment transformations in chronological order,\footnote{The decoder represents each segment motion by an increment $\widehat\xi\in\mathfrak{se}(3)$ and maps it through $\Exp$ to a finite rigid transformation $\widehat T=\Exp(\widehat\xi)\in\SE(3)$, as in \cref{eq:decoded-group-transform}. Finite rigid transformations compose through the group operation of $\SE(3)$, i.e., matrix multiplication, so successive segment motions are combined as $\widehat T_J\cdots\widehat T_1$, with the rightmost transformation applied first. In Lie-algebra coordinates, this composition is described by the Baker--Campbell--Hausdorff relation and reduces approximately to addition of the motion increments only for sufficiently small or commuting motions.}
{\small
\begin{equation}
\widehat T_{m,t\rightarrow t+\Delta}^{\pi}
=
\widehat T_{m,t+s_{J-1}\rightarrow t+s_J}^{\pi}
\cdots
\widehat T_{m,t+s_1\rightarrow t+s_2}^{\pi}
\widehat T_{m,t\rightarrow t+s_1}^{\pi}.
\label{eq:composed-group-transform}
\end{equation}
}
The rightmost transformation is applied first. Thus, $\widehat T_{m,t\rightarrow t+\Delta}^{\mathrm{dir}}$ represents one decoded rigid motion over the full horizon, whereas $\widehat T_{m,t\rightarrow t+\Delta}^{\pi}$ represents the same-horizon motion obtained by composing the shorter decoded transformations along path $\pi$.

To compare two rigid transformations, we use their relative transformation. For $T_1,T_2\in\SE(3)$, define
{\small
\begin{equation}
d_{\SE(3)}(T_1,T_2)
=
\left\|
\operatorname{Log}\!\left(T_1^{-1}T_2\right)
\right\|_2,
\label{eq:se3-distance}
\end{equation}
}
where $\operatorname{Log}:\SE(3)\rightarrow\mathfrak{se}(3)$ is the Lie-group logarithm. The relative transform $T_1^{-1}T_2$ describes the residual rigid motion required to move from $T_1$ to $T_2$, and $d_{\SE(3)}(T_1,T_2)=0$ when the transformations coincide. Thus, this distance measures disagreement in group-level rotation and translation.

We additionally compare the Gaussian-level geometry produced by the two paths. Let $\Pi^{G,\mathrm{dc}}$ denote a direct--composed Gaussian correspondence plan, where $\Pi_{ij}^{G,\mathrm{dc}}$ measures the correspondence mass between Gaussian $i$ in the direct prediction and Gaussian $j$ in the composed prediction. Analogously to the target-anchored geometry loss in \cref{eq:geometry-loss-local}, we define
{\scriptsize
\begin{equation}
D_{\mathrm{match}}\!\left(\widehat\G^{\mathrm{dir}},\widetilde\G^{\pi};\Pi^{G,\mathrm{dc}}\right)
=
\sum_{i,j}\Pi_{ij}^{G,\mathrm{dc}}
\left[
\lambda_\mu\rho_\delta\!\left(\widehat\mu_i^{\mathrm{dir}}-\widetilde\mu_j^{\pi}\right)
+
\lambda_\Sigma\dspd\!\left(\widehat\Sigma_i^{\mathrm{dir}},\widetilde\Sigma_j^{\pi}\right)
+
\lambda_e\left(\widehat e_i^{\mathrm{dir}}-\widetilde e_j^{\pi}\right)^2
\right].
\label{eq:direct-composed-geometry-distance}
\end{equation}
}
Here, $\widehat\mu_i^{\mathrm{dir}}$, $\widehat\Sigma_i^{\mathrm{dir}}$, and $\widehat e_i^{\mathrm{dir}}$ denote the center, covariance, and existence probability produced by direct decoding, while $\widetilde\mu_j^{\pi}$, $\widetilde\Sigma_j^{\pi}$, and $\widetilde e_j^{\pi}$ denote their composed-path counterparts. The three terms therefore measure disagreement in position, covariance geometry, and existence, respectively. Unlike $\ell_{\mathrm{geo}}$ in \cref{eq:geometry-loss-local}, which compares a prediction with a future target and uses binary cross-entropy for target existence supervision, $D_{\mathrm{match}}$ compares two predictions and therefore uses a symmetric squared difference for their existence probabilities.

Let $\Mset_{\Delta,\pi}$ denote the set of motion groups for which valid direct--composed correspondence is available at horizon $\Delta$, and let $\Pi_{\Delta,\pi}^{G,\mathrm{dc}}$ denote the corresponding Gaussian-level direct--composed transport plan. We define the geometry-level composition loss
{\scriptsize
\begin{equation}
\mathcal{L}_{\mathrm{gcomp}}
=
\mathbb{E}_{\Delta,\pi}
\left[
\sum_{m\in\Mset_{\Delta,\pi}}
d_{\SE(3)}\!\left(
\widehat T_{m,t\rightarrow t+\Delta}^{\mathrm{dir}},
\widehat T_{m,t\rightarrow t+\Delta}^{\pi}
\right)
+
\lambda_{\mathrm{gc}}
D_{\mathrm{match}}\!\left(
\widehat\G_{t+\Delta}^{\mathrm{dir}},
\widetilde\G_{t+\Delta}^{\pi};
\Pi_{\Delta,\pi}^{G,\mathrm{dc}}
\right)
\right].
\label{eq:geometry-composition-loss}
\end{equation}
}
The first term compares the \emph{motion-group-level transformations}: it penalizes disagreement between the rigid motion decoded directly over the full horizon and the motion obtained by composing shorter decoded transformations. The second term compares the resulting \emph{Gaussian-level scene geometry}: it penalizes disagreement in matched centers, covariances, and existence probabilities. The coefficient $\lambda_{\mathrm{gc}}\geq0$ controls the relative strength of this Gaussian-level consistency term.

Thus, $\mathcal L_{\mathrm{gcomp}}$ in \cref{eq:geometry-composition-loss} is the geometric counterpart of the latent composition loss $\mathcal L_{\mathrm{comp}}$ in \cref{eq:composition-loss}: $\mathcal L_{\mathrm{comp}}$ aligns direct and composed paths in predictive latent space, whereas $\mathcal L_{\mathrm{gcomp}}$ encourages their decoded motion representations and Gaussian geometry to be path-consistent. This should be distinguished from the target-anchored geometry loss $\mathcal L_{\mathrm{geo}}$ in \cref{eq:geometry-loss}, which pulls each prediction route independently towards the future target geometry.

In the idealized zero-error case, if both the direct and composed decoded states exactly match the same target geometry under unambiguous correspondence, their Gaussian-level endpoint geometries necessarily agree, and the corresponding $D_{\mathrm{match}}$ term in \cref{eq:direct-composed-geometry-distance} becomes redundant. Its role is therefore not to provide an additional notion of target correctness, but to regularize the finite-error regime encountered in practice, where imperfect prediction, soft correspondence, noisy target tracking, and finite model capacity can leave two individually accurate paths with different residual errors\footnote{For example, consider a single Gaussian center with target $\mu^{+}=0$ in one dimension. A direct prediction $\widehat\mu^{\mathrm{dir}}=+\varepsilon$ and a composed prediction $\widetilde\mu^{\pi}=-\varepsilon$ have the same target error, $|\widehat\mu^{\mathrm{dir}}-\mu^{+}|=|\widetilde\mu^{\pi}-\mu^{+}|=\varepsilon$, yet disagree with each other by $|\widehat\mu^{\mathrm{dir}}-\widetilde\mu^{\pi}|=2\varepsilon$. Thus, small and even equal target-relative errors do not imply path agreement away from the zero-error limit.}. Direct--composed supervision explicitly suppresses this path-dependent discrepancy.

Moreover, agreement in endpoint Gaussian geometry does not necessarily imply agreement in the decoder's structured motion decomposition. Because the same endpoint geometry may be produced by different combinations of group-level $\SE(3)$ motion and Gaussian-specific residual deformation, the transformation distance $d_{\SE(3)}$ in \cref{eq:se3-distance}, as used in \cref{eq:geometry-composition-loss}, additionally encourages the underlying group transformations themselves to compose consistently across temporal paths. The target-anchored losses $\mathcal L_{\mathrm{end}}$, $\mathcal L_{\mathrm{path}}$, and $\mathcal L_{\mathrm{geo}}$ in \cref{eq:endpoint-loss,eq:path-loss,eq:geometry-loss}, respectively, therefore establish predictive accuracy, while $\mathcal L_{\mathrm{comp}}$ and $\mathcal L_{\mathrm{gcomp}}$ in \cref{eq:composition-loss,eq:geometry-composition-loss} act as complementary latent- and geometry-level path-consistency regularizers.

\paragraph{Structural dynamics priors.}

The selective decoder is additionally regularized by soft structural priors that encode simple properties expected of many dynamic scenes. Unlike the target-anchored geometry loss $\mathcal L_{\mathrm{geo}}$ in \cref{eq:geometry-loss}, these terms do not compare a prediction with future target geometry. Instead, \textit{they constrain how the decoded geometry is allowed to evolve relative to the causal source state}. They are evaluated for both direct decoded intervals $[t,t+s_j]$ and composed segments $[t+s_{j-1},t+s_j]$, and are then averaged over sampled horizons and paths.

For a decoded interval $[a,b]$, let $\Mset_{\mathrm{rigid},a}$ denote motion groups identified from the causal prefix as approximately rigid, and let $\mathcal E_{m,a}$ contain selected neighboring pairs of Gaussian primitives within rigid group $m$. A rigid group should preserve the distances between its constituent Gaussians, so we define
{\small
\begin{equation}
\ell_{\mathrm{rigid}}(a,b)
=
\sum_{m\in\Mset_{\mathrm{rigid},a}}
\sum_{(i,j)\in\mathcal E_{m,a}}
\left(
\|\widehat\mu_{i,b}-\widehat\mu_{j,b}\|_2
-
\|\mu_{i,a}-\mu_{j,a}\|_2
\right)^2.
\label{eq:rigid-loss}
\end{equation}
}
Here, $\mu_{i,a}$ and $\mu_{j,a}$ are the source centers, while $\widehat\mu_{i,b}$ and $\widehat\mu_{j,b}$ are the decoded centers obtained through the geometric center update by \cref{eq:decoded-center}. The rigidity loss $\ell_{\mathrm{rigid}}$ in \cref{eq:rigid-loss} therefore compares each pairwise distance \emph{before and after prediction}; it is small when the group moves without artificial stretching or compression. This prior complements the group-level $\SE(3)$ transformation defined in \cref{eq:decoded-group-transform} by encouraging \textit{approximately rigid groups to remain geometrically coherent after the Gaussian-specific residual displacement} in \cref{eq:decoded-center} is added.

The decoder also permits Gaussian-specific non-rigid residuals $\widehat r_{i,a\rightarrow b}$ through the decoder outputs in \cref{eq:decoder-outputs} and the center update in \cref{eq:decoded-center}. To prevent neighboring primitives from receiving unnecessarily irregular residual motions, let $\mathcal E_a$ denote a set of compatible neighboring Gaussian pairs at time $a$. We define
{\small
\begin{equation}
\ell_{\mathrm{smooth}}(a,b)
=
\sum_{(i,j)\in\mathcal E_a}
\kappa_{ij}
\left\|
\widehat r_{i,a\rightarrow b}
-
\widehat r_{j,a\rightarrow b}
\right\|_2^2,
\label{eq:smooth-loss}
\end{equation}
}
where $\kappa_{ij}\geq0$ is a compatibility weight derived from geometric proximity, semantic similarity, or estimated motion similarity. Hence, the smoothness loss $\ell_{\mathrm{smooth}}$ in \cref{eq:smooth-loss} encourages strongly compatible neighboring Gaussians to have similar local residual displacements, while allowing weakly related neighbors to deform more independently. Importantly, \cref{eq:smooth-loss} regularizes only the \emph{non-rigid residual} $\widehat r_{i,a\rightarrow b}$ rather than the complete motion, so the coherent group-level rigid transformation in \cref{eq:decoded-group-transform} is not penalized.

Finally, let $\Iset_{\mathrm{static},a}$ denote Gaussian primitives identified from the causal prefix as belonging to approximately static regions. Their predicted centers, obtained from \cref{eq:decoded-center}, should remain close to their source locations, giving
{\small
\begin{equation}
\ell_{\mathrm{static}}(a,b)
=
\sum_{i\in\Iset_{\mathrm{static},a}}
\left\|
\widehat\mu_{i,b}-\mu_{i,a}
\right\|_2^2.
\label{eq:static-loss}
\end{equation}
}
Thus, the static-region loss $\ell_{\mathrm{static}}$ in \cref{eq:static-loss} directly penalizes drift of geometry that the available causal evidence indicates should remain stationary.

Let $\mathcal L_{\mathrm{rigid}}$, $\mathcal L_{\mathrm{smooth}}$, and $\mathcal L_{\mathrm{static}}$ denote the corresponding averages of \cref{eq:rigid-loss,eq:smooth-loss,eq:static-loss}, respectively, over the sampled direct intervals and composed segments. Their weighted combination is
{\small
\begin{equation}
\mathcal L_{\mathrm{struct}}
=
\lambda_{\mathrm{rigid}}\mathcal L_{\mathrm{rigid}}
+
\lambda_{\mathrm{smooth}}\mathcal L_{\mathrm{smooth}}
+
\lambda_{\mathrm{static}}\mathcal L_{\mathrm{static}},
\label{eq:structural-objective}
\end{equation}
}
where $\lambda_{\mathrm{rigid}},\lambda_{\mathrm{smooth}},\lambda_{\mathrm{static}}\geq0$ control the strengths of rigidity preservation, local residual smoothness, and static-region stability, respectively. In summary, $\mathcal L_{\mathrm{rigid}}$ derived from \cref{eq:rigid-loss} discourages deformation within approximately rigid groups, $\mathcal L_{\mathrm{smooth}}$ derived from \cref{eq:smooth-loss} discourages spatially irregular non-rigid residual motion, and $\mathcal L_{\mathrm{static}}$ derived from \cref{eq:static-loss} discourages drift of approximately stationary geometry. These three terms are combined into the structural regularizer $\mathcal L_{\mathrm{struct}}$ in \cref{eq:structural-objective}.

These terms are soft priors rather than universal physical constraints. The sets $\Mset_{\mathrm{rigid},a}$ and $\Iset_{\mathrm{static},a}$, as well as the compatibility weights $\kappa_{ij}$ appearing in \cref{eq:rigid-loss,eq:static-loss,eq:smooth-loss}, are determined only from information available in the causal prefix. Rigidity and static penalties in \cref{eq:rigid-loss,eq:static-loss} are weakened or disabled when prefix-derived evidence indicates non-rigid deformation, independent motion, or topology change. Thus, the structural regularizer in \cref{eq:structural-objective} constrains decoded dynamics without introducing future information into the prediction branch.

\subsection{Representation regularization and full objective}
\label{sec:full-objective}

An EMA target does not by itself preclude all degenerate representation solutions. We therefore apply variance and covariance regularization to scene-, motion-group-, and Gaussian-level student embeddings, following the general principle of VICReg \citep{bardes2022vicreg}. We write
\begin{equation}
\mathcal{L}_{\mathrm{rep}}
=
\mathcal{L}_{\mathrm{var}}
+
\lambda_{\mathrm{cov}}
\mathcal{L}_{\mathrm{cov}},
\label{eq:representation-regularization}
\end{equation}
where $\mathcal L_{\mathrm{var}}$ prevents representation collapse by encouraging each embedding dimension to maintain non-trivial variation across the training batch, while $\mathcal L_{\mathrm{cov}}$ discourages different embedding dimensions from carrying redundant information by penalizing off-diagonal covariance. The coefficient $\lambda_{\mathrm{cov}}\geq0$ controls the relative strength of the covariance penalty. These regularizers are applied separately at the scene, motion-group, and Gaussian levels of the hierarchical state in \cref{eq:hier-state}. Unlike the predictive losses above, $\mathcal L_{\mathrm{rep}}$ does not specify what future state should be predicted; rather, it constrains the representation itself so that predictive agreement cannot be achieved through a collapsed or highly redundant latent space. The precise hierarchy-level definitions are given in \cref{app:regularization}.

The \textit{complete training objective} is
\begin{align}
\mathcal{L}
&=
\mathcal{L}_{\mathrm{end}}
+
\lambda_{\mathrm{path}}
\mathcal{L}_{\mathrm{path}}
+
\lambda_{\mathrm{comp}}
\mathcal{L}_{\mathrm{comp}}
+
\lambda_{\mathrm{geo}}
\mathcal{L}_{\mathrm{geo}}
\nonumber\\
&\quad+
\lambda_{\mathrm{gcomp}}
\mathcal{L}_{\mathrm{gcomp}}
+
\mathcal{L}_{\mathrm{struct}}
+
\lambda_{\mathrm{rep}}
\mathcal{L}_{\mathrm{rep}},
\label{eq:full-objective}
\end{align}
where all loss weights are non-negative. The default model contains no pixel-rendering loss and no objective requiring reconstruction of complete future appearance. A rendering-only head may be trained separately for evaluation to measure how much future visual information remains recoverable from the learned predictive state.

\paragraph{Optimization.}

The generic training procedure is summarized in Algorithm~\ref{alg:training}. The controlled experiments in \cref{sec:experiments} instantiate the corresponding ablation objectives and optimize them directly from initialization rather than using a staged training curriculum. Exact model, optimization, regularization, checkpoint-selection, and experiment-specific objective settings are provided in \cref{app:training-details}.

\section{Gaussian Identity and Hybrid Correspondence}
\label{sec:identity}

The losses introduced above require correspondences between Gaussian primitives or motion groups. In particular, the hierarchical latent distance $\dhier$ in \cref{eq:hier-distance} uses the group- and Gaussian-level correspondence matrices $\Pi^O$ and $\Pi^G$, the target-anchored geometry loss $\mathcal L_{\mathrm{geo}}$ in \cref{eq:geometry-loss-local} uses $\Pi^G$ to compare predicted and target primitives, and the direct--composed geometry discrepancy $D_{\mathrm{match}}$ in \cref{eq:direct-composed-geometry-distance} uses an analogous direct--composed plan $\Pi^{G,\mathrm{dc}}$. These correspondences cannot in general be obtained by simply matching array indices.

\paragraph{Why Gaussian correspondence is non-trivial.}

Recall from \cref{eq:gaussian-state} that a Gaussian scene is represented as a set of primitives. Its raw storage order does not define canonical identity or correspondence. For example, the same three persistent primitives may be stored as
\begin{equation}
[A,B,C]
\qquad\longrightarrow\qquad
[C,A,B],
\label{eq:identity-permutation-example}
\end{equation}
without any change in the represented scene. Naive index matching would nevertheless compare $A$ with $C$, $B$ with $A$, and $C$ with $B$. We therefore distinguish a primitive's \emph{raw array position} from its \emph{persistent canonical identity}:
\begin{equation}
\text{raw array position}
\;\neq\;
\text{persistent canonical identity}.
\label{eq:index-vs-identity}
\end{equation}

This distinction directly affects predictive learning. If two equivalent Gaussian states use different primitive orderings, an index-wise latent or geometry loss may penalize an otherwise correct prediction. More generally, a Gaussian set is not uniquely parameterized: densification may split one primitive into several, pruning may remove primitives, and independent reconstructions may reorder, merge, or relocate primitives while representing nearly the same scene.

The correspondence problem becomes still more general under topology-changing evolution. A primitive may split, several primitives may merge, a primitive may disappear, or newly represented geometry may appear:
\begin{equation*}
\begin{aligned}
A &\longrightarrow \{A_1,A_2\},
&&\text{one-to-many split},
\\
\{A,B\} &\longrightarrow C,
&&\text{many-to-one merge},
\\
A &\longrightarrow \varnothing,
&&\text{death / disappearance},
\\
\varnothing &\longrightarrow F,
&&\text{birth / appearance}.
\end{aligned}
\end{equation*}
Correspondence is therefore not always one-to-one: it may be one-to-many, many-to-one, one-to-zero, or zero-to-one. A raw index-wise comparison between arbitrary dynamic Gaussian states is consequently not well defined.

We address this with a \emph{hybrid correspondence mechanism}. Persistent canonical identities are used whenever they remain valid, motion-group correspondence provides a more stable intermediate constraint, and optimal transport (OT) handles residual ambiguous or topology-changing geometry. The design principle is
\begin{equation}
\begin{aligned}
\text{reliable persistent canonical identity}
&\;\Rightarrow\;
\text{fixed correspondence},
\\
\text{uncertain or invalid canonical identity}
&\;\Rightarrow\;
\text{residual soft matching}.
\end{aligned}
\label{eq:hybrid-identity-principle}
\end{equation}

\paragraph{Prefix-anchored canonical bank.}

Our primary representation uses a prefix-anchored canonical Gaussian bank
\begin{equation}
\G_t^{\mathrm{can}}
=
\{g_{i,t}^{\mathrm{can}}\}_{i=1}^{N},
\label{eq:canonical-bank}
\end{equation}
constructed using only observations available up to the context endpoint $t$. Here, $i\in\{1,\ldots,N\}$ denotes a \emph{persistent canonical slot identifier}, and $N$ is the fixed slot capacity of the bank, including both active primitives and reserved inactive slots. This differs from both the raw array position discussed in \cref{eq:index-vs-identity} and $N_t$ in \cref{eq:gaussian-state}, which denotes the number of active scene primitives at time $t$. Persistent canonical identity therefore means that a slot identifier is deliberately preserved across time while its correspondence remains valid; it does not require the Gaussian geometry associated with that slot to remain fixed. When this canonical-bank representation is used, the active Gaussian scene $\G_t$ in \cref{eq:gaussian-state} is the active subset of $\G_t^{\mathrm{can}}$, while the remaining slots are reserved inactive slots available for future activation.

For target construction, the canonical bank $\G_t^{\mathrm{can}}$ is cloned at the context endpoint $t$ and tracked forward using only observations available up to each future target time, following the causal protocol in \cref{eq:causal-target-builder}. Its parameters at or before $t$ are never retroactively revised using future observations. Within each causal prefix, any densification or refinement used to construct the canonical bank must therefore be completed before target tracking; future observations are not allowed to alter the bank or its states at or before $t$. Ordinary future evolution is then represented through deformation and existence of persistent slots rather than uncontrolled changes in indexing.

Newly represented geometry may be handled by a reserved pool of inactive canonical slots, consistent with the decoder-side set $\Iset_a^{\mathrm{dec}}$ introduced in \cref{eq:decoder-outputs}. Reserved slots may carry motion-group or spatial anchors that provide an initial prior for future activation. When activated, the predicted local residual $\widehat r_{i,a\rightarrow b}$, deformation $\widehat U_{i,a\rightarrow b}$, and existence probability $\widehat e_{i,b}$ determine the slot's future geometry through \cref{eq:decoded-center,eq:decoded-cov}. The forward target tracker may similarly associate emerging target geometry with available inactive slots using causal group, spatial, or semantic affinity before invoking residual OT. Persistent canonical identities therefore provide a stable causal bookkeeping mechanism without relying on identities obtained by fitting the complete trajectory.

A persistent match is treated as valid only when the two states carry the same persistent canonical identity and that identity remains unique and valid in both states. For two Gaussian states $\G$ and $\G'$, let
\begin{equation}
\mathcal P_{\mathrm{pers}}(\G,\G')
=
\left\{
(i,j):
\begin{array}{l}
\text{slots $i$ and $j$ share the same persistent}\\
\text{canonical identity and the match remains unique}
\end{array}
\right\}.
\label{eq:valid-persistent-pairs}
\end{equation}
For prediction--target comparison, validity is supplied by the forward target tracker; for direct--composed comparison, it is inherited from the common prefix-anchored canonical bank. A split, merge, canonical reassignment, or otherwise uncertain correspondence invalidates the corresponding hard one-to-one match, and that mass is instead passed to residual matching. Temporary occlusion need not invalidate canonical identity when the causal tracker can maintain the association through the occluded interval.

\paragraph{Motion-group correspondence.}

Primitive-level identities may become uncertain under local refinement, occlusion, split, or merge events. Motion groups therefore provide a more stable intermediate unit of correspondence. Recall that $m(i)$ denotes the motion group associated with Gaussian $i$. In the general formulation, groups may be formed and tracked causally using geometry, motion, semantics, or their combination.

The group-level correspondence matrix $\Pi^O$ used in \cref{eq:hier-distance} records correspondence between motion groups in the two states under comparison. Persistent group identities give hard correspondences when available, while uncertain cases may receive soft correspondence mass. Retaining a motion-group identity even when its supporting Gaussian primitives change allows the group-level representation to remain stable through local split, merge, birth, and death events.

To expose this information to Gaussian-level matching, let
\begin{equation}
\kappa^O_{mn}\in[0,1]
\label{eq:group-compatibility}
\end{equation}
denote the compatibility between group $m$ in the first state and group $n$ in the second. Persistent group identity gives $\kappa^O_{mn}\in\{0,1\}$ as a hard special case, while uncertain group correspondence may produce intermediate values derived from normalized correspondence mass in $\Pi^O$. Gaussian primitives belonging to strongly corresponding groups are therefore preferred without requiring group compatibility itself to be binary.

\paragraph{Hybrid Gaussian correspondence.}

At the Gaussian level, valid persistent canonical identities are applied first. Let $\Pi^{G,\mathrm{pers}}$ denote a partial transport plan supported only on the valid persistent pairs in \cref{eq:valid-persistent-pairs}. For source and target marginal masses $\mathbf p$ and $\mathbf q$, respectively, the persistent plan satisfies
\begin{equation}
\Pi_{ij}^{G,\mathrm{pers}}=0
\quad\text{for}\quad
(i,j)\notin\mathcal P_{\mathrm{pers}},
\qquad
\Pi^{G,\mathrm{pers}}\mathbf 1\leq\mathbf p,
\qquad
(\Pi^{G,\mathrm{pers}})^\top\mathbf 1\leq\mathbf q.
\label{eq:persistent-partial-plan}
\end{equation}

Only the remaining unmatched mass is passed to OT. Define the residual marginals
\begin{equation}
\mathbf p_{\mathrm{res}}
=
\mathbf p-\Pi^{G,\mathrm{pers}}\mathbf 1,
\qquad
\mathbf q_{\mathrm{res}}
=
\mathbf q-(\Pi^{G,\mathrm{pers}})^\top\mathbf 1.
\label{eq:residual-marginals}
\end{equation}
The residual correspondence plan $\Pi^{G,\mathrm{res}}$ operates only on this remaining mass. The complete Gaussian correspondence is therefore
\begin{equation}
\Pi^G
=
\Pi^{G,\mathrm{pers}}
+
\Pi^{G,\mathrm{res}},
\label{eq:hybrid-correspondence}
\end{equation}
where persistent and residual correspondence operate on disjoint mass. Equivalently, the matching hierarchy is
\begin{equation}
\text{persistent canonical identity}
\;\longrightarrow\;
\text{motion-group compatibility}
\;\longrightarrow\;
\text{OT for residual ambiguity}.
\label{eq:hybrid-matching-hierarchy}
\end{equation}

To define the residual OT stage independently of the particular two Gaussian states being compared, consider arbitrary states
\begin{equation*}
\G=\{g_i\}_{i=1}^{N},
\qquad
\G'=\{g_j'\}_{j=1}^{N'}.
\end{equation*}
Let $m(i)$ and $m'(j)$ denote the motion groups of primitive $i$ in $\G$ and primitive $j$ in $\G'$, respectively. For a residual source primitive $i$ and residual target primitive $j$, we use the generic cost
{\small
\begin{equation}
C_{ij}(\G,\G')
=
\beta_\mu\|\mu_i-\mu_j'\|_2^2
+
\beta_f\cosdist(f_i,f_j')
+
\beta_\Sigma\dspd(\Sigma_i,\Sigma_j')
+
\beta_O\!\left(
1-\kappa^O_{m(i),m'(j)}
\right),
\label{eq:ot-cost}
\end{equation}
}
where $\mu_i$ and $\mu_j'$ are the Gaussian centers, $\Sigma_i$ and $\Sigma_j'$ are their covariances, and $f_i$ and $f_j'$ are semantic features when available. The semantic term is omitted when such features are unavailable. The function $\cosdist$ is the cosine distance defined in \cref{sec:hierarchical-distance}, while $\dspd$ is the SPD covariance distance used in \cref{eq:geometry-loss-local}. The non-negative coefficients $\beta_\mu,\beta_f,\beta_\Sigma,\beta_O$ control the contributions of position, semantics, covariance geometry, and motion-group compatibility, respectively.

Thus, $C_{ij}(\G,\G')$ is small when two primitives are spatially close, semantically compatible, geometrically similar, and associated with strongly corresponding motion groups. Hard group identity is recovered when $\kappa^O$ is binary, whereas soft group correspondence yields a graded compatibility penalty.

Rather than forcing a single lowest-cost assignment, we obtain a soft residual correspondence through entropically regularized optimal transport following the Lagrangian Sinkhorn formulation\footnote{Cuturi writes the regularized objective as $\langle P,M\rangle-\lambda^{-1}h(P)$ with $h(P)=-\sum_{ij}P_{ij}\log P_{ij}$. Setting $\varepsilon_{\mathrm{OT}}=\lambda^{-1}$ gives the form used here up to $-\varepsilon_{\mathrm{OT}}\sum_{ij}P_{ij}$, which is constant under fixed transport marginals and therefore does not change the optimizer.} of \citet[Sec.~4, Eq.~(2)]{cuturi2013sinkhorn}.
{\small
\begin{equation}
\Pi^{G,\mathrm{res}}
=
\arg\min_{\Pi\in\mathcal U(\mathbf p_{\mathrm{res}},\mathbf q_{\mathrm{res}})}
\left\{
\langle \Pi,C\rangle
+
\varepsilon_{\mathrm{OT}}
\sum_{i,j}
\Pi_{ij}\bigl(\log\Pi_{ij}-1\bigr)
\right\},
\label{eq:residual-ot}
\end{equation}
}
where $\mathcal U(\mathbf p_{\mathrm{res}},\mathbf q_{\mathrm{res}})$ is the set of non-negative transport plans with residual marginals $\mathbf p_{\mathrm{res}}$ and $\mathbf q_{\mathrm{res}}$, and $\varepsilon_{\mathrm{OT}}>0$ controls entropy regularization. The first term favors low-cost geometric, semantic, and group-compatible matches, while the entropy term permits soft correspondence when identity is ambiguous. We solve \cref{eq:residual-ot} with Sinkhorn iterations after fixing all valid persistent correspondences.

A dustbin row and column augment the residual transport problem in \cref{eq:residual-ot} to absorb genuinely unmatched mass. When dustbins are present, $\mathbf p_{\mathrm{res}}$ and $\mathbf q_{\mathrm{res}}$ in \cref{eq:residual-ot} denote the corresponding augmented residual marginals, including their dustbin masses. The target-side dustbin column represents predicted primitives that cannot be matched to represented future geometry, corresponding to disappearance, deactivation, or otherwise unmatched predicted slots; this is the same dustbin $\varnothing$ used to construct the target existence value in \cref{eq:target-existence}. Conversely, the predicted-side dustbin row absorbs target geometry that cannot be assigned to an existing or reserved inactive canonical slot. Newly represented geometry is first allowed to match an available inactive slot, so the dustbin acts as a fallback for genuinely unmatched mass rather than the primary birth mechanism.

Because $\Pi^{G,\mathrm{res}}$ in \cref{eq:residual-ot} is soft, it permits fractional mass assignments compatible with non-one-to-one correspondence. One predicted primitive may distribute residual mass over several target primitives during a split, while several predicted primitives may place residual mass on the same target lineage during a merge. This flexibility does not by itself identify the true lineage; rather, it avoids imposing an incorrect one-to-one correspondence when the available geometric, semantic, and group evidence is ambiguous. Persistent matches remain fixed, and only residual unmatched mass participates in this soft matching.

The experiment-specific instantiation of the residual transport problem, including its count-aware marginals, dustbin cost, Sinkhorn settings, and event definitions, is given in \cref{app:matching}.

\paragraph{Correspondence for latent and decoded states.}

The correspondence mechanism is defined from the associated Gaussian slots, canonical identities, motion-group information, and Gaussian geometry; it is not inferred from the latent discrepancy that it is subsequently used to evaluate. Consequently, when the Gaussian-level latent distance in \cref{eq:hier-distance} compares tokens $z_i^G$ and $z_j'^G$, the correspondence matrix $\Pi^G$ is inherited from the associated Gaussian states or canonical slot assignments. This avoids a circular construction in which latent similarity would determine the same correspondence under which latent similarity is measured.

The same hybrid construction is used whenever two Gaussian sets require correspondence. For prediction--target matching,
\begin{equation}
(\G,\G')
=
\left(
\widehat\G,
\G^{+}
\right),
\label{eq:prediction-target-correspondence-pair}
\end{equation}
yielding $\Pi^G$ as used in \cref{eq:hier-distance,eq:geometry-loss-local}. For direct--composed matching,
\begin{equation}
(\G,\G')
=
\left(
\widehat\G^{\mathrm{dir}},
\widetilde\G^{\pi}
\right),
\label{eq:direct-composed-correspondence-pair}
\end{equation}
yielding the analogous plan $\Pi^{G,\mathrm{dc}}$ used in \cref{eq:direct-composed-geometry-distance}. The correspondence machinery is therefore shared; only the two Gaussian states supplied to it differ.

By default, the resulting correspondence plans $\Pi^G$ and $\Pi^{G,\mathrm{dc}}$ are treated as matching assignments when used inside the predictive and geometric objectives: gradients from those losses are not propagated through the construction of the correspondence plans. This separates optimization of the predictive model from optimization of the matching rule and prevents the model from reducing a prediction loss merely by altering the correspondence itself. A differentiable Sinkhorn matcher could instead be used when end-to-end optimization of correspondence is desired.

\paragraph{Correspondence-aware comparison.}

Persistent canonical identity provides useful correspondence when it remains valid, but Gaussian predictions should not be rewarded or penalized for reproducing an arbitrary storage ordering. The resulting comparison is therefore invariant to raw array permutation while remaining sensitive to reliable persistent canonical correspondence. When such identity becomes uncertain or invalid, the hybrid mechanism in \cref{eq:hybrid-identity-principle} progressively falls back to residual OT rather than imposing an incorrect index-wise match.

\section{Experiments}
\label{sec:experiments}

The initial experiments provide controlled proof-of-concept tests of the mechanisms most specific to 4DGS-JEPA. Rather than attempting a large-scale comparison across dynamic-scene reconstruction, video world modelling, representation probing, and robotic planning, we focus on three questions directly implied by the Method: (i) \textit{does multi-horizon path supervision reduce recursive rollout drift, and does temporal composition reduce dependence on the particular rollout path}; (ii) \textit{does latent path consistency transfer to explicit Gaussian motion, and is this transfer strengthened by geometry-level composition}; and (iii) \textit{does hybrid correspondence remain reliable under primitive reordering and local topology changes}?

These questions isolate the roles of $\mathcal L_{\mathrm{path}}$, $\mathcal L_{\mathrm{comp}}$, $\mathcal L_{\mathrm{geo}}$, $\mathcal L_{\mathrm{gcomp}}$, and the hybrid correspondence mechanism introduced in \cref{sec:path-supervision,sec:temporal-composition,sec:geometry-decoder,sec:geometry-regularization,sec:identity}. We intentionally use a compact controlled Gaussian-world benchmark and small models that can be trained on a single GPU. Large-scale dynamic-scene benchmarks, comparisons with image- and video-space world models, representation probes, and action-conditioned planning are left to future evaluation.

\subsection{Controlled dynamic Gaussian worlds}
\label{sec:controlled-gaussian-data}

We procedurally generate trajectories of explicit anisotropic 3D Gaussian scenes. This provides exact future geometry, motion-group identity, and underlying transformations while avoiding the confounding effects and computational cost of fitting a dynamic Gaussian reconstruction model before evaluating predictive dynamics.

Each scene contains a small number of motion groups together with optional static background Gaussians, with state representation given by \cref{eq:gaussian-state}. Each motion group contains anisotropic Gaussians sampled around a compact 3D structure. Its evolution combines a group-level rigid transformation with Gaussian-specific local deformation,
\begin{equation}
\mu_{i,t}
=
R_{m(i),t}\mu_{i,0}
+
d_{m(i),t}
+
r_{i,t},
\label{eq:synthetic-center-dynamics}
\end{equation}
and
\begin{equation}
\Sigma_{i,t}
=
R_{m(i),t}
U_{i,t}
\Sigma_{i,0}
U_{i,t}^{\top}
R_{m(i),t}^{\top},
\label{eq:synthetic-cov-dynamics}
\end{equation}
mirroring the structured decoder in \cref{eq:decoded-center,eq:decoded-cov}. The group motions include translations and rotations with randomly sampled velocities and accelerations, while $r_{i,t}$ and $U_{i,t}$ introduce smooth Gaussian-specific non-rigid deformation. Some groups are designated static. Dynamics parameters, initial Gaussian configurations, and group motions are sampled independently for the training, validation, and test sets, so test trajectories are not continuations of training trajectories.

Experiments~1 and~2 use geometry and known motion-group identities, with persistent canonical identities fixed and all active primitives retained throughout each trajectory. Existence is therefore fixed to one in these forecasting experiments. Split, merge, birth, death, and permutation events are introduced separately in Experiment~3, allowing prediction and composition to be evaluated independently of correspondence ambiguity.

Because the trajectories are generated directly in Gaussian space, the online context builder and forward target tracker in \cref{eq:causal-context-builder,eq:causal-target-builder} reduce to causal access to the generated Gaussian states. No future state is used to modify the context representation.

Experiments~1 and~2 use a context history of $L=4$ states. Let
\[
\Kset_{\mathrm{train}}
=
\{1,2,4,8\}
\]
denote the set of terminal prediction horizons used during training, where $\Delta\in\Kset_{\mathrm{train}}$ is the temporal offset, measured in future time steps, from the context endpoint $t$ to the prediction target at $t+\Delta$. Experiment~1 additionally evaluates interpolation at unseen terminal horizons $\{3,6\}$ and tests held-out temporal decompositions containing either seen or unseen segment lengths.

\paragraph{Small-model implementation.}

The proof-of-concept model instantiates the components of \cref{sec:method} at deliberately small scale. Gaussian primitives are encoded into local tokens, pooled into motion-group tokens, and then into a scene token according to \cref{eq:hier-state}. The encoder and horizon-conditioned transition $\Phi_\phi$ use low-dimensional embeddings and shallow attention blocks. The controlled selective decoder $Q_\psi^\G$ predicts the group-level rigid increment $\widehat\xi_{m,a\rightarrow b}$, local residual displacement $\widehat r_{i,a\rightarrow b}$, and covariance deformation through $\widehat S_{i,a\rightarrow b}$. Because all active primitives persist in Experiments~1 and~2, existence is fixed to one and the existence head $\widehat e_{i,b}$ of the full formulation in \cref{eq:decoder-outputs} is not exercised in these forecasting experiments. Representation regularization $\mathcal L_{\mathrm{rep}}$ in \cref{eq:representation-regularization} is retained in every learned variant.

Directly comparable variants use the same model capacity, data-generation protocol, optimizer, horizon schedule, temporal partitions, and seed set, while each model is independently initialized and trained. Exact architecture, optimization, regularization, checkpoint-selection, and runtime details are provided in \cref{app:training-details}.

\subsection{Experiment 1: path supervision and temporal composition}
\label{sec:experiment-temporal-composition}

Experiment~1 isolates the two temporal-learning mechanisms introduced in \cref{sec:path-supervision,sec:temporal-composition}: recurrent target anchoring through $\mathcal L_{\mathrm{path}}$ in \cref{eq:path-loss} and explicit direct--composed agreement through $\mathcal L_{\mathrm{comp}}$ in \cref{eq:composition-loss}. All variants use the same encoder, horizon-conditioned transition $\Phi_\phi$, selective decoder $Q_\psi^\G$, EMA target encoder, training data, optimization budget, direct-route geometry supervision, and representation regularization. The variant names below therefore refer only to the temporal latent losses being ablated:
\begin{enumerate}[leftmargin=*,itemsep=1pt]
    \item \textit{Endpoint:} temporal latent objective $\mathcal L_{\mathrm{end}}$ in \cref{eq:endpoint-loss};
    \item \textit{Endpoint + Path:} $\mathcal L_{\mathrm{end}}+\lambda_{\mathrm{path}}\mathcal L_{\mathrm{path}}$;
    \item \textit{Endpoint + Composition:} $\mathcal L_{\mathrm{end}}+\lambda_{\mathrm{comp}}\mathcal L_{\mathrm{comp}}$;
    \item \textit{Endpoint + Path + Composition:} all three temporal losses are active.
\end{enumerate}

Every variant additionally receives the same \emph{direct-route} target-anchored geometry supervision from $\mathcal L_{\mathrm{geo}}$ in \cref{eq:geometry-loss}; composed geometry is not directly matched to the future target in this experiment. The Endpoint baseline therefore receives no recurrent target supervision through the decoder. Geometry-level composition $\mathcal L_{\mathrm{gcomp}}$ in \cref{eq:geometry-composition-loss} is disabled and studied separately in \cref{sec:experiment-geometry-composition}. The exact implemented objectives and weights are given in \cref{app:exp1-details}.

For calibration, we additionally report two causal analytic baselines: \emph{static persistence}, which keeps the current Gaussian centers fixed, and \emph{constant velocity}, which extrapolates the displacement between the final two context states. Both use only information available at or before the context endpoint.

\paragraph{Metrics.}

For target-relative geometry accuracy, we report the \emph{final displacement error}
\begin{equation}
\mathrm{FDE}(\widehat\G,\G^{+})
=
\frac{1}{|\Iset|}
\sum_{i\in\Iset}
\|\widehat\mu_i-\mu_i^{+}\|_2,
\label{eq:experiment-fde}
\end{equation}
using the known persistent correspondences in this experiment. We distinguish \emph{direct FDE}, obtained from $\widehat\G_{t+\Delta}^{\mathrm{dir}}$, from \emph{rollout FDE}, obtained from $\widetilde\G_{t+\Delta}^{\pi}$. The additional error associated with recursive prediction is summarized by
\begin{equation}
\Delta_{\mathrm{roll}}
=
\mathrm{FDE}_{\mathrm{roll}}
-
\mathrm{FDE}_{\mathrm{dir}}.
\label{eq:rollout-gap}
\end{equation}

For latent path consistency, we report
\begin{equation}
\mathrm{CompErr}(\Delta,\pi)
=
\dhier\!\left(
\widehat Z_{t+\Delta}^{\mathrm{dir}},
\widetilde Z_{t+\Delta}^{\pi}
\right),
\label{eq:comp-metric}
\end{equation}
together with the center-level direct--composed geometric discrepancy
\begin{equation}
\mathrm{GeoCompErr}(\Delta,\pi)
=
\frac{1}{|\Iset|}
\sum_{i\in\Iset}
\left\|
\widehat\mu_{i,t+\Delta}^{\mathrm{dir}}
-
\widetilde\mu_{i,t+\Delta}^{\pi}
\right\|_2.
\label{eq:geometry-comp-metric}
\end{equation}
Composition metrics are computed only for non-trivial paths with more than one segment; the one-segment case is excluded because the direct and composed computations coincide. We report composition metrics jointly with target-relative FDE because small path discrepancy alone does not imply correct future prediction.

We evaluate supervised horizons $\{1,2,4,8\}$ and unseen interpolation horizons $\{3,6\}$. We additionally distinguish held-out decompositions composed entirely of segment lengths observed during training from a harder setting containing unseen segment lengths. For example, at $\Delta=8$, training uses $(4,4)$ and $(2,2,4)$, while held-out seen-segment decompositions include $(4,2,2)$, $(2,4,2)$, and $(2,2,2,2)$; unseen-segment tests include $(3,5)$ and $(5,3)$.

\begin{table}[t]
\caption{Experiment~1 results averaged over 3 random seeds. Direct FDE, rollout FDE, and $\Delta_{\mathrm{roll}}$ are averaged over supervised horizons $\{1,2,4,8\}$; composition metrics are averaged over non-trivial supervised paths. Unseen FDE is rollout FDE over the held-out horizons $\{3,6\}$. Lower is better for all metrics.}
\label{tab:small-temporal-results}
\centering
\tiny
\setlength{\tabcolsep}{2.5pt}
\begin{tabular}{lcccccc}
\toprule
Method
& Direct FDE $\downarrow$
& Rollout FDE $\downarrow$
& $\Delta_{\mathrm{roll}}$ $\downarrow$
& CompErr $\downarrow$
& GeoCompErr $\downarrow$
& Unseen FDE $\downarrow$ \\
\midrule
Endpoint
& $0.03013{\pm}0.00042$
& $0.03061{\pm}0.00065$
& $0.00048{\pm}0.00023$
& $0.00244{\pm}0.00055$
& $0.00739{\pm}0.00082$
& $0.03621{\pm}0.00080$
\\
Endpoint + Path
& $0.03195{\pm}0.00243$
& $0.03228{\pm}0.00256$
& $0.00033{\pm}0.00014$
& $0.00065{\pm}0.00007$
& $\mathbf{0.00649{\pm}0.00069}$
& $0.03838{\pm}0.00304$
\\
Endpoint + Composition
& $\mathbf{0.02905{\pm}0.00095}$
& $\mathbf{0.02940{\pm}0.00101}$
& $0.00035{\pm}0.00014$
& $0.00096{\pm}0.00011$
& $0.00667{\pm}0.00074$
& $\mathbf{0.03466{\pm}0.00122}$
\\
Endpoint + Path + Composition
& $0.03206{\pm}0.00245$
& $0.03236{\pm}0.00255$
& $\mathbf{0.00029{\pm}0.00013}$
& $\mathbf{0.00058{\pm}0.00003}$
& $0.00653{\pm}0.00089$
& $0.03837{\pm}0.00283$
\\
\bottomrule
\end{tabular}
\end{table}

\paragraph{Temporal composition improves consistency without sacrificing accuracy.}

Relative to Endpoint, Endpoint + Composition reduces direct FDE from $0.03013$ to $0.02905$ and rollout FDE from $0.03061$ to $0.02940$, improvements of $3.6\%$ and $4.0\%$, respectively. CompErr in \cref{eq:comp-metric} simultaneously decreases from $0.00244$ to $0.00096$, a $60.6\%$ reduction. Thus, the improvement in path consistency does not come at the expense of target-relative accuracy.

The horizon-wise results in \cref{fig:exp1-horizon-results} separate target accuracy from path consistency. Panel~(a) asks whether the model predicts the correct future Gaussian state, whereas panel~(b) asks whether alternative temporal routes produce compatible latent states. The two quantities must be interpreted jointly: low CompErr can arise from an approximately static transition, while low FDE alone does not imply route-independent prediction.

Panel~(a) rules out a trivial near-static explanation. Endpoint + Composition reduces average seen-horizon direct FDE from $0.14802$ for static persistence to $0.02905$. Constant-velocity extrapolation is a strong baseline on this deliberately smooth benchmark, attaining lower average seen-horizon FDE ($0.02484$), particularly at short horizons. At the longest supervised horizon $\Delta=8$, however, Endpoint + Composition obtains FDE $0.05687$, compared with $0.07120$ for constant velocity. Thus, at longer horizons where nonlinear motion accumulates, the learned transition $\Phi_\phi$ captures predictive structure not represented by local constant-velocity extrapolation.

Panel~(b) shows the complementary effect on path dependence. Endpoint supervision alone leaves substantially greater direct--composed discrepancy. Adding either $\mathcal L_{\mathrm{path}}$ or $\mathcal L_{\mathrm{comp}}$ reduces CompErr, while their combination yields the smallest discrepancy across the evaluated non-trivial horizons. Improved path consistency is therefore observed together with non-trivial target-relative dynamics.

\begin{figure}[t]
\centering
\begin{minipage}[t]{0.49\linewidth}
\centering
\includegraphics[width=\linewidth]{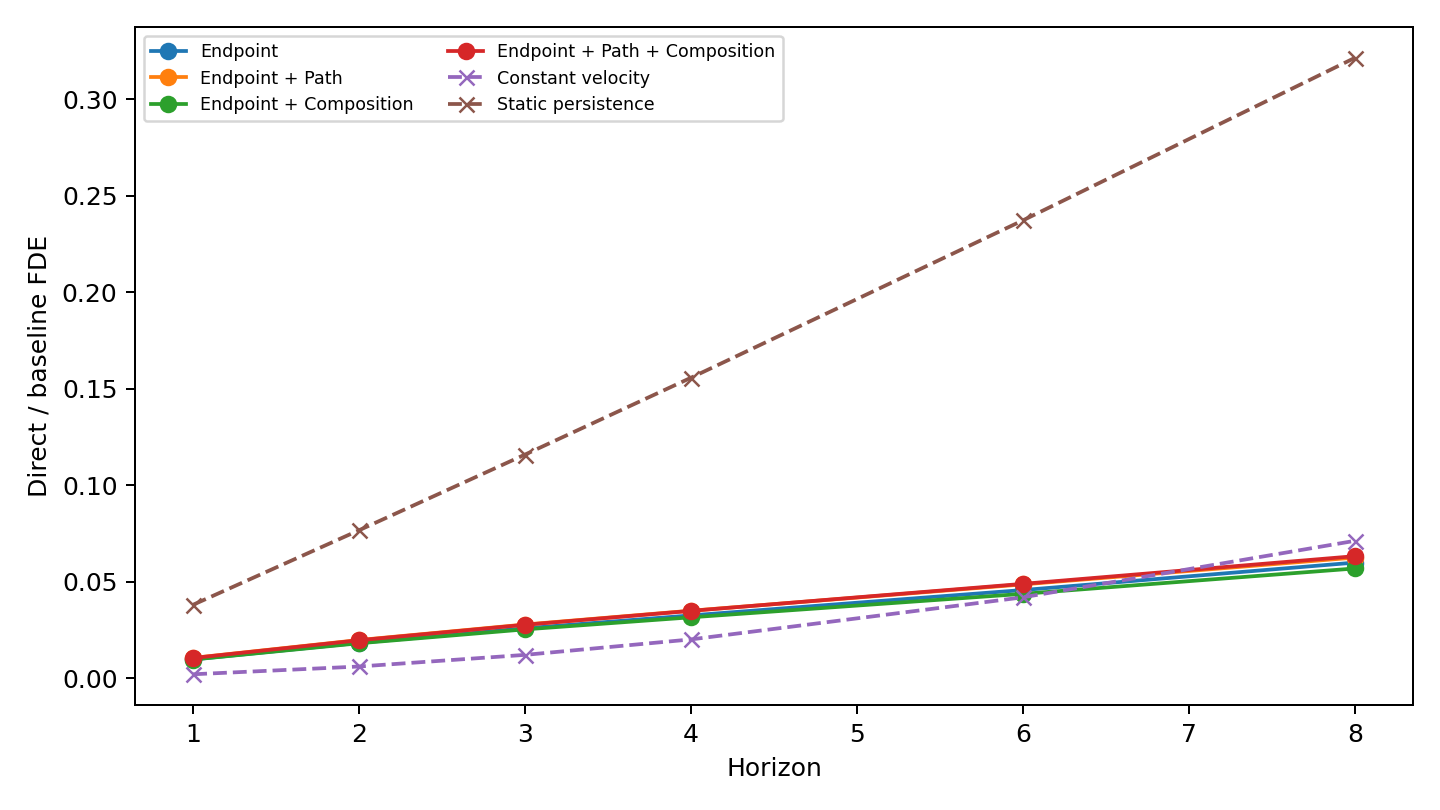}
\vspace{-1mm}

{\small (a) Target-relative direct FDE.}
\end{minipage}
\hfill
\begin{minipage}[t]{0.49\linewidth}
\centering
\includegraphics[width=\linewidth]{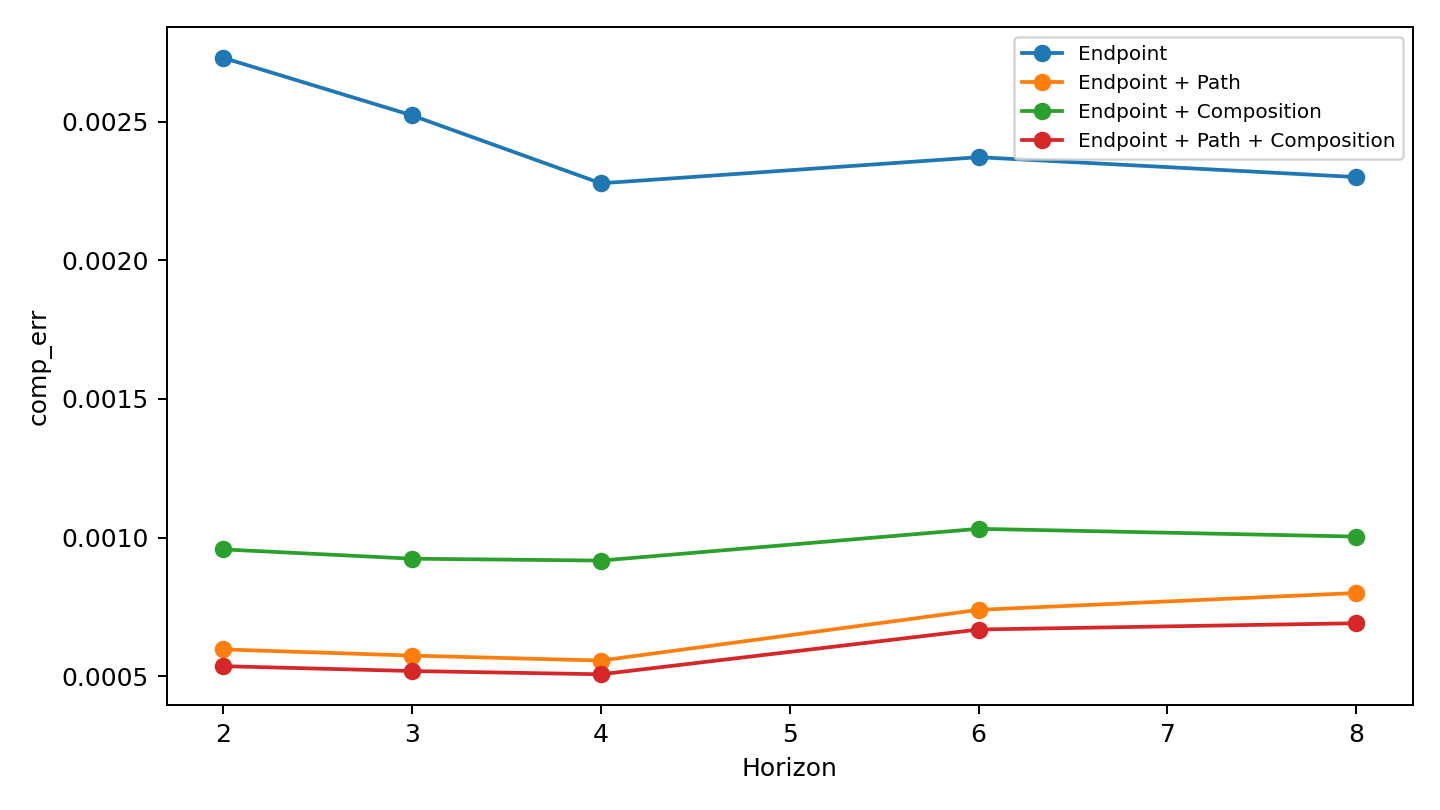}
\vspace{-1mm}

{\small (b) Direct--composed latent discrepancy.}
\end{minipage}
\caption{Prediction accuracy and temporal composition across horizons. Panel~(a) measures target-relative direct FDE: the learned models strongly outperform static persistence, while Endpoint + Composition becomes more accurate than causal constant-velocity extrapolation at the longest supervised horizon. Panel~(b) measures direct--composed latent discrepancy: endpoint supervision alone leaves substantial path dependence, whereas path supervision and explicit temporal composition reduce it, with the combined objective giving the lowest CompErr.}
\label{fig:exp1-horizon-results}
\end{figure}

\paragraph{Path supervision reduces recursive rollout drift.}

Endpoint + Path reduces CompErr from $0.00244$ to $0.00065$ even without directly minimizing the direct--composed discrepancy through $\mathcal L_{\mathrm{comp}}$. This is consistent with the two routes being independently anchored towards common future teacher states: $\mathcal L_{\mathrm{path}}$ in \cref{eq:path-loss} encourages path agreement indirectly through target consistency, whereas $\mathcal L_{\mathrm{comp}}$ enforces it explicitly.

Path supervision also reduces the rollout gap $\Delta_{\mathrm{roll}}$ in \cref{eq:rollout-gap} by about $31\%$ relative to Endpoint, although its absolute target-relative FDE is slightly higher under the present weighting. Combining path supervision and temporal composition yields the smallest rollout gap and CompErr, reducing them by about $39\%$ and $76\%$, respectively, relative to Endpoint. Under the present weights, Endpoint + Composition therefore provides the strongest accuracy--consistency trade-off, while the full objective produces the strongest direct--composed consistency.

\paragraph{Composition generalizes to unseen temporal decompositions.}

The reduction in path discrepancy persists for temporal decompositions not encountered during optimization. For held-out partitions constructed entirely from seen segment lengths, the full model reduces CompErr from $0.01134$ for Endpoint to $0.00260$, while GeoCompErr in \cref{eq:geometry-comp-metric} decreases from $0.01428$ to $0.01096$. For the harder decompositions containing unseen segment lengths, CompErr decreases from $0.00235$ to $0.00057$. These results support generalization of temporal composition beyond the particular partitions encountered during training.

Overall, Experiment~1 shows that temporal composition can reduce path dependence while preserving target-relative predictive accuracy, and that path supervision plays a complementary role by reducing recursive rollout drift. Additional per-seed results, partition-specific breakdowns, training curves, and implementation details are provided in \cref{app:exp1-details}.

\subsection{Experiment 2: does latent composition transfer to Gaussian geometry?}
\label{sec:experiment-geometry-composition}

Experiment~2 asks whether temporal consistency in latent space transfers to the explicit Gaussian motion produced by the selective decoder $Q_\psi^\G$. The question is not merely whether direct and composed latent predictions agree, but whether they induce compatible structured 3D motion after decoding. We therefore isolate the geometry-level composition objective $\mathcal L_{\mathrm{gcomp}}$ in \cref{eq:geometry-composition-loss}.

We use the same Endpoint + Composition latent objective and architecture as the common base configuration; each variant is trained independently from initialization. All variants use $\mathcal L_{\mathrm{end}}$, $\mathcal L_{\mathrm{comp}}$, $\mathcal L_{\mathrm{geo}}$, and $\mathcal L_{\mathrm{rep}}$. Both direct and composed decoded states receive identical target-anchored geometry supervision at their cumulative future endpoints. We compare:
\begin{enumerate}[leftmargin=*,itemsep=1pt]
    \item \textit{Latent composition only:} no geometry-level direct--composed regularization;
    \item \textit{+$\SE(3)$ composition:} additionally enforce the group-transformation component of $\mathcal L_{\mathrm{gcomp}}$;
    \item \textit{Full geometry composition:} additionally enforce both group-level transformation consistency and Gaussian-level $D_{\mathrm{match}}$ in \cref{eq:direct-composed-geometry-distance}.
\end{enumerate}
All other architecture, data, optimization, target supervision, horizon schedules, and temporal partitions are held fixed. The implemented objectives are given in \cref{app:exp2-details}.

Because the controlled generator provides the true motion-group transformations, we evaluate both \emph{structured-motion correctness} and \emph{path agreement}. For this controlled experiment, transformation errors use the rotation--translation surrogate $d_{\SE(3)}^{\mathrm{ctrl}}$ defined in \cref{app:exp2-details}, rather than the general Lie-group distance in \cref{eq:se3-distance}. The same controlled distance is used for training the group-level composition term and for the reported transformation metrics. Let $T_{m,t\rightarrow t+\Delta}^{+}$ denote the ground-truth relative transformation of motion group $m$. We report
\begin{equation}
E_T^{\mathrm{target}}
=
\frac{1}{|\Mset|}
\sum_{m\in\Mset}
d_{\SE(3)}^{\mathrm{ctrl}}\!\left(
\widehat T_{m,t\rightarrow t+\Delta}^{\mathrm{dir}},
T_{m,t\rightarrow t+\Delta}^{+}
\right),
\label{eq:transform-target-error}
\end{equation}
and
\begin{equation}
E_T^{\mathrm{dc}}
=
\frac{1}{|\Mset|}
\sum_{m\in\Mset}
d_{\SE(3)}^{\mathrm{ctrl}}\!\left(
\widehat T_{m,t\rightarrow t+\Delta}^{\mathrm{dir}},
\widehat T_{m,t\rightarrow t+\Delta}^{\pi}
\right).
\label{eq:transform-composition-error}
\end{equation}
At the Gaussian level, we additionally report direct target-relative center FDE from \cref{eq:experiment-fde}, direct covariance error using $\dspd$, and direct--composed geometry discrepancy $D_{\mathrm{match}}$ from \cref{eq:direct-composed-geometry-distance}. These metrics distinguish target accuracy from path consistency.

\begin{table}[t]
\caption{Geometry-level composition at the longest supervised horizon $\Delta=8$, averaged over 3 random seeds. Direct FDE, direct covariance error, and $E_T^{\mathrm{target}}$ are target-relative; $E_T^{\mathrm{dc}}$ and $D_{\mathrm{match}}$ measure direct--composed disagreement. Lower is better for all metrics.}
\label{tab:small-geometry-composition}
\centering
\tiny
\setlength{\tabcolsep}{3pt}
\begin{tabular}{lccccc}
\toprule
Method
& Direct FDE $\downarrow$
& Direct cov.\ err.\ $\downarrow$
& $E_T^{\mathrm{target}}$ $\downarrow$
& $E_T^{\mathrm{dc}}$ $\downarrow$
& $D_{\mathrm{match}}$ $\downarrow$ \\
\midrule
Latent composition only
& $\mathbf{0.06614{\pm}0.00271}$
& $0.19897{\pm}0.00387$
& $0.26124{\pm}0.01038$
& $0.00724{\pm}0.00057$
& $0.00192{\pm}0.00015$
\\
+$\SE(3)$ composition
& $0.06675{\pm}0.00408$
& $\mathbf{0.19758{\pm}0.00330}$
& $\mathbf{0.25554{\pm}0.00119}$
& $\mathbf{0.00202{\pm}0.00031}$
& $0.00138{\pm}0.00036$
\\
Full geometry composition
& $0.06728{\pm}0.00388$
& $0.19758{\pm}0.00290$
& $0.25582{\pm}0.00130$
& $0.00235{\pm}0.00062$
& $\mathbf{0.00130{\pm}0.00012}$
\\
\bottomrule
\end{tabular}
\end{table}

\paragraph{Geometry composition reduces residual path dependence.}

Latent temporal composition alone does not fully determine a path-consistent structured motion decomposition. At $\Delta=8$, the latent-composition baseline has $E_T^{\mathrm{dc}}=0.00724$. Adding explicit $\SE(3)$ composition reduces this to $0.00202$, a $72.2\%$ reduction, while the full geometry-composition objective obtains $0.00235$, a $67.6\%$ reduction. At the primitive level, $D_{\mathrm{match}}$ decreases from $0.00192$ to $0.00138$ under $\SE(3)$ composition and to $0.00130$ under the full objective, reductions of $28.4\%$ and $32.5\%$, respectively.

The two components of $\mathcal L_{\mathrm{gcomp}}$ therefore play distinguishable roles. The $\SE(3)$-only variant gives the lowest group-transform discrepancy $E_T^{\mathrm{dc}}$, whereas adding Gaussian-level matching further reduces $D_{\mathrm{match}}$ by $5.8\%$ relative to the $\SE(3)$-only variant. Group-level composition most strongly aligns the coarse structured motion, while Gaussian-level composition further aligns the primitive geometry induced by the two temporal routes.

\begin{figure}[t]
\centering
\begin{minipage}[t]{0.49\linewidth}
\centering
\includegraphics[width=\linewidth]{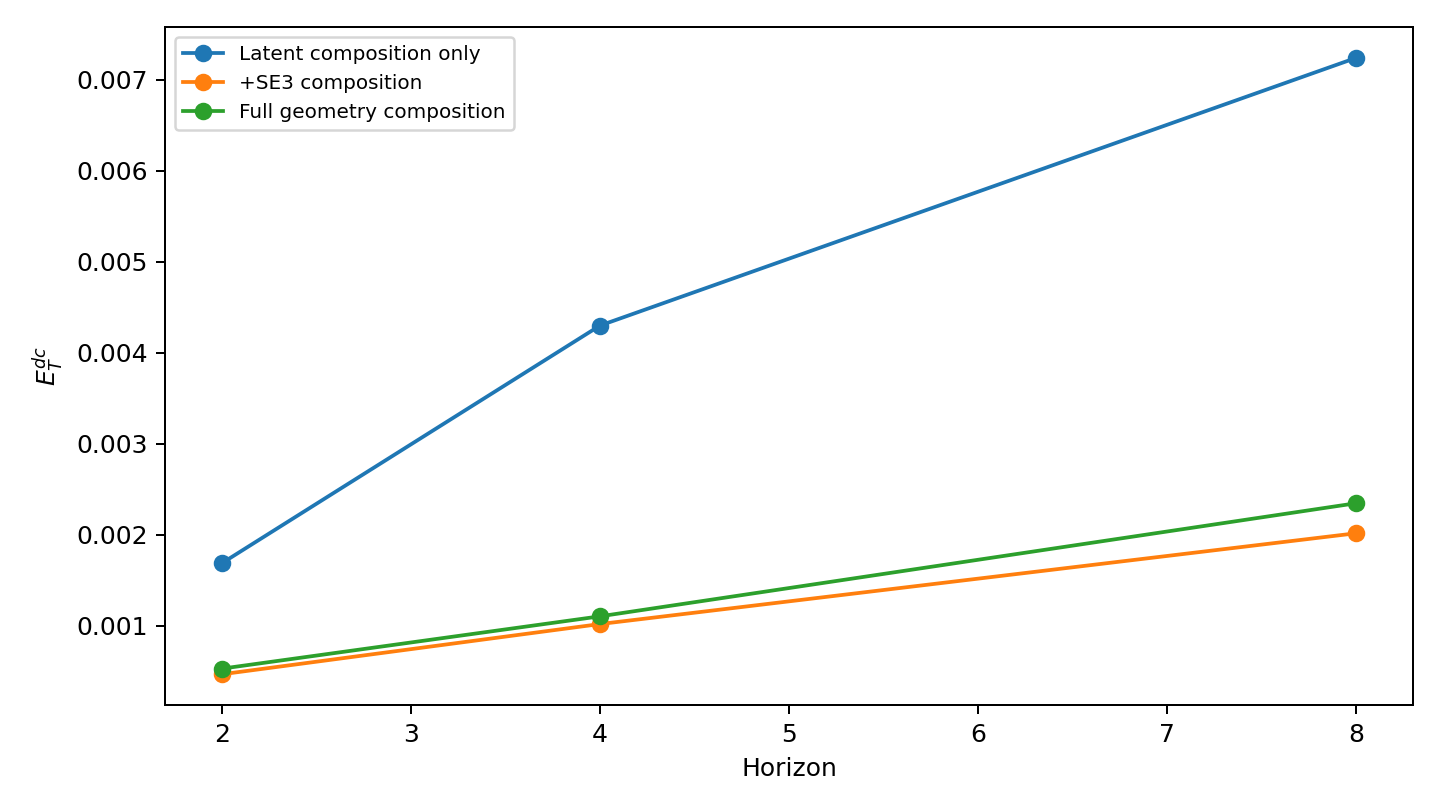}
\vspace{-1mm}

{\small (a) Group-transform path discrepancy.}
\end{minipage}
\hfill
\begin{minipage}[t]{0.49\linewidth}
\centering
\includegraphics[width=\linewidth]{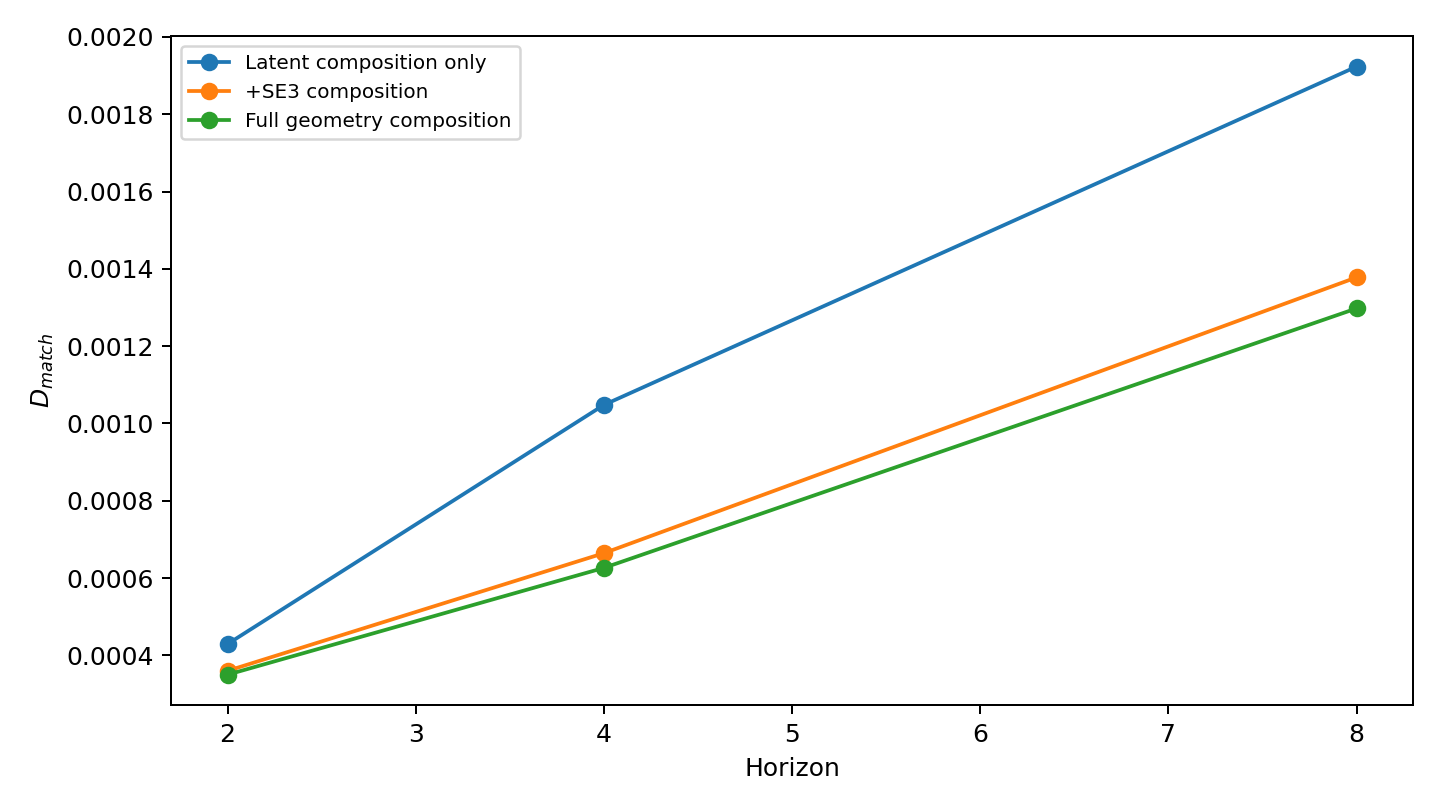}
\vspace{-1mm}

{\small (b) Gaussian-level path discrepancy.}
\end{minipage}
\caption{Explicit geometry consistency across non-trivial horizons. Panel~(a) reports the direct--composed group-transform error $E_T^{\mathrm{dc}}$ in \cref{eq:transform-composition-error}; explicit $\SE(3)$ composition substantially reduces the discrepancy at every evaluated horizon. Panel~(b) reports the Gaussian-level discrepancy $D_{\mathrm{match}}$ in \cref{eq:direct-composed-geometry-distance}; the full geometry-composition objective gives the strongest primitive-level consistency. Explicit geometry regularization therefore further reduces residual path dependence beyond latent composition and common target anchoring.}
\label{fig:exp2-geometry-composition}
\end{figure}

\paragraph{The improved consistency is not explained by trivial motion.}

Composition error must be interpreted jointly with target-relative accuracy. At $\Delta=8$, static persistence gives FDE $0.32158$ and constant-velocity extrapolation gives FDE $0.07120$, whereas all three learned variants obtain FDE between $0.06614$ and $0.06728$. Thus, every learned variant predicts substantial future motion and outperforms constant velocity at this longest horizon. Likewise, an identity group transformation gives $E_T^{\mathrm{target}}=0.34170$, compared with $0.26124$ for latent composition alone and approximately $0.256$ for the geometry-composition variants. The observed direct--composed agreement therefore cannot be explained by a decoder that predicts negligible or identity motion.

Geometry composition changes endpoint FDE only slightly. Relative to latent composition alone, the $\SE(3)$ and full variants increase direct FDE by approximately $0.9\%$ and $1.7\%$, respectively, while improving $E_T^{\mathrm{target}}$ by approximately $2.2\%$ and $2.1\%$. The gain in explicit path consistency is therefore obtained with only a small change in target-relative center accuracy and a slight improvement in structured-motion accuracy.

These results provide finite-error evidence for the role of $\mathcal L_{\mathrm{gcomp}}$ in \cref{eq:geometry-composition-loss}. If both prediction routes exactly matched the same future Gaussian state, Gaussian-level direct--composed consistency would follow automatically. In the learned finite-error regime, however, the routes can remain individually target-anchored while disagreeing with each other, and similar endpoint geometry can arise from different combinations of group-rigid motion and Gaussian-specific residual deformation. Explicit geometry composition reduces this residual ambiguity by encouraging a common structured motion explanation across temporal paths.

\subsection{Experiment 3: hybrid correspondence under identity perturbations}
\label{sec:experiment-identity}

Experiment~3 stress-tests the correspondence mechanism in \cref{sec:identity} independently of the learned transition $\Phi_\phi$, so matching quality is not confounded by forecasting error. We ask whether the Gaussian correspondence plan $\Pi^G$ in \cref{eq:hybrid-correspondence} remains reliable when raw primitive ordering or local topology changes.

\paragraph{Controlled identity stress test.}

Starting from Gaussian states with known canonical identities and lineage, we construct target sets containing random permutation, one-to-many splitting, many-to-one merging, primitive death, activation of reserved inactive slots, and mixtures of these events at increasing perturbation severity.

For example,
\begin{equation}
[A,B,C,D,E]
\quad\longrightarrow\quad
[C,A_1,A_2,E,F],
\label{eq:exp3-mixed-example}
\end{equation}
changes the target ordering, splits $A$ into descendants $A_1$ and $A_2$, removes some source geometry, and activates new geometry $F$. Because these perturbations are synthetic, ground-truth lineage such as $A\rightarrow\{A_1,A_2\}$ is retained for evaluation but hidden from the correspondence procedures.

We compare:
\begin{enumerate}[leftmargin=*,itemsep=1pt]
    \item \textit{Index matching}, which assumes unchanged raw array positions;
    \item \textit{OT only}, which ignores persistent canonical identity and solves a soft correspondence problem over the complete Gaussian sets;
    \item \textit{Hybrid}, which fixes valid persistent correspondences and applies residual OT only to unmatched or ambiguous mass according to \cref{eq:hybrid-correspondence,eq:residual-ot}.
\end{enumerate}
Index matching tests the failure mode in \cref{eq:identity-permutation-example}, OT tests identity-agnostic correspondence from geometry and auxiliary features, and Hybrid tests the principle in \cref{eq:hybrid-identity-principle}: preserve identity when reliable and fall back to soft matching when it is not.

\paragraph{Metrics.}

Because the true source--target lineage is known, let $Y_{ij}\in\{0,1\}$ indicate whether source primitive $i$ and target primitive $j$ share a valid ground-truth lineage relation. For a predicted correspondence plan $\Pi^G$, we report
\begin{equation}
\mathrm{LineageMass}
=
\frac{
\sum_{i,j}Y_{ij}\Pi^G_{ij}
}{
\sum_{i,j}\Pi^G_{ij}
+
\varepsilon_{\mathrm{match}}
},
\label{eq:lineage-mass}
\end{equation}
where the sums include only real, non-dustbin assignments. The numerator measures correspondence mass assigned to ground-truth-valid lineage edges and the denominator measures total real-to-real correspondence mass. Higher LineageMass therefore indicates greater concentration of predicted matching mass on valid ground-truth lineage relations.

We additionally report birth/death F1 for topology-changing events and the geometry discrepancy induced by the correspondence. The latter measures whether the transport plan pairs geometrically compatible primitives. The count-aware OT construction, event definitions, bootstrap confidence intervals, and additional severity and identity-availability tests are given in \cref{app:exp3-details}.

\begin{table}[t]
\caption{Controlled Gaussian-identity stress test, averaged over 3 independent repetitions. LineageMass and birth/death F1 are higher-is-better; geometry discrepancy is lower-is-better.}
\label{tab:small-identity-results}
\centering
\tiny
\setlength{\tabcolsep}{3pt}
\begin{tabular}{lcccc}
\toprule
Method
& Permutation lineage $\uparrow$
& Split/merge lineage $\uparrow$
& Birth/death F1 $\uparrow$
& Geometry discrepancy $\downarrow$ \\
\midrule
Index matching
& $0.04056{\pm}0.00507$
& $0.15563{\pm}0.00907$
& $0.61134{\pm}0.00381$
& $0.74311{\pm}0.00449$
\\
OT only
& $0.99789{\pm}0.00024$
& $0.98100{\pm}0.00099$
& $\mathbf{1.00000{\pm}0.00000}$
& $0.05010{\pm}0.00014$
\\
Hybrid
& $\mathbf{1.00000{\pm}0.00000}$
& $\mathbf{0.98715{\pm}0.00103}$
& $\mathbf{1.00000{\pm}0.00000}$
& $\mathbf{0.04906{\pm}0.00013}$
\\
\bottomrule
\end{tabular}
\end{table}

\paragraph{Raw index matching is fundamentally unsafe.}

The clearest failure in \cref{tab:small-identity-results} occurs under permutation. Although the represented scene is unchanged, Index matching obtains only $0.04056$ LineageMass, compared with $0.99789$ for OT and $1.00000$ for Hybrid. This directly supports the distinction in \cref{eq:index-vs-identity}: raw primitive array position cannot be treated as persistent canonical identity. The same failure affects geometry comparison, where Index matching has aggregate discrepancy $0.74311$, more than an order of magnitude larger than either correspondence-aware method.

\paragraph{Persistent identity remains useful beyond strong OT.}

OT alone is already a strong identity-agnostic baseline under the tested perturbations, obtaining $0.98100$ LineageMass under split/merge events. Hybrid increases this to $0.98715$ and reduces aggregate geometry discrepancy from $0.05010$ to $0.04906$. Thus, in this controlled benchmark, preserving valid canonical identity improves lineage and geometry correspondence even relative to strong residual OT. For isolated birth and death events, OT and Hybrid both attain perfect F1; the Hybrid advantage is instead concentrated in lineage preservation and the induced geometry correspondence.

\begin{figure}[t]
\centering
\begin{minipage}[t]{0.49\linewidth}
\centering
\includegraphics[width=\linewidth]{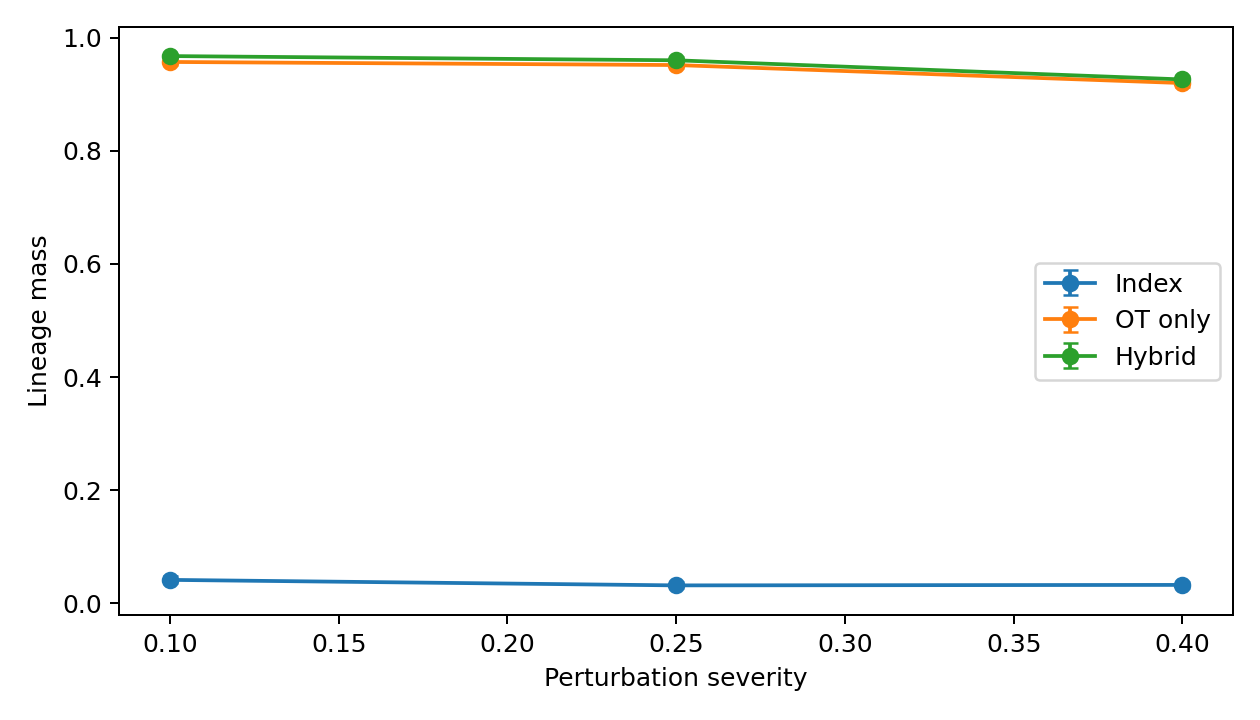}
\vspace{-1mm}

{\small (a) LineageMass under mixed perturbations.}
\end{minipage}
\hfill
\begin{minipage}[t]{0.49\linewidth}
\centering
\includegraphics[width=\linewidth]{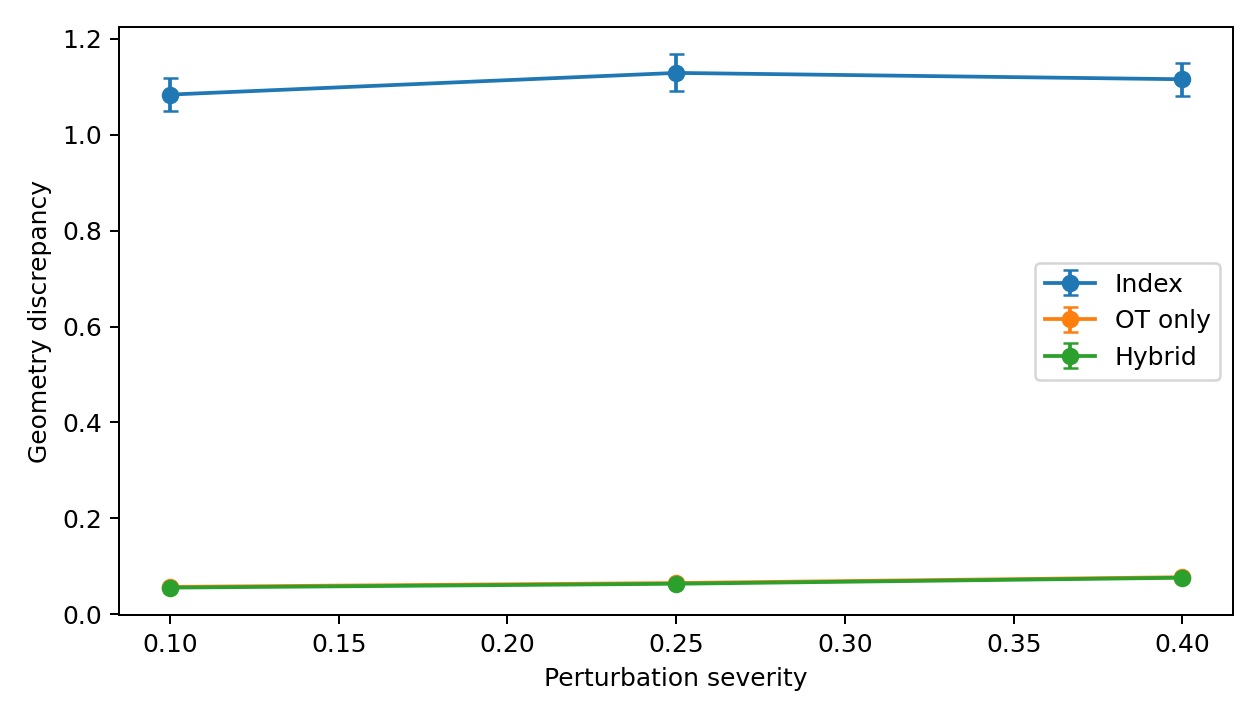}
\vspace{-1mm}

{\small (b) Induced geometry discrepancy.}
\end{minipage}
\caption{Mixed correspondence stress test as perturbation severity increases. Hybrid maintains higher LineageMass and lower induced geometry discrepancy than OT only at every evaluated severity. As more persistent canonical identities become invalid, the margin decreases, consistent with Hybrid increasingly relying on residual OT.}
\label{fig:exp3-mixed-severity}
\end{figure}

\paragraph{Hybrid degrades gracefully as identity becomes unavailable.}

\Cref{fig:exp3-mixed-severity} evaluates mixed permutation, split, merge, birth, and death perturbations at increasing severity. Hybrid obtains higher LineageMass and lower geometry discrepancy than OT at every evaluated severity. At severity $0.25$, LineageMass increases from $0.95148$ to $0.95988$, while geometry discrepancy decreases from $0.06502$ to $0.06349$. The margin narrows as perturbation severity increases, consistent with a larger fraction of persistent canonical identities becoming invalid and Hybrid relying increasingly on residual OT.

We additionally hide otherwise valid persistent canonical identities in a separate identity-availability test. As the hidden fraction increases from $0$ to $0.5$, Hybrid LineageMass decreases smoothly from $0.95988$ to $0.95649$ and geometry discrepancy increases from $0.06349$ to $0.06409$, approaching the OT-only reference of $0.95148$ and $0.06502$. This is the intended behavior of \cref{eq:hybrid-identity-principle}: Hybrid uses persistent identity when reliable but falls back towards residual OT as that information is removed. Complete identity-availability results are reported in \cref{app:exp3-id-dropout}.

Overall, Experiment~3 establishes that raw index alignment is not a valid correspondence assumption in this benchmark, that soft OT provides a strong identity-agnostic baseline under the tested reordering and topology perturbations, and that preserving reliable persistent canonical identity before matching only residual ambiguity can further improve lineage and geometry correspondence.

\subsection{Summary and scope}
\label{sec:empirical-summary}

The three experiments provide complementary mechanism-level evidence for 4DGS-JEPA. Experiment~1 shows that temporal composition reduces latent path dependence while maintaining target-relative predictive accuracy, with path supervision further reducing recursive rollout drift. Experiment~2 shows that latent consistency does not automatically imply equally consistent decoded motion, and that explicit group- and Gaussian-level composition substantially reduces this residual geometric path dependence. Experiment~3 shows that raw array alignment is unreliable under reordering and topology change, while hybrid correspondence preserves valid persistent canonical identity when available and falls back smoothly towards residual OT when it is not.

These results are controlled proof-of-concept evidence for the proposed mechanisms rather than evidence of large-scale superiority over reconstruction-based Gaussian world models or video-space predictive representations. The benchmark uses procedurally generated Gaussian dynamics so that future geometry, motion-group transformations, temporal decompositions, and lineage are known exactly. Evaluation on causally reconstructed real dynamic scenes, action-conditioned environments, larger Gaussian sets, appearance-sensitive prediction, representation probes, and downstream planning remains future work. Additional implementation details and experiment-specific results are provided in \cref{app:experiment-details}.

\section{Discussion}
\label{sec:discussion}

We introduced \method{}, a Gaussian-native joint-embedding predictive architecture for learning structured dynamics over dynamic Gaussian scene representations. The predictive state is organized hierarchically through scene-, motion-group-, and Gaussian-level tokens, and the horizon-conditioned transition $\Phi_\phi$ is closed in this predictive state space, enabling both direct prediction and recursive rollout. The central learning principle is \emph{temporal composition}: alternative chronological transition paths reaching the same future endpoint should produce compatible predictive states. Endpoint supervision $\mathcal L_{\mathrm{end}}$ anchors direct predictions to future target embeddings, multi-horizon path supervision $\mathcal L_{\mathrm{path}}$ anchors recurrent predictions along a rollout, and temporal composition $\mathcal L_{\mathrm{comp}}$ explicitly reduces residual dependence on the path used to reach the endpoint.

The theoretical and empirical results clarify the complementary roles of target anchoring and composition. Under the zero-loss and matched-support assumptions of \cref{prop:path-agreement}, path supervision together with temporal composition implies agreement between direct prediction, composed prediction, and the future target representation. The finite-error bound in \cref{eq:rollout-bound} further separates local transition error from its amplification under recursive rollout. Composition therefore constrains \emph{how consistently} a future state is reached, but does not by itself establish that the state is correct. Experiment~1 reflects this distinction: adding $\mathcal L_{\mathrm{comp}}$ reduces direct--composed latent discrepancy while preserving target-relative accuracy, whereas combining $\mathcal L_{\mathrm{path}}$ and $\mathcal L_{\mathrm{comp}}$ gives the smallest rollout gap and strongest path consistency. The reduction in path dependence also persists for held-out temporal decompositions, including decompositions containing unseen segment lengths.

Beyond latent prediction, the selective geometry decoder $Q_\psi^\G$ grounds the learned dynamics in explicit group-level $\SE(3)$ motion, Gaussian-specific residual displacement, and covariance deformation without making complete future appearance reconstruction the principal objective. The full formulation additionally permits existence dynamics for activation and disappearance of canonical slots, although these are not exercised in the present forecasting experiments. Because latent consistency does not automatically imply equally consistent decoded motion, $\mathcal L_{\mathrm{gcomp}}$ separately constrains direct and composed group transformations and Gaussian geometry. In Experiment~2, explicit $\SE(3)$ composition reduces group-transform path discrepancy by $72.2\%$ at $\Delta=8$, while the full geometry-composition objective reduces Gaussian-level direct--composed discrepancy by $32.5\%$, with only a small change in endpoint FDE. These results support a distinction between latent and geometric compositionality: latent consistency provides a useful foundation, while explicit geometric constraints further reduce finite-error path dependence after decoding.

The hybrid correspondence mechanism addresses a complementary property of Gaussian scene representations: raw array ordering does not define persistent canonical identity. \method{} therefore preserves reliable persistent canonical correspondences when available and applies residual optimal transport only to unmatched or ambiguous mass. Experiment~3 shows that raw index matching fails even under pure permutation, whereas OT provides a strong identity-agnostic baseline under the tested reordering and topology perturbations. Hybrid correspondence exactly recovers the permutation correspondence in this controlled setting and provides modest improvements over OT in lineage preservation and induced geometry correspondence under split, merge, and mixed perturbations. As persistent canonical identities are deliberately removed, Hybrid approaches OT-only behavior smoothly rather than failing abruptly, consistent with the design principle in \cref{eq:hybrid-identity-principle}.

The present evidence is intentionally mechanism-level rather than comprehensive. Experiments~1 and~2 operate directly on procedurally generated Gaussian states rather than Gaussian scenes reconstructed causally from real observations, while Experiment~3 evaluates correspondence independently of the learned transition $\Phi_\phi$. The current transition is deterministic, and the experiments do not independently establish the benefits of the full scene--motion-group--Gaussian hierarchy, structural dynamics priors, learned existence dynamics, appearance preservation, large-scale Gaussian sets, or action-conditioned planning. Reconstruction error, severe occlusion, large topology changes, finite reserved-slot capacity, and intrinsically multimodal futures also remain open challenges. More fundamentally, temporal and geometric composition constrain prediction consistency but do not establish physical correctness, causal mechanism identification, or calibrated uncertainty. Extending \method{} to causally reconstructed real scenes, probabilistic or multimodal future prediction, scalable Gaussian representations, richer interactions, and action-conditioned environments therefore provides a natural direction for future work.

\section{Conclusion}
\label{sec:conclusion}

4DGS-JEPA takes a predictive rather than purely reconstructive view of dynamic Gaussian worlds. Instead of asking only what a Gaussian scene looks like at a particular time, it learns a structured representation of how that scene evolves. The key idea is to make the learned dynamics \emph{temporally compositional}: reaching the same future through one long transition or a sequence of shorter transitions should lead to compatible predictive states. A hierarchical Gaussian-native representation captures dynamics at scene, motion-group, and primitive levels, while a \textit{selective} geometry decoder grounds the learned latent evolution in explicit 3D motion by predicting only the Gaussian variables needed for geometric dynamics, without requiring complete future appearance reconstruction.

The controlled experiments provide proof-of-concept evidence for this design. Temporal composition makes latent predictions substantially less dependent on the rollout path, explicit geometry composition transfers this consistency to group motion and Gaussian geometry, and hybrid correspondence preserves meaningful Gaussian relationships when primitive ordering or topology changes. Together, these results suggest a route from Gaussian Splatting as a representation of \emph{what a dynamic scene looked like} toward a predictive representation of \emph{how the Gaussian world evolves}. Scaling this principle to reconstructed real scenes, multimodal futures, richer physical interactions, and action-conditioned prediction is the potential direction for future work.

\bibliography{references}

@article{lecun2022path,
  author = {LeCun, Yann},
  title = {A Path Towards Autonomous Machine Intelligence Version 0.9.2, 2022-06-27},
  journal = {Open Review},
  volume = {62},
  number = {1},
  pages = {1--62},
  year = {2022}
}

@misc{assran2023ijepa,
      title={Self-Supervised Learning from Images with a Joint-Embedding Predictive Architecture}, 
      author={Mahmoud Assran and Quentin Duval and Ishan Misra and Piotr Bojanowski and Pascal Vincent and Michael Rabbat and Yann LeCun and Nicolas Ballas},
      year={2023},
      eprint={2301.08243},
      archivePrefix={arXiv},
      primaryClass={cs.CV},
      url={https://arxiv.org/abs/2301.08243}, 
}

@article{bardes2024vjepa,
title={Revisiting Feature Prediction for Learning Visual Representations from Video},
author={Adrien Bardes and Quentin Garrido and Jean Ponce and Xinlei Chen and Michael Rabbat and Yann LeCun and Mido Assran and Nicolas Ballas},
journal={Transactions on Machine Learning Research},
issn={2835-8856},
year={2024},
url={https://openreview.net/forum?id=QaCCuDfBk2},
note={Featured Certification}
}

@misc{assran2025vjepa2,
      title={V-JEPA 2: Self-Supervised Video Models Enable Understanding, Prediction and Planning}, 
      author={Mido Assran and Adrien Bardes and David Fan and Quentin Garrido and Russell Howes and Mojtaba and Komeili and Matthew Muckley and Ammar Rizvi and Claire Roberts and Koustuv Sinha and Artem Zholus and Sergio Arnaud and Abha Gejji and Ada Martin and Francois Robert Hogan and Daniel Dugas and Piotr Bojanowski and Vasil Khalidov and Patrick Labatut and Francisco Massa and Marc Szafraniec and Kapil Krishnakumar and Yong Li and Xiaodong Ma and Sarath Chandar and Franziska Meier and Yann LeCun and Michael Rabbat and Nicolas Ballas},
      year={2025},
      eprint={2506.09985},
      archivePrefix={arXiv},
      primaryClass={cs.AI},
      url={https://arxiv.org/abs/2506.09985}, 
}

@misc{murlabadia2026vjepa21,
      title={V-JEPA 2.1: Unlocking Dense Features in Video Self-Supervised Learning}, 
      author={Lorenzo Mur-Labadia and Matthew Muckley and Amir Bar and Mido Assran and Koustuv Sinha and Mike Rabbat and Yann LeCun and Nicolas Ballas and Adrien Bardes},
      year={2026},
      eprint={2603.14482},
      archivePrefix={arXiv},
      primaryClass={cs.CV},
      url={https://arxiv.org/abs/2603.14482}, 
}

@inproceedings{bardes2022vicreg,
title={{VICR}eg: Variance-Invariance-Covariance Regularization for Self-Supervised Learning},
author={Adrien Bardes and Jean Ponce and Yann LeCun},
booktitle={International Conference on Learning Representations},
year={2022},
url={https://openreview.net/forum?id=xm6YD62D1Ub}
}

@inproceedings{cuturi2013sinkhorn,
author = {Cuturi, Marco},
title = {Sinkhorn distances: lightspeed computation of optimal transport},
year = {2013},
publisher = {Curran Associates Inc.},
address = {Red Hook, NY, USA},
booktitle = {Proceedings of the 27th International Conference on Neural Information Processing Systems - Volume 2},
pages = {2292–2300},
numpages = {9},
location = {Lake Tahoe, Nevada},
series = {NIPS'13}
}

@article{kerbl2023gaussians,
author = {Kerbl, Bernhard and Kopanas, Georgios and Leimkuehler, Thomas and Drettakis, George},
title = {3D Gaussian Splatting for Real-Time Radiance Field Rendering},
year = {2023},
issue_date = {August 2023},
publisher = {Association for Computing Machinery},
address = {New York, NY, USA},
volume = {42},
number = {4},
issn = {0730-0301},
url = {https://doi.org/10.1145/3592433},
doi = {10.1145/3592433},
journal = {ACM Trans. Graph.},
month = jul,
articleno = {139},
numpages = {14}
}

@INPROCEEDINGS{wu2024fourDGS,
  author={Wu, Guanjun and Yi, Taoran and Fang, Jiemin and Xie, Lingxi and Zhang, Xiaopeng and Wei, Wei and Liu, Wenyu and Tian, Qi and Wang, Xinggang},
  booktitle={2024 IEEE/CVF Conference on Computer Vision and Pattern Recognition (CVPR)}, 
  title={4D Gaussian Splatting for Real-Time Dynamic Scene Rendering}, 
  year={2024},
  volume={},
  number={},
  pages={20310-20320},
  doi={10.1109/CVPR52733.2024.01920}}

@misc{lu2025gwm,
      title={GWM: Towards Scalable Gaussian World Models for Robotic Manipulation}, 
      author={Guanxing Lu and Baoxiong Jia and Puhao Li and Yixin Chen and Ziwei Wang and Yansong Tang and Siyuan Huang},
      year={2025},
      eprint={2508.17600},
      archivePrefix={arXiv},
      primaryClass={cs.RO},
      url={https://arxiv.org/abs/2508.17600}, 
}

@misc{chai2025gaf,
      title={GAF: Gaussian Action Field as a 4D Representation for Dynamic World Modeling in Robotic Manipulation}, 
      author={Ying Chai and Litao Deng and Ruizhi Shao and Jiajun Zhang and Kangchen Lv and Liangjun Xing and Xiang Li and Hongwen Zhang and Yebin Liu},
      year={2025},
      eprint={2506.14135},
      archivePrefix={arXiv},
      primaryClass={cs.RO},
      url={https://arxiv.org/abs/2506.14135}, 
}

@misc{lee2026mogaf,
      title={Space-Time Forecasting of Dynamic Scenes with Motion-aware Gaussian Grouping}, 
      author={Junmyeong Lee and Hoseung Choi and Minsu Cho},
      year={2026},
      eprint={2602.21668},
      archivePrefix={arXiv},
      primaryClass={cs.CV},
      url={https://arxiv.org/abs/2602.21668}, 
}

@inproceedings{li2026structured4d,
title={Structured 4D Latent Predictive Model for Robot Planning},
author={Zhiyi Li and Peilin Wu and Xiaoshen Han and Ruojin Cai and Yilun Du},
booktitle={Forty-third International Conference on Machine Learning},
year={2026},
url={https://openreview.net/forum?id=aXAgpGfHGc}
}

@misc{zhang2026gaussiandream,
      title={GaussianDream: A Feed-Forward 3D Gaussian World Model for Robotic Manipulation}, 
      author={Zijian Zhang and Yuqing Jiang and Qian Cheng and Xiaofan Li and Si Liu and Ding Zhao and Ping Luo and Weitao Zhou and Haibao Yu},
      year={2026},
      eprint={2605.20752},
      archivePrefix={arXiv},
      primaryClass={cs.RO},
      url={https://arxiv.org/abs/2605.20752}, 
}

@misc{pumarola2021dnerf,
      title={D-NeRF: Neural Radiance Fields for Dynamic Scenes}, 
      author={Albert Pumarola and Enric Corona and Gerard Pons-Moll and Francesc Moreno-Noguer},
      year={2020},
      eprint={2011.13961},
      archivePrefix={arXiv},
      primaryClass={cs.CV},
      url={https://arxiv.org/abs/2011.13961}, 
}

@article{park2021hypernerf,
author = {Park, Keunhong and Sinha, Utkarsh and Hedman, Peter and Barron, Jonathan T. and Bouaziz, Sofien and Goldman, Dan B and Martin-Brualla, Ricardo and Seitz, Steven M.},
title = {HyperNeRF: a higher-dimensional representation for topologically varying neural radiance fields},
year = {2021},
issue_date = {December 2021},
publisher = {Association for Computing Machinery},
address = {New York, NY, USA},
volume = {40},
number = {6},
issn = {0730-0301},
url = {https://doi.org/10.1145/3478513.3480487},
doi = {10.1145/3478513.3480487},
journal = {ACM Trans. Graph.},
month = dec,
articleno = {238},
numpages = {12}
}

@inproceedings{huang2026vjepa,
title={{VJEPA}: Variational Joint Embedding Predictive Architectures as Probabilistic World Models},
author={Yongchao Huang},
booktitle={Forty-third International Conference on Machine Learning},
year={2026},
url={https://openreview.net/forum?id=omJqMJt2fC}
}

@misc{huang2026bijepa,
      title={BiJEPA: Bi-directional Joint Embedding Predictive Architecture for Symmetric Representation Learning}, 
      author={Yongchao Huang},
      year={2026},
      eprint={2603.00049},
      archivePrefix={arXiv},
      primaryClass={cs.LG},
      url={https://arxiv.org/abs/2603.00049}, 
}

@inproceedings{colab2024,
author = {Edwards, Katlyn and Scalisi, Corrie and DeMars-Smith, Julianne and Lee, Key},
title = {Google Colab for Teaching CS and ML},
year = {2024},
isbn = {9798400704246},
publisher = {Association for Computing Machinery},
address = {New York, NY, USA},
url = {https://doi.org/10.1145/3626253.3635432},
doi = {10.1145/3626253.3635432},
booktitle = {Proceedings of the 55th ACM Technical Symposium on Computer Science Education V. 2},
pages = {1925},
numpages = {1},
location = {Portland, OR, USA},
series = {SIGCSE 2024}
}
\bibliographystyle{iclr2026_conference}

\appendix


\section{Temporal Composition: Analysis}
\label{app:temporal-composition}

\subsection{Intuition and operator interpretation}
\label{app:temporal-composition-intuition}

The temporal-composition objective $\mathcal L_{\mathrm{comp}}$ in \cref{eq:composition-loss} enforces a simple principle: if two admissible prediction routes describe the same scene evolution over the same elapsed time, they should produce compatible predictive states. For a cumulative horizon $s_j$, the model may reach time $t+s_j$ either directly through $\widehat Z_{t+s_j}^{\mathrm{dir}}$ in \cref{eq:direct-cumulative-prediction} or recursively through $\widetilde Z_{t+s_j}^{\pi}$ in \cref{eq:path-rollout}. Temporal composition penalizes disagreement between these routes.

For example, the same endpoint $t+6$ may be reached as
\begin{equation*}
t\longrightarrow t+6,
\qquad
t\longrightarrow t+2\longrightarrow t+6,
\qquad
t\longrightarrow t+4\longrightarrow t+6.
\end{equation*}
The intermediate states differ, but all routes describe the same total evolution. The objective therefore discourages the horizon-conditioned transition $\Phi_\phi$ in \cref{eq:transition} from learning mutually incompatible shortcuts for different temporal decompositions.

At the operator level, the controlled composition law is given in \cref{eq:controlled-composition}. In the action-free, time-homogeneous case it reduces to
\begin{equation*}
\Phi_\phi(Z_t,\delta_1+\delta_2)
\approx
\Phi_\phi\!\left(
\Phi_\phi(Z_t,\delta_1),
\delta_2
\right).
\end{equation*}
This is an approximate temporal-composition property rather than an assumption of exact semigroup structure.

The two stop-gradient directions in $\mathcal L_{\mathrm{comp}}$ implement symmetric two-way agreement: in one term the composed route serves as a fixed target while the direct route receives gradients, and in the other the roles are reversed. The stop-gradient operator therefore changes gradient flow during optimization but not the numerical discrepancy between the two predictions.

Importantly, composition consistency does not establish target correctness. Direct and composed predictions may agree while both are inaccurate. The endpoint loss $\mathcal L_{\mathrm{end}}$ in \cref{eq:endpoint-loss} and path loss $\mathcal L_{\mathrm{path}}$ in \cref{eq:path-loss} provide future-target anchoring, whereas $\mathcal L_{\mathrm{comp}}$ specifically reduces dependence on the temporal route used to reach the future state.

\subsection{Proof of Proposition~\ref{prop:path-agreement}}
\label{app:path-agreement-proof}

\begin{proof}
From $\ell_{\mathrm{path},j}^{\pi}=0$ and $\omega_j>0$, the local path-supervision contribution inherited from \cref{eq:path-loss} gives
\begin{equation*}
\dhier\!\left(
\widetilde Z_{t+s_j}^{\pi},
Z_{t+s_j}^{+}
\right)
=
0.
\end{equation*}
Hence
\begin{equation*}
\widetilde Z_{t+s_j}^{\pi}
=
Z_{t+s_j}^{+}
\end{equation*}
on the supervised matched support.

Next, $\ell_{\mathrm{comp},j}^{\pi}=0$. By its definition as the local contribution of \cref{eq:composition-loss}, both constituent terms are non-negative. Since $u_j+\nu_j>0$, at least one direct--composed distance has positive weight and must therefore vanish. Because stop-gradient changes only gradient flow and not numerical values,
\begin{equation*}
\dhier\!\left(
\widehat Z_{t+s_j}^{\mathrm{dir}},
\widetilde Z_{t+s_j}^{\pi}
\right)
=
0.
\end{equation*}
Therefore
\begin{equation*}
\widehat Z_{t+s_j}^{\mathrm{dir}}
=
\widetilde Z_{t+s_j}^{\pi}
\end{equation*}
on the supervised matched support.

Combining the two equalities proves \cref{eq:zero-loss-path-agreement}.
\end{proof}

\subsection{Finite-error rollout accumulation}
\label{app:theory}

\Cref{prop:path-agreement} describes the idealized zero-loss case. We now derive the finite-error rollout bound stated in \cref{eq:rollout-bound}.

For this analysis, identify the supervised matched components of the hierarchical predictive state with their concatenated finite-dimensional representation, and let $\|\cdot\|$ denote a fixed norm on this representation. For segment $j$ of a temporal partition $\pi=(\delta_1,\ldots,\delta_J)$, define
\begin{equation*}
\Phi_j(Z)
=
\Phi_\phi\!\left(
Z,
\delta_j,
A_{t+s_{j-1}:t+s_j-1}
\right),
\end{equation*}
with the action argument omitted in passive forecasting.

Let
\begin{equation*}
Z_0^\star,Z_1^\star,\ldots,Z_J^\star
\end{equation*}
denote a reference trajectory, with
\begin{equation*}
Z_j^\star
=
Z_{t+s_j}^{+},
\qquad
j=1,\ldots,J,
\end{equation*}
at the supervised future horizons. The initial reference state $Z_0^\star$ need not equal the realized predictive state $Z_t$.

Assume that, on the relevant state region, each segment transition $\Phi_j$ is locally $L_j$-Lipschitz,
\begin{equation*}
\|\Phi_j(Z)-\Phi_j(Z')\|
\leq
L_j\|Z-Z'\|,
\end{equation*}
and that its local approximation error when applied to the reference trajectory is bounded by
\begin{equation*}
\left\|
\Phi_j(Z_{j-1}^\star)
-
Z_j^\star
\right\|
\leq
\epsilon_j.
\end{equation*}

Define the realized recurrent prediction error after segment $j$ as
\begin{equation*}
e_j
=
\left\|
\widetilde Z_{t+s_j}^{\pi}
-
Z_j^\star
\right\|,
\end{equation*}
with initial mismatch
\begin{equation*}
e_0
=
\|Z_t-Z_0^\star\|.
\end{equation*}

Using the recursive rollout in \cref{eq:path-rollout},
\begin{align*}
e_j
&=
\left\|
\Phi_j\!\left(
\widetilde Z_{t+s_{j-1}}^{\pi}
\right)
-
Z_j^\star
\right\|
\\
&\leq
\left\|
\Phi_j\!\left(
\widetilde Z_{t+s_{j-1}}^{\pi}
\right)
-
\Phi_j(Z_{j-1}^\star)
\right\|
+
\left\|
\Phi_j(Z_{j-1}^\star)
-
Z_j^\star
\right\|
\\
&\leq
L_j e_{j-1}
+
\epsilon_j.
\end{align*}

Applying this recurrence repeatedly gives
\begin{align*}
e_J
&\leq
L_J e_{J-1}
+
\epsilon_J
\\
&\leq
L_JL_{J-1}e_{J-2}
+
L_J\epsilon_{J-1}
+
\epsilon_J
\\
&\leq
\cdots
\\
&\leq
\left(
\prod_{q=1}^{J}L_q
\right)e_0
+
\sum_{j=1}^{J}
\left(
\prod_{q=j+1}^{J}L_q
\right)
\epsilon_j.
\end{align*}
Since $Z_J^\star=Z_{t+s_J}^{+}$, this is exactly the bound in \cref{eq:rollout-bound}:
\begin{align*}
\left\|
\widetilde Z_{t+s_J}^{\pi}
-
Z_{t+s_J}^{+}
\right\|
&\leq
\left(
\prod_{q=1}^{J}L_q
\right)e_0
+
\sum_{j=1}^{J}
\left(
\prod_{q=j+1}^{J}L_q
\right)
\epsilon_j.
\end{align*}
Here, an empty product is defined as one. If the reference trajectory is initialized at the realized predictive state, $Z_0^\star=Z_t$, then $e_0=0$ and the initial-mismatch term vanishes.

The bound separates three effects. First, $e_0$ measures any mismatch between the rollout initialization and the chosen reference trajectory. Second, the local errors $\epsilon_j$ measure how accurately the learned transition propagates each reference state across one temporal segment. Third, the products of Lipschitz factors quantify how errors introduced earlier in the rollout can be amplified by subsequent applications of $\Phi_\phi$.

The path loss $\mathcal L_{\mathrm{path}}$ in \cref{eq:path-loss} does not directly minimize the local quantities $\epsilon_j$. Instead, it supervises the realized recurrent states encountered when $\Phi_\phi$ is applied to its own predictions and therefore primarily addresses recursive target accuracy. Temporal composition $\mathcal L_{\mathrm{comp}}$ in \cref{eq:composition-loss} constrains the discrepancy between those recurrent states and their corresponding direct predictions and therefore primarily addresses path dependence. Experiment~1 empirically separates these complementary effects.

\section{Training Procedure}
\label{app:training-algorithm}

Algorithm~\ref{alg:training} summarizes the generic \method{} training procedure. For clarity, one terminal horizon $\Delta$ and one temporal partition $\pi$ are sampled per minibatch; multiple samples simply give a Monte Carlo estimate of the corresponding expectations in the full objective.

\begin{algorithm}[t]
\caption{Temporally compositional \method{} training}
\label{alg:training}
\small
\begin{algorithmic}[1]

\Require Observations $X_{1:T}$; horizon set $\Kset$; causal builders $B_{\mathrm{on}},B_{\mathrm{on}}^{+}$
\Require Student encoder $E_\theta$; EMA target encoder $E_{\bar\theta}$; transition $\Phi_\phi$; decoder $Q_\psi^\G$

\For{each minibatch of context endpoints $t$}

    \State $\G_{1:t}\gets B_{\mathrm{on}}(X_{1:t})$
    \Comment{causal Gaussian prefix}

    \State $Z_t\gets E_\theta(\Hh_t)$

    \State Sample $\Delta\in\Kset$ and
    $\pi=(\delta_1,\ldots,\delta_J)$ with
    $\sum_{j=1}^{J}\delta_j=\Delta$

    \State $\G_{t+1:t+\Delta}^{+}
    \gets
    B_{\mathrm{on}}^{+}(\G_t,X_{t+1:t+\Delta})$
    \Comment{forward-only targets}

    \State $\widetilde Z_t^\pi\gets Z_t$;
    $\widetilde\G_t^\pi\gets\G_t$

    \For{$j=1,\ldots,J$}

        \State $s_j\gets\sum_{q=1}^{j}\delta_q$

        \State $Z_{t+s_j}^{+}
        \gets
        \stopgrad\!\left(E_{\bar\theta}(\Hh_{t+s_j}^{+})\right)$

        \State $\widehat Z_{t+s_j}^{\mathrm{dir}}
        \gets
        \Phi_\phi(Z_t,s_j,A_{t:t+s_j-1})$

        \State $\widetilde Z_{t+s_j}^{\pi}
        \gets
        \Phi_\phi(
        \widetilde Z_{t+s_{j-1}}^{\pi},
        \delta_j,
        A_{t+s_{j-1}:t+s_j-1})$

        \State $\widehat\G_{t+s_j}^{\mathrm{dir}}
        \gets
        Q_\psi^\G(
        Z_t,
        \widehat Z_{t+s_j}^{\mathrm{dir}},
        \G_t,
        s_j)$

        \State $\widetilde\G_{t+s_j}^{\pi}
        \gets
        Q_\psi^\G(
        \widetilde Z_{t+s_{j-1}}^\pi,
        \widetilde Z_{t+s_j}^\pi,
        \widetilde\G_{t+s_{j-1}}^\pi,
        \delta_j)$

        \State Compute direct--target, composed--target, and direct--composed correspondences
        \Comment{\cref{sec:identity}}

        \State Accumulate $\mathcal L_{\mathrm{path}}$ and $\mathcal L_{\mathrm{comp}}$
        \Comment{\cref{eq:path-loss,eq:composition-loss}}

        \State Accumulate target-anchored $\mathcal L_{\mathrm{geo}}$
        \Comment{\cref{eq:geometry-loss}}

        \State Accumulate applicable structural priors
        \Comment{\cref{eq:structural-objective}}

    \EndFor

    \State Compute terminal $\mathcal L_{\mathrm{end}}$
    \Comment{\cref{eq:endpoint-loss}}

    \State Compute terminal $\mathcal L_{\mathrm{gcomp}}$
    \Comment{\cref{eq:geometry-composition-loss}}

    \State Compute hierarchy-level $\mathcal L_{\mathrm{rep}}$
    \Comment{\cref{eq:representation-regularization}}

    \State Form $\mathcal L$ using \cref{eq:full-objective}

    \State Update $(\theta,\phi,\psi)$ by gradient descent on $\mathcal L$

    \State $\bar\theta\gets\tau\bar\theta+(1-\tau)\theta$
    \Comment{EMA target update}

\EndFor

\end{algorithmic}
\end{algorithm}

For passive forecasting, the action arguments in Algorithm~\ref{alg:training} are omitted. Correspondences used by the latent and geometric losses are constructed from persistent canonical identity and Gaussian-level matching rather than from the latent discrepancy itself. In the current controlled experiments, Experiments~1--2 use fixed persistent correspondence, while Experiment~3 evaluates residual OT independently and contains no learned transition; hence the reported experiments do not backpropagate through Sinkhorn matching.

Experiment~1 and Experiment~2 instantiate restricted versions of Algorithm~\ref{alg:training}. Experiment~1 applies target-anchored geometry only to the direct route, selectively enables $\mathcal L_{\mathrm{path}}$ and $\mathcal L_{\mathrm{comp}}$, and disables $\mathcal L_{\mathrm{gcomp}}$. Experiment~2 omits $\mathcal L_{\mathrm{path}}$, supervises both decoded routes at cumulative endpoints, and varies the group- and Gaussian-level components of $\mathcal L_{\mathrm{gcomp}}$. Experiment~3 contains no optimization and therefore does not instantiate Algorithm~\ref{alg:training}.


\section{Gaussian Splatting: Representation and Rendering}
\label{app:gaussian-splatting}

This section briefly reviews the Gaussian Splatting representation used throughout the paper and clarifies the distinction between Gaussian reconstruction, Gaussian rendering, and the predictive Gaussian dynamics studied by \method{}.

\subsection{Explicit Gaussian scene representation}

Three-dimensional Gaussian Splatting (3DGS) represents a scene as a collection of anisotropic Gaussian primitives \citep{kerbl2023gaussians},
\begin{equation*}
\G
=
\{g_i\}_{i=1}^{N},
\qquad
g_i
=
(\mu_i,\Sigma_i,\alpha_i,c_i),
\end{equation*}
where the center $\mu_i\in\R^3$ specifies position, the covariance $\Sigma_i\in\mathbb S_{++}^{3}$ specifies spatial extent and orientation, the opacity $\alpha_i$ controls visibility, and the appearance coefficients $c_i$ typically parameterize view-dependent color through spherical harmonics.

Ignoring normalization constants, the spatial contribution of primitive $i$ is
\begin{equation*}
G_i(x)
=
\exp\!\left[
-\frac{1}{2}
(x-\mu_i)^\top
\Sigma_i^{-1}
(x-\mu_i)
\right].
\end{equation*}
A common covariance parameterization is
\begin{equation*}
\Sigma_i
=
R_iS_iS_i^\top R_i^\top,
\qquad
R_i\in\SO(3),
\end{equation*}
where $S_i$ contains positive scale parameters. Each primitive therefore represents an oriented ellipsoidal density in 3D.

The Gaussian set $\G$ is the explicit \emph{scene representation}. Gaussian \emph{splatting} refers specifically to projecting these primitives into an image and compositing their image-space contributions.

\subsection{Projection and alpha compositing}

For a camera transformation $(R_{\mathrm c},t_{\mathrm c})$ and perspective projection $\pi_{\mathrm c}$, the center of Gaussian $i$ is transformed and projected as
\begin{equation*}
\mu_i^{\mathrm c}
=
R_{\mathrm c}\mu_i+t_{\mathrm c},
\qquad
\mu_i^{\mathrm{2D}}
=
\pi_{\mathrm c}(\mu_i^{\mathrm c}).
\end{equation*}
Under a local linearization of the perspective projection, its image-plane covariance is approximately
\begin{equation*}
\Sigma_i^{\mathrm{2D}}
=
J_iR_{\mathrm c}
\Sigma_i
R_{\mathrm c}^{\top}J_i^\top,
\end{equation*}
where $J_i$ is the projection Jacobian evaluated at $\mu_i^{\mathrm c}$.

The resulting elliptical image-space footprint contributes
\begin{equation*}
a_i(p)
=
\alpha_i
\exp\!\left[
-\frac{1}{2}
(p-\mu_i^{\mathrm{2D}})^\top
(\Sigma_i^{\mathrm{2D}})^{-1}
(p-\mu_i^{\mathrm{2D}})
\right]
\end{equation*}
at image coordinate $p$. After approximately depth-ordering the relevant Gaussians, front-to-back alpha compositing gives
\begin{equation*}
C(p)
=
\sum_i
T_i(p)\,
a_i(p)\,
c_i(v),
\qquad
T_i(p)
=
\prod_{j<i}
\left(
1-a_j(p)
\right),
\end{equation*}
where $c_i(v)$ is the view-dependent color and $T_i(p)$ is the accumulated transmittance before Gaussian $i$.

Because projection and compositing are differentiable, Gaussian parameters can be fitted from image observations through a rendering objective of the schematic form
\begin{equation*}
\min_{\G}
\sum_v
\ell_{\mathrm{render}}
\left(
\mathcal R(\G;\mathcal C_v),
I_v
\right),
\end{equation*}
where $\mathcal R$ denotes Gaussian rasterization and $\mathcal C_v$ specifies camera $v$.

Gaussian reconstruction and Gaussian rendering therefore correspond to two different mappings:
\begin{equation*}
\underbrace{
\{I_v,\mathcal C_v\}_{v}
\longrightarrow
\G
}_{\text{Gaussian reconstruction}},
\qquad
\underbrace{
(\G,\mathcal C)
\longrightarrow
I
}_{\text{Gaussian rendering}}.
\end{equation*}

\subsection{Dynamic Gaussian states and predictive dynamics}

The dynamic Gaussian state used by \method{} is defined in \cref{eq:gaussian-state}, where the time-indexed primitive $g_{i,t}$ contains center $\mu_{i,t}$, covariance $\Sigma_{i,t}$, opacity $\alpha_{i,t}$, appearance coefficients $c_{i,t}$, semantic features $f_{i,t}$, and existence $e_{i,t}$. Dynamic Gaussian reconstruction methods may represent such evolution directly or through time-conditioned deformation of a canonical Gaussian scene \citep{wu2024fourDGS}.

The problem studied by \method{} is different from reconstructing or querying such a representation at observed timestamps. The student--teacher predictive mapping is defined in \cref{eq:problem-overview}, the latent dynamics are given by the horizon-conditioned transition $\Phi_\phi$ in \cref{eq:transition}, and the selective decoder maps predicted latent dynamics back to explicit Gaussian geometry through the decoding chain in \cref{eq:decoder-chain}.

The primary learning target is therefore a future representation rather than a rendered future image. The selective geometry decoder $Q_\psi^\G$ provides an explicit interface to future Gaussian motion without requiring complete appearance reconstruction.

This distinction also explains the controlled protocol in Experiments~1 and~2. Those experiments begin directly from explicit Gaussian states and therefore bypass both Gaussian reconstruction and image rendering. No camera projection, rasterization, RGB reconstruction loss, PSNR, or perceptual image metric is involved. This isolates predictive-dynamics errors from errors introduced by an upstream reconstruction system or renderer.

In short, Gaussian reconstruction asks \emph{which Gaussian state explains the observations}, whereas \method{} asks \emph{how a causal Gaussian-world state evolves predictively}.


\section{Experimental Details and Additional Results}
\label{app:experiment-details}

This section provides implementation details, experiment-specific protocol definitions, and additional results for the three controlled experiments in \cref{sec:experiments}. Experiments~1 and~2 use the same Gaussian-world model architecture and procedural dynamics generator, with every experimental variant trained independently from initialization. Experiment~3 isolates the Gaussian correspondence mechanism and contains no learned model.

\paragraph{Correspondence instantiation across experiments.}

Experiments~1 and~2 isolate predictive dynamics and temporal composition from correspondence ambiguity. Persistent canonical identities remain valid, primitive correspondence is fixed, and no permutation, split, merge, birth, or death events are introduced. Gaussian-level correspondence therefore reduces to known persistent matching in these experiments.

Experiment~3 instead isolates the hybrid correspondence mechanism itself. It contains no learned transition $\Phi_\phi$ or model optimization and deliberately perturbs primitive ordering and topology through permutation, split, merge, birth, death, and mixed events. Ground-truth source--target lineage is retained only for evaluation and is hidden from the correspondence procedures.

The correspondence evaluation is invariant to arbitrary raw storage ordering while remaining sensitive to whether transport mass is assigned to valid canonical lineage. Experiment~3 operationalizes this distinction through LineageMass, birth/death detection, and correspondence-induced geometry discrepancy. It thereby tests whether the hybrid rule in \cref{eq:hybrid-identity-principle} preserves reliable persistent canonical identity when available and degrades gracefully toward residual OT as that information is removed.

\paragraph{Compute environment.}

All three experiments were executed in Google Colaboratory (Colab) \citep{colab2024} in runtimes provisioned with an NVIDIA A100 GPU. Experiments~1 and~2 use PyTorch with CUDA for model training and evaluation. Experiment~3 is implemented entirely with NumPy-based correspondence computations and does not require GPU acceleration, although it was executed in the same Colab environment. The associated runtime used Linux x86-64 with an Intel Xeon CPU at $2.20$\,GHz, exposing $6$ physical cores and $12$ logical CPUs and approximately $89.6$\,GB of system memory.

\subsection{Common controlled setup for Experiments 1 and 2}
\label{app:training-details}

Experiments~1 and~2 operate directly on procedurally generated anisotropic 3D Gaussian trajectories rather than fitting Gaussian scenes from images. Consequently, the causal builders $B_{\mathrm{on}}$ and $B_{\mathrm{on}}^{+}$ in \cref{eq:causal-context-builder,eq:causal-target-builder} reduce to causal access to the generated Gaussian states.

Each scene contains $M=3$ motion groups with $6$ Gaussians per group, giving $N=18$ active primitives. Their evolution follows the structured group-motion and local-deformation construction described in \cref{sec:controlled-gaussian-data}: group trajectories combine linear and accelerated translation and rotation with oscillatory components, while Gaussian-specific residual displacement and anisotropic scale deformation introduce smooth local non-rigidity. Half of the generated scenes contain one static motion group.

Training, validation, and test scenes are independently generated from disjoint scene seeds, so test trajectories are neither reused training scenes nor temporal continuations of them. For Experiments~1 and~2, motion-group and Gaussian identities remain fixed and all active primitives persist throughout each sequence. Consequently, the correspondence matrices in the general formulation reduce to identity alignment in these two controlled experiments. Permutation, split, merge, birth, and death are introduced only in Experiment~3 so that correspondence errors do not confound the temporal- and geometry-composition experiments.

\begin{table}[t]
\caption{Controlled Gaussian-world generator used in Experiments~1 and~2.}
\label{tab:app-generator-settings}
\centering
\small
\setlength{\tabcolsep}{4pt}
\begin{tabular}{lc}
\toprule
Quantity & Setting \\
\midrule
Motion groups & $3$ \\
Gaussians per group & $6$ \\
Sequence length & $24$ \\
Group-center range & $[-1,1]^3$ \\
Within-group center std. & $0.16$ \\
Initial Gaussian scale & $[0.04,0.10]$ \\
Angular velocity & $[-0.045,0.045]$ \\
Angular acceleration & $[-0.002,0.002]$ \\
Translation velocity & $[-0.035,0.035]^3$ \\
Translation acceleration & $[-0.0015,0.0015]^3$ \\
Oscillatory translation amplitude & $[0,0.08]^3$ \\
Oscillatory translation frequency & $[0.08,0.18]$ \\
Local residual amplitude & $[0,0.035]$ \\
Local residual frequency & $[0.10,0.35]$ \\
Log-scale deformation amplitude & $[0,0.06]^3$ \\
\bottomrule
\end{tabular}
\end{table}

\Cref{tab:app-generator-settings} reports the parameter ranges used by the generator. Spatial coordinates use a normalized coordinate system and time advances in unit discrete steps. Translation velocities and accelerations are therefore measured in normalized spatial units per step and per step squared, respectively; angular velocities and accelerations are in radians per step and radians per step squared; sinusoidal frequencies are in radians per step; and log-scale deformation amplitudes are dimensionless. The generated dynamics deliberately contain smooth nonlinear effects beyond static persistence or constant-velocity motion while remaining small enough for controlled analysis.

\subsubsection{Causal motion-aware encoder}

The context length is $L=4$. In addition to current Gaussian geometry, the encoder receives causal finite-difference motion information. We use
\begin{equation*}
v_{i,t}
=
\mu_{i,t}-\mu_{i,t-1},
\qquad
a_{i,t}
=
\mu_{i,t}-2\mu_{i,t-1}+\mu_{i,t-2}.
\end{equation*}
Let $\chi(\Sigma)\in\R^6$ denote the log-Cholesky coordinates of an SPD covariance, and let
\begin{equation*}
\bar\mu_{m,t}
=
\frac{1}{|\Iset_m|}
\sum_{i\in\Iset_m}
\mu_{i,t}
\end{equation*}
be the current center of motion group $m$. The primitive input features are
\begin{align*}
x_{i,t}
=
\Bigg[
&
\frac{\mu_{i,t}}{s_{\mathrm p}},
\frac{\mu_{i,t}-\bar\mu_{m(i),t}}{s_{\mathrm r}},
\operatorname{NormCov}(\Sigma_{i,t}),
1,
\operatorname{onehot}(m(i)),
\\
&
\left\{
\frac{\mu_{i,t-k}-\mu_{i,t}}{s_{\mathrm r}}
\right\}_{k=1}^{L-1},
\frac{v_{i,t}}{s_v},
\frac{a_{i,t}}{s_a},
\frac{
\chi(\Sigma_{i,t})-\chi(\Sigma_{i,t-1})
}{
s_\Sigma
}
\Bigg],
\end{align*}
with fixed scales
\begin{equation*}
s_{\mathrm p}=1.5,
\qquad
s_{\mathrm r}=0.20,
\qquad
s_v=0.05,
\qquad
s_a=0.01,
\qquad
s_\Sigma=0.05.
\end{equation*}
For the current covariance coordinate,
\begin{equation*}
\operatorname{NormCov}(\Sigma)
=
\left[
\frac{\chi_{1:3}(\Sigma)-\log(0.07)}{0.5},
\frac{\chi_{4:6}(\Sigma)}{0.05}
\right].
\end{equation*}
These constants are fixed from the known generator scale and are not estimated from validation or test data.

Current-geometry and motion features are processed by separate MLP branches and fused before hierarchical aggregation. Gaussian tokens are pooled within their known motion groups, and the resulting group tokens are pooled into a scene token. The encoder and horizon-conditioned transition $\Phi_\phi$ each use one Transformer block with four attention heads.

\paragraph{Controlled geometry decoder.}

Experiments~1 and~2 use a restricted instantiation of the selective decoder in \cref{sec:geometry-decoder}. For each motion group, the decoder predicts a $3$-D axis--angle rotation increment and a $3$-D translation. For each Gaussian, it predicts a $3$-D residual displacement together with three diagonal log-scale deformation parameters. The latter define
\begin{equation*}
U_i
=
\operatorname{diag}\!\left(\exp(\widehat s_i)\right),
\end{equation*}
which is used in the covariance update. Because every active primitive persists in Experiments~1 and~2, existence is fixed to one and no existence head or birth/death supervision is used. Appearance and semantic attributes are likewise absent from these two controlled forecasting experiments.

Because motion-group and Gaussian identities are fixed, the experimental hierarchical latent distance is the identity-aligned specialization of \cref{eq:hier-distance}: scene- and group-level discrepancies use cosine distance, while Gaussian-token discrepancies use a SmoothL1/Huber loss with equal hierarchy weights.

\begin{table}[t]
\caption{Common model and optimization settings for Experiments~1 and~2.}
\label{tab:app-common-training}
\centering
\small
\setlength{\tabcolsep}{4pt}
\begin{tabular}{lc}
\toprule
Setting & Value \\
\midrule
Context length $L$ & $4$ \\
Train / validation / test scenes & $512/32/48$ \\
Train / validation / test samples & $6656/416/624$ \\
Embedding dimension & $48$ \\
Decoder horizon embedding & $16$ \\
Transformer layers & $1$ \\
Attention heads & $4$ \\
Transformer dropout & $0$ \\
Batch / evaluation batch size & $48/64$ \\
Maximum epochs & $15$ \\
Optimizer & AdamW \\
Learning rate & $10^{-3}\rightarrow10^{-5}$, cosine \\
Weight decay & $10^{-5}$ \\
Gradient clipping & $1.0$ \\
EMA momentum $\tau$ & $0.99$ \\
Training horizons & $\{1,2,4,8\}$ \\
Independent runs & $3$ \\
\bottomrule
\end{tabular}
\end{table}

\Cref{tab:app-common-training} summarizes the architecture and optimization settings shared by the learned experiments.

\subsubsection{Representation regularization}
\label{app:regularization}

The controlled experiments instantiate the representation regularizer $\mathcal L_{\mathrm{rep}}$ introduced in \cref{eq:representation-regularization} separately at the scene-, motion-group-, and Gaussian-token levels.

Let $V^\ell\in\R^{B_\ell\times d}$ collect the flattened embeddings at hierarchy level $\ell\in\{S,O,G\}$. The variance component is
\begin{equation*}
\mathcal R_{\mathrm{var}}(V^\ell)
=
\frac{1}{d}
\sum_{q=1}^{d}
\left[
\max\!\left(
0,
0.5-
\sqrt{
\operatorname{Var}(V^\ell_{:,q})+10^{-4}
}
\right)
\right]^2,
\end{equation*}
and, writing $C^\ell$ for the empirical feature covariance,
\begin{equation*}
\mathcal R_{\mathrm{cov}}(V^\ell)
=
\frac{1}{d}
\sum_{p\neq q}
(C^\ell_{pq})^2.
\end{equation*}
The implementation uses population-form variance in $\mathcal R_{\mathrm{var}}$ and the usual sample-covariance normalization for $C^\ell$. The hierarchy-level regularizer is
\begin{equation*}
\mathcal R_{\mathrm{rep}}(Z)
=
\frac{1}{3}
\sum_{\ell\in\{S,O,G\}}
\left[
\mathcal R_{\mathrm{var}}(V^\ell)
+
0.05\,\mathcal R_{\mathrm{cov}}(V^\ell)
\right].
\end{equation*}
At each training step we apply it to both the current student representation and the direct predicted representation,
\begin{equation*}
\mathcal L_{\mathrm{rep}}
=
\frac{1}{2}
\left[
\mathcal R_{\mathrm{rep}}(Z_t)
+
\mathcal R_{\mathrm{rep}}(\widehat Z_{t+\Delta}^{\mathrm{dir}})
\right].
\end{equation*}
Its outer objective weight is $0.02$ in both Experiments~1 and~2.

\subsubsection{Checkpoint selection}

Every learned run is trained for at most $15$ epochs. Checkpoints are selected exclusively using target-relative validation accuracy averaged over the supervised horizons:
\begin{equation*}
S_{\mathrm{val}}
=
\frac{1}{2}
\left(
\mathrm{FDE}_{\mathrm{dir,val}}
+
\mathrm{FDE}_{\mathrm{roll,val}}
\right).
\end{equation*}
The direct and rollout terms are averaged over the validation samples and supervised horizons $\{1,2,4,8\}$. Validation rollout FDE uses the fixed primary routes
\[
(1),\qquad
(1,1),\qquad
(2,2),\qquad
(4,4)
\]
for horizons $\Delta=1,2,4,8$, respectively. No latent- or geometry-composition metric enters checkpoint selection. Models trained with composition losses therefore receive no model-selection advantage on the consistency quantities they explicitly optimize.


\subsection{Experiment 1: Temporal Composition}
\label{app:exp1-details}

Experiment~1 compares the four variants in \cref{tab:small-temporal-results}. All variants share the same data, architecture, EMA teacher, selective decoder, optimization budget, and representation regularization.

As described in the main text, geometry supervision in Experiment~1 is applied only to the direct decoded route. Let $\mathcal L_{\mathrm{geo}}^{\mathrm{dir}}$ denote this direct-route restriction of the target-anchored geometry loss in \cref{eq:geometry-loss}. The implemented mean-reduced objective is
\begin{equation*}
\mathcal L_{\mathrm{Exp1}}
=
\mathcal L_{\mathrm{end}}
+
5\mathcal L_{\mathrm{geo}}^{\mathrm{dir}}
+
\mathbf 1_{\mathrm{path}}
\mathcal L_{\mathrm{path}}
+
0.5\,
\mathbf 1_{\mathrm{comp}}
\mathcal L_{\mathrm{comp}}
+
0.02\mathcal L_{\mathrm{rep}},
\end{equation*}
where $\mathbf 1_{\mathrm{path}},\mathbf 1_{\mathrm{comp}}\in\{0,1\}$ specify the corresponding ablation. Thus, the active path-loss coefficient is $1$, while the active temporal-composition coefficient is $0.5$. Geometry-level composition $\mathcal L_{\mathrm{gcomp}}$ in \cref{eq:geometry-composition-loss} is disabled throughout Experiment~1.

The center component of $\mathcal L_{\mathrm{geo}}^{\mathrm{dir}}$ applies the Huber penalty to the Euclidean $3$-D center displacement with threshold $0.1$. Covariance supervision applies SmoothL1 loss in log-Cholesky coordinates with relative weight $0.25$.

At each optimizer step, one terminal horizon is used, cycling uniformly through $\{1,2,4,8\}$ so that all variants receive the same horizon schedule. When multiple training partitions are available for that horizon, one is sampled from the corresponding partition set. The terminal horizons $\{3,6\}$ are never used for training and are evaluated only as unseen-horizon interpolation tests.

\paragraph{Temporal partitions.}

The training decompositions are
\begin{equation*}
\begin{aligned}
\mathcal D_{\mathrm{train}}^{(1)}
&=
\{(1)\},
\\
\mathcal D_{\mathrm{train}}^{(2)}
&=
\{(1,1)\},
\\
\mathcal D_{\mathrm{train}}^{(4)}
&=
\{(2,2),(1,1,2)\},
\\
\mathcal D_{\mathrm{train}}^{(8)}
&=
\{(4,4),(2,2,4)\}.
\end{aligned}
\end{equation*}

The primary evaluation route for each terminal horizon is
\begin{equation*}
\begin{aligned}
\pi_{\mathrm{pri}}^{(1)}&=(1),
&
\pi_{\mathrm{pri}}^{(2)}&=(1,1),
&
\pi_{\mathrm{pri}}^{(3)}&=(1,2),
\\
\pi_{\mathrm{pri}}^{(4)}&=(2,2),
&
\pi_{\mathrm{pri}}^{(6)}&=(2,4),
&
\pi_{\mathrm{pri}}^{(8)}&=(4,4).
\end{aligned}
\end{equation*}
Thus, terminal horizons $3$ and $6$ are not supervised during training, although the individual segment lengths used in their primary rollouts are observed during training.

To test generalization to unseen decompositions while keeping all segment lengths familiar, we additionally use
\begin{equation*}
\begin{aligned}
\mathcal D_{\mathrm{held}}^{(4)}
&=
\{(2,1,1),(1,2,1)\},
\\
\mathcal D_{\mathrm{held}}^{(8)}
&=
\{(4,2,2),(2,4,2),(2,2,2,2)\}.
\end{aligned}
\end{equation*}
A harder evaluation introduces segment lengths $3$ and $5$, neither of which is used as a transition horizon during training:
\begin{equation*}
\begin{aligned}
\mathcal D_{\mathrm{unseen}}^{(4)}
&=
\{(1,3),(3,1)\},
\\
\mathcal D_{\mathrm{unseen}}^{(8)}
&=
\{(3,5),(5,3)\}.
\end{aligned}
\end{equation*}

\begin{table}[t]
\caption{Generalization to temporal decompositions not used during Experiment~1 training, averaged over 3 random seeds. Lower is better.}
\label{tab:app-exp1-partition-results}
\centering
\tiny
\setlength{\tabcolsep}{3pt}
\begin{tabular}{lcccc}
\toprule
Method
& \multicolumn{2}{c}{Held-out, seen segments}
& \multicolumn{2}{c}{Unseen segments} \\
\cmidrule(lr){2-3}
\cmidrule(lr){4-5}
& CompErr $\downarrow$
& GeoCompErr $\downarrow$
& CompErr $\downarrow$
& GeoCompErr $\downarrow$ \\
\midrule
Endpoint
& $0.01134{\pm}0.00244$
& $0.01428{\pm}0.00160$
& $0.00235{\pm}0.00047$
& $0.00901{\pm}0.00120$
\\
Endpoint + Path
& $0.00286{\pm}0.00013$
& $0.01105{\pm}0.00146$
& $0.00064{\pm}0.00006$
& $\mathbf{0.00778{\pm}0.00119}$
\\
Endpoint + Composition
& $0.00454{\pm}0.00049$
& $0.01204{\pm}0.00125$
& $0.00096{\pm}0.00011$
& $0.00803{\pm}0.00110$
\\
Endpoint + Path + Composition
& $\mathbf{0.00260{\pm}0.00009}$
& $\mathbf{0.01096{\pm}0.00157}$
& $\mathbf{0.00057{\pm}0.00002}$
& $0.00782{\pm}0.00141$
\\
\bottomrule
\end{tabular}
\end{table}

As shown in \cref{tab:app-exp1-partition-results}, the full path-and-composition model gives the lowest latent path discrepancy in both held-out regimes. CompErr decreases from $0.01134$ to $0.00260$ for new decompositions made entirely from familiar segment lengths and from $0.00235$ to $0.00057$ when unseen segment lengths are introduced. This complements the target-relative and composition results reported in \cref{sec:experiment-temporal-composition}.


\subsection{Experiment 2: Geometry-Level Composition}
\label{app:exp2-details}

Experiment~2 uses the same procedural generator and model architecture as Experiment~1, but each variant is trained independently from initialization. All three variants use the same Endpoint + Composition latent objective, data, optimizer, horizon schedule, and temporal-partition schedule. Path supervision $\mathcal L_{\mathrm{path}}$ is omitted so that the additional effect of geometry-level composition can be isolated.

For each sampled temporal partition, both the direct and composed decoded states are independently supervised against the same target geometry at every cumulative endpoint. In the implementation, the two route losses are averaged at each endpoint and then averaged over the path. Denoting this mean-reduced controlled version of \cref{eq:geometry-loss} by $\mathcal L_{\mathrm{geo}}^{\mathrm{ctrl}}$, the common objective is
\begin{equation*}
\mathcal L_{\mathrm{base}}^{\mathrm{Exp2}}
=
\mathcal L_{\mathrm{end}}
+
0.5\mathcal L_{\mathrm{comp}}
+
5\mathcal L_{\mathrm{geo}}^{\mathrm{ctrl}}
+
0.02\mathcal L_{\mathrm{rep}}.
\end{equation*}

Let $\mathcal L_{\mathrm{gcomp}}^{\SE(3)}$ and $\mathcal L_{\mathrm{gcomp}}^{\mathrm{match}}$ denote the controlled motion-group and Gaussian-level direct--composed terms. The three variants use
\begin{align*}
\mathcal L_{\mathrm{latent}}
&=
\mathcal L_{\mathrm{base}}^{\mathrm{Exp2}},
\\
\mathcal L_{\SE(3)}
&=
\mathcal L_{\mathrm{base}}^{\mathrm{Exp2}}
+
0.4\mathcal L_{\mathrm{gcomp}}^{\SE(3)},
\\
\mathcal L_{\mathrm{full}}
&=
\mathcal L_{\mathrm{base}}^{\mathrm{Exp2}}
+
0.4\mathcal L_{\mathrm{gcomp}}^{\SE(3)}
+
0.5\mathcal L_{\mathrm{gcomp}}^{\mathrm{match}}.
\end{align*}
The geometry-composition terms are evaluated only at the terminal endpoint of the sampled partition.

The general Method defines a Lie-group transformation discrepancy in \cref{eq:se3-distance}. Experiment~2 instead uses the controlled rotation--translation surrogate
\begin{equation*}
d_{\SE(3)}^{\mathrm{ctrl}}(T_1,T_2)
=
\sqrt{
\theta(R_1^\top R_2)^2
+
\|d_2-d_1\|_2^2
+
10^{-12}
},
\qquad
T_k=(R_k,d_k),
\end{equation*}
where $\theta(R)$ is the geodesic rotation angle on $\SO(3)$. The implemented group-level composition term is the mean of $d_{\SE(3)}^{\mathrm{ctrl}}$ over motion groups and samples. The same distance is used for the reported target-relative and direct--composed transformation errors $E_T^{\mathrm{target}}$ and $E_T^{\mathrm{dc}}$ in \cref{eq:transform-target-error,eq:transform-composition-error}.

For Gaussian-level geometry supervision and composition during training, center error uses the Euclidean-displacement Huber penalty with threshold $0.1$, while covariance discrepancy uses SmoothL1 distance in log-Cholesky coordinates with relative weight $0.25$. Evaluation instead uses the affine-invariant SPD covariance distance. In particular, the direct target-relative covariance error reported in \cref{tab:small-geometry-composition} is
\begin{equation*}
E_{\Sigma}^{\mathrm{target}}
=
\frac{1}{|\Iset|}
\sum_{i\in\Iset}
\dspd\!\left(
\widehat\Sigma_{i}^{\mathrm{dir}},
\Sigma_i^{+}
\right).
\end{equation*}

The reported direct--composed $D_{\mathrm{match}}$ uses the same Huber center penalty together with the affine-invariant SPD covariance distance, again with covariance weight $0.25$. Because persistent Gaussian identities are fixed, these comparisons are slot-aligned and require no OT. Existence is also fixed to one, so the existence component of the general $D_{\mathrm{match}}$ in \cref{eq:direct-composed-geometry-distance} is identically zero in this experiment.

For numerical stability during recursive Experiment~2 rollouts, decoded covariance matrices are explicitly symmetrized and regularized by adding $10^{-6}I$ before reuse as the source covariance for the next segment.

Because the synthetic generator exposes the absolute motion-group poses at each time, the ground-truth relative transformation can be constructed exactly. This permits both target-relative structured-motion evaluation through $E_T^{\mathrm{target}}$ and direct--composed evaluation through $E_T^{\mathrm{dc}}$.

The non-trivial direct--composed geometry evaluation uses horizons $\Delta\in\{2,4,8\}$ with primary composed paths $(1,1)$, $(2,2)$, and $(4,4)$, respectively. The one-segment horizon $\Delta=1$ is excluded from direct--composed metrics because its direct and composed computations coincide.

\begin{table}[t]
\caption{Horizon-wise direct--composed geometry consistency in Experiment~2, averaged over 3 random seeds. Lower is better.}
\label{tab:app-exp2-horizon-consistency}
\centering
\tiny
\setlength{\tabcolsep}{2.5pt}
\begin{tabular}{lcccccc}
\toprule
& \multicolumn{3}{c}{$E_T^{\mathrm{dc}}$ $\downarrow$}
& \multicolumn{3}{c}{$D_{\mathrm{match}}$ $\downarrow$} \\
\cmidrule(lr){2-4}
\cmidrule(lr){5-7}
$\Delta$
& Latent
& +$\SE(3)$
& Full
& Latent
& +$\SE(3)$
& Full \\
\midrule
$2$
& $0.001685{\pm}0.000121$
& $\mathbf{0.000466{\pm}0.000089}$
& $0.000528{\pm}0.000163$
& $0.000429{\pm}0.000056$
& $0.000360{\pm}0.000051$
& $\mathbf{0.000350{\pm}0.000056}$
\\
$4$
& $0.004297{\pm}0.000723$
& $\mathbf{0.001018{\pm}0.000259}$
& $0.001102{\pm}0.000355$
& $0.001048{\pm}0.000167$
& $0.000664{\pm}0.000153$
& $\mathbf{0.000626{\pm}0.000082}$
\\
$8$
& $0.007238{\pm}0.000573$
& $\mathbf{0.002015{\pm}0.000311}$
& $0.002345{\pm}0.000619$
& $0.001923{\pm}0.000152$
& $0.001378{\pm}0.000362$
& $\mathbf{0.001298{\pm}0.000124}$
\\
\bottomrule
\end{tabular}
\end{table}

The ordering in \cref{tab:app-exp2-horizon-consistency} is stable across $\Delta\in\{2,4,8\}$. Relative to latent composition alone, explicit $\SE(3)$ composition reduces $E_T^{\mathrm{dc}}$ by approximately $72.4\%$, $76.3\%$, and $72.2\%$, respectively. The full objective reduces $D_{\mathrm{match}}$ by approximately $18.5\%$, $40.2\%$, and $32.5\%$ at the same horizons. Thus, the motion-group term gives the strongest transform-level consistency, whereas the full objective gives the strongest primitive-level Gaussian consistency.

As a non-degeneracy check at $\Delta=8$, static persistence has FDE $0.32158$, causal constant velocity has FDE $0.07120$, and the identity group transformation has $E_T^{\mathrm{target}}=0.34170$. The three learned variants obtain FDE between $0.06614$ and $0.06728$ and target-relative structured-transform error between $0.25554$ and $0.26124$. Their low direct--composed discrepancies therefore cannot be explained by predicting negligible motion.


\subsection{Experiment 3: Correspondence Stress Test}
\label{app:exp3-details}

Experiment~3 evaluates correspondence independently of the learned transition $\Phi_\phi$ and contains no learned parameters or optimization. Each source scene contains $24$ active Gaussian primitives and $6$ reserved inactive canonical slots with four motion-group labels. Each primitive additionally carries a $6$-dimensional normalized semantic descriptor. We use three independent Monte Carlo repetitions with $50$ trials per perturbation type and severity.

For each trial, small target-side center and log-scale perturbations are first applied to the active source primitives. A topology operation then applies permutation, split, merge, death, reserved-slot activation, or a mixture of these events. Split descendants retain the ground-truth lineage of their parent but lose a valid one-to-one persistent ID; merged primitives retain the union of their parent lineages and likewise receive no one-to-one persistent ID. Death removes primitives and birth activates reserved inactive canonical slots. After the topology operation, independent semantic noise is added and each semantic descriptor is renormalized. Permutation and mixed conditions finally randomize the target array ordering.

At perturbation severity $r$, the nominal number of affected active primitives is
\begin{equation*}
n_r
=
\max\!\left(
1,
\operatorname{round}(rN_{\mathrm{active}})
\right).
\end{equation*}

\begin{table}[t]
\caption{Experiment~3 configuration.}
\label{tab:app-exp3-settings}
\centering
\small
\setlength{\tabcolsep}{4pt}
\begin{tabular}{lc}
\toprule
Setting & Value \\
\midrule
Active / reserved primitives & $24/6$ \\
Motion-group labels & $4$ \\
Semantic dimension & $6$ \\
Perturbation severities & $\{0.10,0.25,0.40\}$ \\
Position-noise std. & $0.025$ \\
Log-scale-noise std. & $0.020$ \\
Semantic-noise std. & $0.050$ \\
Sinkhorn entropy $\varepsilon_{\mathrm{OT}}$ & $0.08$ \\
Maximum Sinkhorn iterations & $200$ \\
Sinkhorn tolerance & $10^{-7}$ \\
Real--dustbin cost & $1.2$ \\
Dustbin--dustbin cost & $0$ \\
Independent Monte Carlo repetitions & $3$ \\
Trials per condition and repetition & $50$ \\
ID-dropout rates & $\{0,0.10,0.25,0.50\}$ \\
Bootstrap samples & $2000$ \\
\bottomrule
\end{tabular}
\end{table}

\subsubsection{Count-aware residual matching}
\label{app:matching}

The Method defines the generic residual OT cost $C_{ij}$ in \cref{eq:ot-cost}. In the controlled correspondence stress test, the covariance component is instantiated through log-scale coordinates, giving
\begin{equation*}
C_{ij}
=
2\|\mu_i-\mu_j^{+}\|_2^2
+
0.25\|s_i-s_j^{+}\|_2^2
+
0.8\cosdist(f_i,f_j^{+})
+
0.8\,\mathbf 1[m(i)\neq m(j)],
\end{equation*}
where $s_i$ and $s_j^{+}$ are Gaussian log-scale vectors. Because motion-group labels are known exactly in this controlled stress test, the final term is the hard-label specialization of the generic group-compatibility term in \cref{eq:ot-cost}, with
\[
\kappa^O_{mn}
=
\mathbf 1[m=n].
\]
The semantic descriptors are independently perturbed and renormalized, so semantic features cannot serve as perfectly preserved identity labels.

OT only and Hybrid use the same unit-mass convention. After any hard persistent matches have been removed, let $n_s$ and $n_t$ denote the numbers of residual source and target primitives. The dustbin-augmented residual marginals are
\begin{equation*}
\mathbf p_{\mathrm{res}}^{\mathrm{aug}}
=
\left(
\underbrace{1,\ldots,1}_{n_s},
n_t
\right),
\qquad
\mathbf q_{\mathrm{res}}^{\mathrm{aug}}
=
\left(
\underbrace{1,\ldots,1}_{n_t},
n_s
\right),
\end{equation*}
so that
\begin{equation*}
\left\|
\mathbf p_{\mathrm{res}}^{\mathrm{aug}}
\right\|_1
=
\left\|
\mathbf q_{\mathrm{res}}^{\mathrm{aug}}
\right\|_1
=
n_s+n_t.
\end{equation*}
The final source row and target column are dustbins. Their capacities equal the cardinality of the opposite residual set, allowing arbitrary source--target cardinality mismatch without imposing a fixed unmatched-mass fraction.

For OT only, no persistent correspondence is removed first, so the complete source and target sets enter this augmented OT problem. For Hybrid, every valid unique persistent canonical correspondence is first assigned one unit of hard mass,
\begin{equation*}
\Pi_{ij}^{G,\mathrm{pers}}
=
1.
\end{equation*}
Only the remaining unmatched source and target primitives enter the same count-aware OT problem. Persistent and residual correspondence therefore operate on identical mass scales. Split and merge primitives, whose one-to-one persistent identities are invalidated by construction, are handled entirely through residual OT.

The augmented cost matrix assigns cost $1.2$ to every real-source--target-dustbin and source-dustbin--real-target entry, while the dustbin--dustbin corner has cost zero. The dustbin cost is fixed and is not learned. If one residual side is empty, each remaining real primitive on the opposite side is assigned one unit of mass directly to or from the corresponding dustbin.

The residual plan $\Pi^{G,\mathrm{res}}$ is obtained from the Sinkhorn problem in \cref{eq:residual-ot}, using entropy parameter $\varepsilon_{\mathrm{OT}}=0.08$, at most $200$ Sinkhorn iterations, and marginal tolerance $10^{-7}$.

\subsubsection{Metrics and uncertainty}

The primary lineage metric is LineageMass in \cref{eq:lineage-mass}. In the controlled implementation,
\[
\varepsilon_{\mathrm{match}}
=
10^{-12}.
\]
We additionally evaluate the geometry discrepancy induced by the real--real component of the correspondence plan:
\begin{equation*}
D_{\mathrm{corr}}
=
\frac{
\sum_{i,j}
\Pi^G_{ij}
\|\mu_i-\mu_j^{+}\|_2
}{
\sum_{i,j}\Pi^G_{ij}
+
\varepsilon_{\mathrm{match}}
},
\end{equation*}
where dustbin assignments are excluded. Lower $D_{\mathrm{corr}}$ indicates that correspondence mass is concentrated on geometrically compatible source--target pairs.

Birth and death are scored separately using precision, recall, and F1. A source primitive is classified as dead when its target-dustbin mass is at least its total mass assigned to real targets. A target primitive is classified as newly born when the mass arriving from reserved source slots together with the source dustbin is at least the mass arriving from active source primitives. Event F1 is the NaN-aware mean of the applicable birth and death F1 scores; an event type absent from the ground truth is treated as not applicable rather than assigned zero.

The main results in \cref{tab:small-identity-results} report mean $\pm$ standard deviation across the three Monte Carlo repetition-level aggregates. We additionally compute $95\%$ bootstrap confidence intervals using $2000$ resamples over repetition--trial clusters. A cluster corresponds to one Monte Carlo repetition and trial index, preserving the common replicate structure while aggregating the relevant perturbation conditions. The $95\%$ bootstrap intervals for split/merge LineageMass are $[0.97916,0.98281]$ for OT only and $[0.98540,0.98889]$ for Hybrid. The corresponding intervals for aggregate geometry discrepancy are $[0.04977,0.05043]$ and $[0.04871,0.04940]$, respectively. These separated intervals are consistent with the small systematic Hybrid advantage in this controlled benchmark; they are not presented as a separate formal hypothesis test.

\subsubsection{Mixed perturbations}

\begin{table}[t]
\caption{Mixed permutation, split, merge, birth, and death perturbations as severity increases, averaged over 3 independent Monte Carlo repetitions.}
\label{tab:app-exp3-mixed}
\centering
\tiny
\setlength{\tabcolsep}{3pt}
\begin{tabular}{c l cc}
\toprule
Severity
& Method
& LineageMass $\uparrow$
& Geometry discrepancy $\downarrow$ \\
\midrule
\multirow{2}{*}{$0.10$}
& OT only
& $0.95694{\pm}0.00752$
& $0.05693{\pm}0.00157$
\\
& Hybrid
& $\mathbf{0.96735{\pm}0.00620}$
& $\mathbf{0.05541{\pm}0.00108}$
\\
\midrule
\multirow{2}{*}{$0.25$}
& OT only
& $0.95148{\pm}0.00091$
& $0.06502{\pm}0.00171$
\\
& Hybrid
& $\mathbf{0.95988{\pm}0.00082}$
& $\mathbf{0.06349{\pm}0.00147}$
\\
\midrule
\multirow{2}{*}{$0.40$}
& OT only
& $0.91962{\pm}0.00344$
& $0.07718{\pm}0.00183$
\\
& Hybrid
& $\mathbf{0.92605{\pm}0.00137}$
& $\mathbf{0.07603{\pm}0.00158}$
\\
\bottomrule
\end{tabular}
\end{table}

\Cref{tab:app-exp3-mixed} shows the same qualitative behavior at every evaluated severity: Hybrid retains more correspondence mass on valid lineage and produces lower induced geometry discrepancy than OT only. The margin decreases as severity increases because progressively more persistent canonical identities become invalid and a larger fraction of the correspondence problem is delegated to residual OT.

Birth/death identification itself is not the source of the Hybrid advantage. For isolated birth and death perturbations, both OT only and Hybrid attain F1 $=1$. Under mixed perturbations, averaged across severities, OT only obtains macro event F1 $0.8984$ and Hybrid obtains $0.8884$. The Hybrid advantage is therefore concentrated in lineage preservation and induced geometry correspondence rather than in uniformly improving every auxiliary event metric.

\subsubsection{Persistent-ID dropout}
\label{app:exp3-id-dropout}

To test whether Hybrid depends critically on canonical identity, we hide a fraction of otherwise valid persistent target IDs under mixed perturbations at severity $0.25$. Gaussian geometry and ground-truth lineage remain unchanged; only the identity information available to Hybrid is removed. OT only is therefore unchanged across dropout rates.

\begin{table}[t]
\caption{Persistent-ID dropout under mixed perturbations at severity $0.25$.}
\label{tab:app-exp3-id-dropout}
\centering
\small
\setlength{\tabcolsep}{4pt}
\begin{tabular}{ccccc}
\toprule
Hidden IDs
& OT lineage
& Hybrid lineage
& OT geometry
& Hybrid geometry \\
\midrule
$0$
& $0.95148$
& $\mathbf{0.95988}$
& $0.06502$
& $\mathbf{0.06349}$
\\
$0.10$
& $0.95148$
& $\mathbf{0.95916}$
& $0.06502$
& $\mathbf{0.06361}$
\\
$0.25$
& $0.95148$
& $\mathbf{0.95838}$
& $0.06502$
& $\mathbf{0.06376}$
\\
$0.50$
& $0.95148$
& $\mathbf{0.95649}$
& $0.06502$
& $\mathbf{0.06409}$
\\
\bottomrule
\end{tabular}
\end{table}

As shown in \cref{tab:app-exp3-id-dropout}, progressively hiding persistent canonical identity causes Hybrid to move smoothly toward the identity-agnostic OT-only solution rather than failing abruptly. This is the intended fallback behavior of the hybrid correspondence principle in \cref{eq:hybrid-identity-principle}.


\section{Additional Discussion and Limitations}
\label{app:additional-discussion}

\paragraph{Scope of the current evidence.}

The experiments in this paper are controlled mechanism-level tests rather than a comprehensive evaluation of dynamic-scene world modelling. Experiments~1 and~2 operate directly on procedurally generated Gaussian states, providing exact future geometry, motion-group transformations, and temporal decompositions while bypassing the difficulty of constructing Gaussian scenes from real observations. Experiment~3 similarly isolates correspondence using synthetic identity and topology perturbations with known lineage. The reported results therefore support the proposed temporal-composition, geometry-composition, and correspondence mechanisms, but do not establish large-scale superiority over reconstruction-based Gaussian world models or video-space predictive models.

\paragraph{Why latent prediction rather than full Gaussian generation?}

A future scene contains both predictable structure and details that may be ambiguous, unobserved, or irrelevant to future dynamics. A reconstruction objective can allocate substantial capacity to reproducing precise appearance or representation-specific details even when they are not necessary for prediction. \method{} instead predicts future representations and uses the selective geometry decoder $Q_\psi^\G$ in \cref{sec:geometry-decoder} to retain explicit geometric grounding.

This design does not imply that complete future Gaussian generation is undesirable. Rather, latent prediction and complete generation answer different questions. A generative Gaussian world model aims to produce a complete future scene, whereas \method{} learns a predictive state optimized to retain information relevant to future evolution.

\paragraph{What temporal composition does and does not guarantee.}

The temporal-composition loss $\mathcal L_{\mathrm{comp}}$ in \cref{eq:composition-loss} constrains alternative temporal routes to produce compatible latent states, while the geometry-composition loss $\mathcal L_{\mathrm{gcomp}}$ in \cref{eq:geometry-composition-loss} applies the analogous principle to structured decoded motion.

Neither objective establishes future correctness by itself. Two prediction routes may agree while both are incorrect. Target-relative supervision through $\mathcal L_{\mathrm{end}}$, $\mathcal L_{\mathrm{path}}$, and $\mathcal L_{\mathrm{geo}}$ in \cref{eq:endpoint-loss,eq:path-loss,eq:geometry-loss} remains necessary. Experiments~1 and~2 therefore evaluate composition jointly with target-relative prediction accuracy.

Temporal composition also does not imply an exact semigroup law for arbitrary dynamic scenes. Actions, absolute time, contacts, topology changes, and other non-autonomous effects may alter the relevant transition. The appropriate requirement is consistency between admissible chronological decompositions of the same evolution, as formalized by the controlled composition law in \cref{eq:controlled-composition}.

\paragraph{Dependence on the Gaussian front-end.}

The general formulation assumes that a sufficiently stable Gaussian representation can be constructed causally from observations through the online builder $B_{\mathrm{on}}$ in \cref{eq:causal-context-builder}. Errors in calibration, depth estimation, reconstruction, segmentation, motion grouping, tracking, or Gaussian initialization may therefore propagate into the predictive representation.

The present controlled experiments intentionally remove this source of error by supplying Gaussian states directly. Consequently, they test the predictive model independently of reconstruction quality but do not establish robustness to errors produced by a real online Gaussian front-end. Evaluation on causally reconstructed real dynamic scenes is required to characterize this interaction.

\paragraph{Deterministic future prediction.}

The transition $\Phi_\phi$ in \cref{eq:transition} produces a single predictive future for a fixed context, horizon, and action sequence. This is appropriate for the approximately deterministic controlled trajectories evaluated here, but many real environments admit multiple plausible futures because of unobserved forces, agent decisions, or stochastic interactions.

In such settings, a deterministic transition may average incompatible outcomes or select one trajectory without representing its uncertainty. Extending the Gaussian-world transition to probabilistic or multimodal latent prediction is therefore a natural extension of the present formulation.

\paragraph{Gaussian identity and topology change.}

Persistent canonical identity provides reliable correspondence when the same canonical slot can be tracked through time. Experiment~3 shows, however, that raw array position is not a valid substitute for persistent canonical identity and that split, merge, birth, and death events require a more flexible correspondence mechanism.

The residual OT formulation in \cref{eq:residual-ot} provides this fallback, but correspondence can remain intrinsically ambiguous when several primitives have nearly identical geometry, motion-group membership, and semantic features. Furthermore, the prefix-anchored canonical representation described in \cref{sec:identity} contains a finite pool of reserved inactive slots. Sufficiently large amounts of newly represented geometry could exhaust this capacity and require expansion or reconstruction of the canonical bank.

The present experiments also do not evaluate residual OT inside end-to-end predictive training: Experiments~1 and~2 use fixed persistent correspondence, whereas Experiment~3 studies correspondence independently of model optimization. The interaction between learned predictive states and residual correspondence under real topology-changing dynamics therefore remains to be evaluated.

\paragraph{Non-uniqueness of structured motion decomposition.}

The selective decoder represents future geometry through a combination of motion-group $\SE(3)$ transformations and Gaussian-specific residual deformation. This decomposition is generally not identifiable from endpoint geometry alone: different combinations of group motion and local residual deformation can produce similar final Gaussian states.

Experiment~2 demonstrates that the motion-group component of $\mathcal L_{\mathrm{gcomp}}$ substantially reduces path dependence in the predicted transformations, while the Gaussian-level component improves primitive-level consistency. These constraints encourage a common structured decomposition across temporal paths, but they do not imply that the resulting decomposition is uniquely or physically correct.

\paragraph{Structural priors.}

The rigidity, local smoothness, and static-region terms in \cref{eq:rigid-loss,eq:smooth-loss,eq:static-loss} encode useful geometric biases but are not universal physical laws. Strong articulation, tearing, fluids, granular motion, abrupt contacts, or major topology changes may violate them. Such terms should therefore be interpreted as optional soft priors derived from causal evidence rather than hard physical constraints.

The present three experiments do not separately evaluate these structural regularizers. Their empirical contribution should therefore not be inferred from the current results.

\paragraph{Appearance and rendering.}

The present experiments deliberately focus on Gaussian geometry rather than appearance reconstruction. Experiments~1 and~2 do not contain camera projection, rasterization, RGB supervision, PSNR, or perceptual image metrics, as described in \cref{app:training-details}. Consequently, they do not establish how much future view-dependent appearance is retained by the learned representation or how accurately predicted Gaussian states can render future observations.

This omission is intentional in the current mechanism-level evaluation: it isolates predictive geometry and temporal composition from rendering quality. A broader evaluation can combine the predictive state with the Gaussian rendering interface reviewed in \cref{app:gaussian-splatting}.

\paragraph{Computational scaling.}

The controlled experiments use small Gaussian sets and known motion groups. Real Gaussian reconstructions may contain orders of magnitude more primitives. Although hierarchical grouping and persistent identity reduce the need for unrestricted primitive-wise comparison, attention over large Gaussian sets and residual OT matching can still become computationally expensive.

The present experiments do not contain a scaling study, so no empirical claim about large-scene computational efficiency is made. Practical large-scale implementations may require Gaussian-token subsampling, local or sparse attention, spatially restricted residual matching, or multiresolution grouping.

\paragraph{Action-conditioned dynamics and planning.}

The formulation in \cref{eq:transition,eq:controlled-composition} permits future actions to condition the transition, but the three experiments reported here are passive forecasting and correspondence experiments. We therefore make no empirical claim about action-conditioned prediction, control, or planning in the present evaluation.

Testing whether temporal composition improves long-horizon action-conditioned rollout and model-based planning is an important extension, but it is separate from the mechanism-level evidence established in this paper.

\end{document}